%% file: paper.tex
\documentclass[11pt]{article}

\usepackage[margin=1in]{geometry}
\usepackage{amsthm,amsmath,amsfonts,amssymb}
\usepackage[numbers,sort&compress]{natbib}
\usepackage{graphicx}
\usepackage{booktabs}
\usepackage[colorlinks,citecolor=blue,urlcolor=blue]{hyperref}
\usepackage{cleveref}
\usepackage{mathtools}

\theoremstyle{plain}

\newtheorem{theorem}{Theorem}
\newtheorem{lemma}[theorem]{Lemma}
\newtheorem{proposition}[theorem]{Proposition}

\theoremstyle{definition}

\newtheorem*{example*}{Example}
\newtheorem{remark}{Remark}

\newcommand{\Mcal}{\mathcal{M}}
\newcommand{\Mcalnorm}{\Mcal_{\mathrm{norm}}}
\newcommand{\Ebb}{\mathbb{E}}
\newcommand{\R}{\mathbb{R}}

\newcommand{\Lcal}{\mathcal{L}}

\newcommand{\Z}{\mathbb{Z}}

\title{Attention is Just Another Name for Coupling?\\
        A Fast--Slow ODE Perspective on Hierarchical Pretraining}

\author{Zhengyuan Gao \\ \small Independent scholar \\ \small \texttt{zhengyuan.gao@outlook.com}}

\date{}

\begin{document}
\maketitle

\begin{abstract}
We re-interpret Transformer pretraining as a fast-slow, singularly
perturbed flow along depth, with untied weights as its non-autonomous      
feature. The linearised dynamics is a depth-ordered product of layer maps. Along a
token-homogeneous reference trajectory, the
linearised layer factorises along the eigenbasis of a frozen attention
kernel. Past a computable saturation depth, the flow factors through 
the block coarse-graining —-- in other words, 
running the layers is running the coarse variable, dually. 
Weight perturbations supported on the
decaying bundle move neither the persistent component of the
distinguished trajectory nor the frozen kernel to first order, so the
framework partitions parameter space into visible and invisible directions, 
with the cross-block coupling of the slow path sitting entirely on the visible side. 
How large a gate the slow path can carry is bounded by a stability margin. 
On the data side: if block emissions follow an exponential family, block-mean    
pooling captures all the information the slow path can use; but if neighbouring 
blocks carry no shared structure, no cross-block channel can help the prediction, 
and the gate amplitude is invisible in the prediction risk.        
Stability delimits what the architecture may do; the data decides what it will.   
\end{abstract}

\input{Sec-main-body}

\appendix
\input{Sec-appendix}

\bibliographystyle{plainnat}

\input{bib}
\end{document}

%% file: Sec-main-body.tex
\section{Overview}\label{sec:intro:overview}

This paper treats standard (and hierarchical) Transformer pretraining
as a single dynamical object: a \emph{fast--slow, singularly perturbed
flow}. The treatment is a \emph{re-interpretation} of contemporary
large language model (LLM) training.
Once the usual moving parts are written in one common ODE language, a
piece of structure that is normally invisible becomes explicit, and it
can be interrogated with the standard tools of dynamical systems and of
statistical identification.

Two operators do the work, and we call both of them \emph{couplings}.
By a \emph{coupling} we mean any operator through which one set of
degrees of freedom is allowed to act on another. The \emph{first},
\emph{fast} coupling is self-attention: each layer replaces a token's
state by a learned average over earlier tokens, i.e.\ a per-token
Markov kernel that mixes information \emph{within} the sequence. The
\emph{second}, \emph{slow} coupling acts on a compressed, per-block
view of the same trajectory: a gate writes a function of that coarse
view back into the fast stream. Many familiar designs --- block-mean
pooling, strided and compressed-sparse attention, hierarchical
recurrence --- are \emph{the same construction with different choices
of the compression operator}.

In this perspective, the structure exposes the significance of the
\emph{slow path}. Almost all of the analytical attention in the
literature is spent on the fast coupling; the second coupling is
treated as an implementation detail. We argue it is not. The two
couplings are \emph{qualitatively different} objects: they run on
different clocks (per-token versus per-block), mix different
variables, and saturate at different depths. The fast coupling mixes
information within a bounded receptive field and exhausts that mixing
after finitely many layers; the slow coupling builds a coarser,
per-block summary and feeds it \emph{back} into the fine, per-token
stream --- the efficient route to the cross-block, long-range structure
the fast path cannot reach. One coupling acts \emph{within} a scale;
the other \emph{bridges} scales. The slow coupling is
load-bearing\footnote{We use \emph{load-bearing} in a specific sense
throughout: a finding, parameter, or term is load-bearing if the
framework's conclusions \emph{structurally depend} on it --- remove it
or misspecify it and the predicted dynamics changes --- as opposed to
one that merely contributes incrementally to the loss.} precisely in
the \emph{depth-saturation} regime, where the fast path's per-token
mixing has already washed out, \emph{and} only when the data itself
carries cross-block structure. The thesis is therefore deliberately
conditional, and the conditionality is not a hedge but the content: it
is what turns an architectural preference into an identifiable,
testable claim, and it is what a conditional reading of empirical phenomena,
such as the scaling laws~\citep{kaplan2020scaling,hoffmann2022chinchilla}, rests on
(see discussions in \S\ref{sec:discussion}).

This conditionality has a direct methodological consequence. 
Because the slow path's contribution depends on what training does to 
cross-block weights --- not just on what the architecture allows --- the relevant 
analysis must handle weights that vary with depth. 
A trained Transformer does not tie its weights across layers, so
the depth flow it realises is \emph{non-autonomous}: the generator
varies with depth. We will not assume the depth variation away. The
linearised depth dynamics is a \emph{family} of layer maps
$(I + A(\ell))$, and the object that plays the role of the matrix
power is their depth-ordered product,
\[
\Phi(\ell, s) \;=\; \prod_{r=s}^{\ell-1}\bigl(I + A(r)\bigr),
\]
a two-parameter family satisfying $\Phi(\ell,s) = \Phi(\ell,r)\,
\Phi(r,s)$ --- what the non-autonomous literature calls a \emph{linear
cocycle}~\citep{coppel1978dichotomies,sackersell1978spectral,kloeden2011nonautonomous},
and the object on which every subsequent notion is defined.\footnote{A
linear cocycle is a family $\Phi(\ell,s)$, $\ell \ge s$, of linear maps
with $\Phi(s,s) = I$ and $\Phi(\ell,s) = \Phi(\ell,r)\Phi(r,s)$: the
standard non-autonomous generalisation of a matrix power.} The
classical autonomous vocabulary is replaced, throughout, by its
non-autonomous counterpart: eigenvalues by the \emph{dichotomy
spectrum}, the stable/unstable splitting by an \emph{exponential
dichotomy} with depth-dependent bundles, the fixed point by a
\emph{distinguished bounded trajectory}, and eigenvalue perturbation
bounds by the \emph{roughness} of dichotomies. The autonomous
(weight-tied) case survives inside every statement of the paper as the
specialisation in which the cocycle is a power and the dichotomy
spectrum collapses to eigenvalue magnitudes.

The development is straightforward, in three movements matching the
three things we prove.

\emph{The structure} (\S\ref{sec:std}--\S\ref{sec:settings}): a
Transformer layer is a forward-Euler step of a depth flow, and along a
token-homogeneous reference trajectory the linearised layer has the
explicit shape
\[
J(\ell) \;=\; I_T \otimes \bigl(I + M(\ell)\bigr)
\;+\; \alpha^\star \otimes V(\ell),
\]
where $\alpha^\star \in \R^{T\times T}$ is the \emph{frozen} attention
kernel on the position axis --- a Markov matrix on the causal window,
the same at every layer (\Cref{thm:factor}) --- and $M(\ell), V(\ell)
\in \R^{d\times d}$ are the linearisations of the within-token Multilayer Perceptron (MLP)
path and of the value path on the feature axis. Because
$\alpha^\star$ is depth-independent, its eigenbasis block-diagonalises
$J(\ell)$ at every layer simultaneously: each position-space
eigenvalue $\lambda_j$ opens a $d\times d$ feature-space \emph{sector
cocycle} $\Phi_j$, on which the entire depth variation of the network
lives. The $\lambda = 1$ sector is the manifold of token-homogeneous
states, exactly invariant under the nonlinear layer map, and carries
the tangent cocycle $\Phi := \Phi_1$; the fluctuation sectors
($\lambda_j < 1$) are the fast, transverse directions. The fast--slow
split is in this sense not a modelling choice but a spectral fact of
the frozen kernel, and the second coupling closes a singularly
perturbed system in the standard Tikhonov form.

\emph{The identification} (\S\ref{sec:coupling}): the framework asks
which directions in parameter space are \emph{visible} to the
asymptotic observables of the trained network. We give the question a
precise non-autonomous form. Past depth saturation, the depth flow
factors through the block summary --- running the layers is the same
as running the coarse-grained variable (\Cref{thm:duality}) --- and on
that structured limit we prove a sharp partition of parameter space:
weight changes that touch only the decaying directions of the depth
dynamics move neither the persistent component of the distinguished
trajectory nor the frozen attention kernel to first order, while
weight changes along the persistent directions, the gate amplitude,
the injection, and the compression are all visible. The slow path's
contribution sits entirely on the visible side.

\emph{The distinguished trajectory} (\S\ref{sec:fixed}): the
linearised depth dynamics is non-autonomous, so it has no equilibrium
to anchor the analysis; the replacement is the \emph{distinguished
bounded trajectory}, the unique bounded solution of the forced linear
system under an exponential dichotomy, and it plays every role the
equilibrium would have played. We compute its gate-induced
displacement (the \emph{geometric channel}: the persistent component
of the displacement, which is what reaches the output), bound the
admissible gate through the roughness of dichotomies, and close with
the data-side counterpart: under a block-level latent model with one
Markov latent per block, block-mean compression is a sufficient
statistic, and in the degenerate case of independent blocks the gate
does not enter the Bayes risk at all. Stability delimits what the
architecture \emph{may} do; the data decides what it \emph{will}.

Throughout, the analysis is local --- a linearisation along a
reference trajectory --- and is offered as a \emph{structured limit}:
exact statements about an idealised object that the trained network
approximates, with the approximation quality itself part of the
analysis (\Cref{thm:euler}, \S\ref{sec:discussion}).

\medskip
\noindent\textbf{Notation.} Token states are rows of
$X \in \R^{T\times d}$; blocks have length $P$; the layer index is
$\ell$, the block index $k$. $\Phi(\ell,s)$ denotes the tangent
cocycle, $\Phi_j(\ell,s)$ the sector cocycles, and
$\Phi^{\mathrm{full}}$ the linearised flow on the full state
$\R^{T\times d}$. An exponential dichotomy carries projectors
$P_\pm(\ell)$, bundles $V_\parallel(\ell) = \operatorname{ran}
P_+(\ell)$ and $V_\perp(\ell) = \operatorname{ran} P_-(\ell)$. The timescale ratio is
$\varepsilon = 1/L_r$ with $L_r$ the refresh period of the held
summary $\hat y$. The distinguished bounded trajectory is
written $u^\star_\varrho$.

\section{The standard Transformer}\label{sec:std}

\textbf{Attention without layers.}
Self-attention~\citep{vaswani2017attention} updates each token's hidden
state by a learned mixture of preceding tokens. For an input sequence
$X= (x_1, \ldots, x_T) \in \R^{T \times d}$ with per-token states
$x_t \in \R^d$ ($d$ the embedding dimension), one causal attention layer
computes raw scores
\begin{equation}\label{eq:attn}
  u_{ts} = \frac{q_t^\top k_s}{\sqrt{d_k}}, \qquad
  q_t = W_Q x_t, \;\; k_s = W_K x_s, \;\; v_s = W_V x_s,
\end{equation}
where $W_Q, W_K \in \R^{d_k \times d}$ and $W_V \in \R^{d_v \times d}$
are the trained parameters; the inner product forces $d_q = d_k$, and
we adopt the single-head convention $d_k = d_v = d$ throughout, so the
temperature scaling $1/\sqrt{d_k}$ ($= 1/\sqrt d$) in \eqref{eq:attn}
is the usual one.\footnote{In the multi-head implementation,
$d_k = d_v = d/h$ per head and an output projection
$W_O \in \R^{d\times d}$ recombines the heads. Nothing in the analysis
distinguishes heads, so we work single-head and write $W_V$ for the
composite value--output map $W_O W_V$.} The hidden state is then
updated via the residual connection (the same forward update at
training and inference),
\begin{equation}\label{eq:residual}
  x_t' \;=\; x_t + \sum_{s \le t} \alpha_{ts}\, v_s
  \;=\; x_t + \Ebb_{s \sim \alpha_{t,\cdot}}[v_s],
\end{equation}
where $\alpha_{t,\cdot}$ is, by construction, a row-stochastic coupling
over the causal window $\{s : s \le t\}$.

\emph{In probabilistic language}, $\alpha_{ts}$ is the probability that
position $t$ ``attends to'' position $s$; $\alpha$ is the Markov kernel
that gives us the fast coupling. This ``expectation'' reading is a
notational device, not a claim that the network samples anything: given
fixed weights and a fixed input sequence, $x_t'$ is a deterministic
weighted sum over all $s \le t$. Each row sums to one; this
\emph{row-normalisation} is what turns the raw scores into a
row-stochastic coupling, and the canonical choice is the
softmax,\footnote{Softmax is the \emph{canonical} row-normalised
score-to-kernel map, not the only one --- any positive, row-normalised
map (e.g.\ a kernelised feature map followed by normalisation) also
yields a coupling. We use softmax throughout because it is the standard
choice in the GPT family~\citep{radford2019gpt2}.}
\begin{equation}\label{eq:softmax}
  \alpha_{t,\cdot} \;=\; \mathrm{softmax}(u_{t,\cdot}), \qquad
  \alpha_{ts} \;=\; \frac{e^{u_{ts}}}{\sum_{r\le t} e^{u_{tr}}}.
\end{equation}
Geometrically this is a projection onto the probability simplex
$\Delta^{t-1}$ over the causal window: softmax is the unique maximiser
of $\langle p, u_{t,\cdot}\rangle + H(p)$ over $\Delta^{t-1}$, with $H$
the Shannon entropy. Without the simplex constraint there is no Markov
kernel and no conditional-expectation reading of
\eqref{eq:residual}.\footnote{The mirror-image operation is
\emph{column}-normalisation; alternating row- and column-normalisation
yields the doubly stochastic kernels of optimal-transport-style
attention variants. These reappear below (\Cref{rem:sink}) as the
variant for which block-mean pooling is spectrally exact.} The
$1/\sqrt{d_k}$ scaling sets the effective \emph{temperature}: a high
temperature gives near-uniform rows (diffuse mixing), a low one peaked,
near-one-hot rows (localised mixing). This single knob controls how
fast attention mixes across positions, and it reappears in
\S\ref{sec:intro:ode} through the spectrum of the frozen attention
kernel --- the quantity governing
\emph{depth-saturation}.\footnote{Both the attention weights and the
output head are softmax choices over discrete alternatives --- the
multinomial-logit (Gibbs) form of discrete-choice
theory~\citep{mcfadden1974,train2003discrete}: the inner softmax
chooses which position to attend to, the outer which token to emit. We
use this reading only as descriptive vocabulary; no result below
depends on it.}

\begin{remark}[Softmax preserves continuity]
A point that matters for the Euler treatment below: softmax is
$C^\infty$, so it \emph{preserves} the continuity of the layerwise map.
It is the \emph{argmax} (hard selection) that would break
differentiability --- but argmax never appears \emph{inside} a layer; it
enters only at decode time, when a token is sampled from the output
distribution \emph{after} the depth flow has run.
\end{remark}

\subsection{Norm scaling: a recursive projection that bounds the sublayer inputs}\label{sec:intro:renorm}

A norm-scaling step rescales each token's hidden vector to a fixed
magnitude, removing its freedom to grow or shrink in overall scale
while leaving its direction free. We write $\mathcal S$ for this
operator throughout. The original LayerNorm~\citep{ba2016layernorm}
instance subtracts the per-token mean and divides by the per-token
standard deviation; RMSNorm~\citep{zhang2019rmsnorm}, the common choice
in current LLMs, keeps only the root-mean-square rescaling.

Norm scaling is a structural device in LLMs: the residual stream of
\eqref{eq:residual} \emph{adds} a contribution at every layer, so
without a magnitude constraint the scale of the sublayer inputs drifts
with depth and deep stacks become untrainable. Formally, $\mathcal S$
maps onto a compact submanifold $\Mcalnorm \subset \R^d$ (for RMSNorm,
the sphere $\|x\|_2 = \sqrt d$ up to the learned gains): it fixes a
magnitude and leaves the direction free. Applied at every layer it
provides a uniformly bounded argument to every sublayer's computation
--- scores, values, MLP inputs, output logits --- so that every quantity
the network computes \emph{from} the state is computed from
$\mathcal S(x) \in \Mcalnorm$. What the projection does \emph{not}
bound --- the raw residual stream --- is quantified in
\Cref{prop:bounded}.

Instances of $\mathcal S$ split into two classes, and the split matters
once we differentiate. A \emph{continuous} instance (RMSNorm,
LayerNorm, group and $L^p$ norms) keeps the normalised state bounded
\emph{and} smooth in $x$. A \emph{selector} (top-$k$, sparse and
compressed-sparse attention) keeps it bounded --- it returns one of
finitely many bounded sub-states --- but not continuous: a small change
in $x$ can flip which element is selected. The linearisation of
\S\ref{sec:intro:ode} requires a Fr\'echet derivative, so the main line
of this paper assumes a continuous, $C^1$ instance of $\mathcal S$
(together with a $C^1$ activation this is the standing smoothness
requirement, stated as (A3) in \S\ref{sec:intro:ode}); selectors are
covered by the boundedness statements only, and we flag the one place
(\Cref{rem:pi}) where the distinction matters for the slow path.

\subsection{Where the norm sits: Pre-LN versus Post-LN}\label{sec:intro:preln}

There are two places to apply $\mathcal S$ relative to the residual
addition, and the choice is not cosmetic for us. \emph{Post-LN}, the
original Transformer~\citep{vaswani2017attention}, scales \emph{after}
the residual add,
$x^{(\ell)} = \mathcal{S}(x^{(\ell-1)} + \mathrm{Sublayer}(x^{(\ell-1)}))$,
re-projecting the entire stream onto $\Mcalnorm$ at every layer.
\emph{Pre-LN}~\citep{xiong2020layernorm} scales the \emph{input} of
each sublayer,\footnote{Here $\mathrm{Sublayer}(\cdot)$ stands
generically for either of the two per-layer computations ---
attention-mixing or the MLP --- since the Pre-LN/Post-LN placement
question has the same answer for both; \eqref{eq:layer_evolution}
applies this pattern twice per layer, once per sublayer, via the
half-integer bookkeeping index $(\ell-1/2)$.}
\begin{equation*}
  x^{(\ell)} = x^{(\ell-1)} + \mathrm{Sublayer}\!\bigl(\mathcal{S}(x^{(\ell-1)})\bigr),
\end{equation*}
leaving the residual stream un-scaled so that it accumulates additively
across depth. The difference is exactly the explicit/implicit one:
Pre-LN evaluates everything at the old state and \emph{adds} (an
explicit forward-Euler step); Post-LN constrains the \emph{new} state
to lie on $\Mcalnorm$, the defining feature of an implicit (projected)
scheme. Pre-LN's layerwise map is smooth wherever $\mathcal S$ is,
Post-LN's is continuous but only piecewise smooth
(\Cref{app:partI}, \Cref{lem:preln-smooth,lem:postln-cont}). We adopt
Pre-LN because its un-normalised, additive residual stream is the
cleanest state variable for the forward-Euler reading --- and,
consistently, we will nowhere claim that the stream itself lives on
$\Mcalnorm$.

In particular, under Pre-LN, the sublayer inputs are confined to
$\Mcalnorm$ (every sublayer computation reads from a uniformly bounded
argument), but the residual stream $x^{(\ell)}$ itself grows linearly
in depth (\Cref{prop:bounded}). All confinement
statements in this paper refer to the normalised observable
$\mathcal S(x)$, never to the raw stream $x$.

\subsection{A standard Transformer block, and the renormaliser}\label{sec:intro:block}

Written with an auxiliary intermediate index $(\ell-1/2)$, the block
is\footnote{The intermediate index does not name a physical sublayer.
It is an auxiliary label so that the coupling and the rescaling each
appear as \emph{one} additive change of the residual stream; the block
then reads as two consecutive forward-Euler steps rather than one
composite map.}
\begin{align}\label{eq:layer_evolution}
  z_t^{(\ell)} &= \mathcal{S}\bigl(x_t^{(\ell-1)}\bigr), \notag\\
  x_t^{(\ell-1/2)} &= x_t^{(\ell-1)} + \sum_{s \le t} \alpha_{ts}^{(\ell)}\, W_V^{(\ell)} z_s^{(\ell)}, \notag\\
  z_t^{(\ell+1/2)} &= \mathcal{S}\bigl(x_t^{(\ell-1/2)}\bigr), \notag\\
  x_t^{(\ell)} &= x_t^{(\ell-1/2)} + W_2^{(\ell)}\, \phi\bigl(W_1^{(\ell)} z_t^{(\ell+1/2)} + b_1^{(\ell)}\bigr) + b_2^{(\ell)},
\end{align}
where $\phi$ is the MLP activation (GELU, ReLU, or similar; the
treatment uses only that $\phi$ is $C^1$ with locally Lipschitz
derivative, per (A3)), and $W_1^{(\ell)} \in \R^{d_\mathrm{ff}\times d}$,
$W_2^{(\ell)} \in \R^{d\times d_\mathrm{ff}}$. All weights carry the
layer index $\ell$. The MLP acts \emph{pointwise per token} --- it mixes
no positions --- so it is the within-token, cross-feature complement of
attention (which mixes positions but is linear in the values).

Grouping each pre-normed sublayer with its residual add, the block
simplifies to one forward-Euler step,
\begin{equation}\label{eq:one-step}
  x_t^{(\ell)} = x_t^{(\ell-1)} + \mathcal{N}^{(\ell-1)}_{t}\bigl(x^{(\ell-1)}\bigr),
\end{equation}
where $\mathcal{N}^{(\ell-1)}_t$ --- the \emph{renormaliser} --- bundles
three operations: row-normalisation of the attention scores (which
\emph{defines} the (fast) coupling), norm scaling ($\mathcal S$, applied
twice per block), and the within-token MLP nonlinearity.\footnote{The
name is Euler-step bookkeeping, not a claim that the three play the
same physical role. They constrain different degrees of freedom at
different strengths: attention is \emph{competitive} (its denominator
ties all positions together --- mass given to one position is taken
from the others), $\mathcal S$ divides each token by its own statistic
and couples nothing across tokens, and the MLP bends feature directions
within a token. The three nonlinearities also live in different spaces:
the simplex in \emph{position} space, the sphere in \emph{feature}
space, and an unconstrained feature bend. We refer to them by name
whenever the distinction matters.}

\textbf{The step is bounded; the stream is not.} Because every argument
of $\mathcal N$ passes through $\mathcal S$ first, the renormaliser is
uniformly bounded even though it is not a projection --- and this is the
precise sense in which norm scaling tames depth.

\begin{proposition}[Bounded step, linearly growing stream]\label{prop:bounded}
Fix the weights of layer $\ell$ and let $L_\phi$ be the Lipschitz
constant of $\phi$ on bounded sets. Then
\[
\sup_{x} \bigl\|\mathcal N^{(\ell)}_t(x)\bigr\|
\;\le\; \|W_V^{(\ell)}\|\sqrt d
\;+\; \|W_2^{(\ell)}\|\bigl(L_\phi(\|W_1^{(\ell)}\|\sqrt d + \|b_1^{(\ell)}\|) + \|\phi(0)\|\bigr) + \|b_2^{(\ell)}\|
\;=:\; C_{\mathcal N}^{(\ell)},
\]
and consequently the Pre-LN residual stream satisfies
$\|x^{(\ell)}\| \le \|x^{(0)}\| + \sum_{r<\ell} C_{\mathcal N}^{(r)}$
for all $\ell \le L$.
\end{proposition}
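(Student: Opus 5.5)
The plan is to bound the two Pre-LN sublayer increments of \eqref{eq:layer_evolution} separately, using only that every sublayer reads its input through $\mathcal S$, and then telescope \eqref{eq:one-step} over depth. Write $\mathcal N^{(\ell)}_t(x) = a_t + m_t$. The attention increment is $a_t = \sum_{s\le t}\alpha^{(\ell)}_{ts} W_V^{(\ell)} z_s$. The MLP increment is $m_t = W_2^{(\ell)}\phi(W_1^{(\ell)} z'_t + b_1^{(\ell)}) + b_2^{(\ell)}$, with $z'_t = \mathcal S(x_t^{(\ell-1/2)})$. The key input is that $\mathcal S$ maps into $\Mcalnorm$, so $\|z_s\|\le\sqrt d$ and $\|z'_t\|\le \sqrt d$ for every argument. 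For RMSNorm this holds exactly on the sphere of radius $\sqrt d$; learned gains are to be absorbed into the adjacent weight matrices, which I will state as a convention. Crucially, this bound holds regardless of $x$, and in particular regardless of the intermediate stream $x^{(\ell-1/2)}$.

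\textbf{Attention term.} Row-stochasticity of $\alpha^{(\ell)}_{t,\cdot}$ from \eqref{eq:softmax} makes $\sum_s \alpha_{ts} z_s$ a convex combination of vectors in the ball of radius $\sqrt d$. By the triangle inequality its norm is at most $\sqrt d$, hence $\|a_t\|\le \|W_V^{(\ell)}\|\sqrt d$.

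\textbf{MLP term.} I will write $\phi(w) = \phi(0) + (\phi(w)-\phi(0))$ and apply the Lipschitz bound on the bounded set containing $W_1 z' + b_1$. That set is the ball of radius $\|W_1\|\sqrt d + \|b_1\|$, which is exactly why a local Lipschitz constant $L_\phi$ "on bounded sets" suffices. This gives
\[
\|\phi(W_1 z'+b_1)\| \le \|\phi(0)\| + L_\phi\bigl(\|W_1\|\sqrt d+\|b_1\|\bigr).
\]
Applying $W_2$ and adding $b_2$ produces the stated constant. Summing the two terms by the triangle inequality gives $\sup_x\|\mathcal N^{(\ell)}_t(x)\|\le C^{(\ell)}_{\mathcal N}$.

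\textbf{Stream bound.} From \eqref{eq:one-step}, $\|x^{(r+1)}\|\le \|x^{(r)}\| + C^{(r)}_{\mathcal N}$, applied per token. Induction on $\ell$ then gives the claimed bound. If $\|x\|$ denotes a norm on the whole $T\times d$ state rather than per token, I will either read the statement token-wise or pick up a harmless factor of $\sqrt T$; I would make that convention explicit.

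\textbf{Main subtlety.} I expect the only delicate step to be the MLP term. Its input comes from the intermediate stream $x^{(\ell-1/2)}$, which is not itself bounded, so the argument must rely on $\mathcal S$ re-normalising it before $W_1$ acts. I also need to check that the "$\phi(0)$" term is interpreted as the vector $\phi$ applied coordinatewise to $0$, and that $L_\phi$ is the Lipschitz constant with respect to the Euclidean norm on $\R^{d_{\mathrm{ff}}}$. Everything else is the triangle inequality.
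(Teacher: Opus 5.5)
Your proposal is correct and follows the paper's proof essentially line by line. It bounds the attention half-step by $\|W_V^{(\ell)}\|\sqrt d$ via the convex combination, handles the MLP half-step by writing $\phi(w)=\phi(0)+(\phi(w)-\phi(0))$ with the Lipschitz bound on the ball of radius $\|W_1^{(\ell)}\|\sqrt d+\|b_1^{(\ell)}\|$, and obtains the stream bound by telescoping \eqref{eq:one-step}. Your two conventions, absorbing the gains into the adjacent weights (the paper instead multiplies the bound by $\|g\|_\infty$) and reading the stream norm per token, are equivalent to the paper's.
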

\begin{proof}
See \Cref{app:partI} (\Cref{pf:bounded}).
\end{proof}

Each term in $C_{\mathcal N}^{(\ell)}$ is a per-layer bound on one
sublayer's contribution, monotone in the per-layer weight norms (linear
in each factor separately). The bound is per-step: every layer adds at
most $C_{\mathcal N}^{(\ell)}$ to $\|x^{(\ell)}\|$. Boundedness is
therefore a property of the \emph{step} (and of every normalised
observable); the \emph{stream} $x^{(\ell)}$ itself is not bounded, but
its growth is at most linear in depth whenever
$\sup_\ell C_{\mathcal N}^{(\ell)} < \infty$. All statements about
confinement or saturation in this paper are statements about
$\mathcal S(x)$.

\subsection{From Transformer to a non-autonomous flow}\label{sec:intro:ode}

\textbf{The Euler step as a definition.} The renormaliser
$\mathcal{N}^{(\ell)}_t$ of \eqref{eq:one-step} is, for each fixed
$\ell$, an ordinary (generally nonlinear) map $\R^d \to \R^d$,
depending on $\ell$ only through the trained parameters. At step size
$\Delta\ell = 1$ the forward-Euler scheme for $\dot x = f(x,\ell)$
reads $x^{(\ell)} = x^{(\ell-1)} + f(x^{(\ell-1)}, \ell-1)$, which is
\emph{literally} \eqref{eq:one-step} once we \emph{define}
\begin{equation}\label{eq:f-def}
  f(x,\ell) \;:=\; \mathcal{N}^{(\ell)}(x).
\end{equation}
Here $\ell \in \R^{+}$ ranges over continuous depth-time; the
network's actual layers are its integer values. The identification
\eqref{eq:f-def} itself carries no error; the two error-bearing steps
--- linearising, and Euler versus exact --- are quantified below, each at
the point it is needed.

\textbf{The flow is non-autonomous.} Because the weights vary with
depth, $f(\cdot,\ell)$ is a different map at every layer: the flow is
\emph{non-autonomous} --- a trained standard stack ties nothing across
depth, and the continuous-time attention models of the mean-field
literature likewise carry depth-dependent weights from their very
first equation~\citep{geshkovski2023clusters}. This is the natural
setting for any mainline LLM, which never ties weights across layers.
Only in the autonomous (weight-tied) special case may one speak of
\emph{the} generator and \emph{its} spectrum. The non-autonomous
theory of linear cocycles, exponential dichotomies, and their spectra
does the same work, and is exactly fitted to products of layer
maps~\citep{coppel1978dichotomies,sackersell1978spectral,kloeden2011nonautonomous}.
What we require of the weights is only uniform boundedness along
depth.

\begin{itemize}
\item[\textbf{(A1)}] \emph{The realised generator family $\{A(\ell)\}_\ell$
is uniformly bounded along depth: $\sup_\ell \|A(\ell)\| \le a < \infty$
for some constant $a$, where $A(\ell) := M(\ell) + V(\ell)$ is the
layer-$\ell$ feature-space Jacobian defined in \eqref{eq:MV-def}
below.}
\end{itemize}

Boundedness is checkable on any trained artefact --- under
\Cref{prop:bounded}-type bounds, $a$ is controlled by the per-layer
weight norms and the $C^1$ bounds of (A3) --- and it is all the cocycle
theory asks. The weight-tied case, where every statement below
collapses to its familiar autonomous form, is carried along as a
running specialisation, not as the setting.

\textbf{The reference trajectory.} Depth saturation --- the empirical
flattening of per-layer updates in deep stacks --- reads, in this
language, as the configuration settling near a distinguished reference
solution of the flow. The reference we want is a \emph{trajectory}: in
a non-autonomous setting the depth index plays the role of time, and
we want a path that the actual states follow, not a single fixed
point. The structure of attention hands us such a trajectory for free:
token-homogeneous configurations are preserved by every layer,
whatever its weights, so the homogeneous manifold carries a canonical
family of solutions --- pick a homogeneous initial state and let the
network itself propagate it. The following elementary symmetry is the
backbone of the entire construction, and it is exact, nonlinear, and
weight-free.

\begin{lemma}[The homogeneous manifold is exactly invariant, at every layer]\label{lem:homog}
Consider the block map \eqref{eq:layer_evolution} at any layer $\ell$,
acting on a contiguous window of $T$ tokens with causal attention
confined to the window, and let
\[
\mathcal H \;:=\; \{X \in \R^{T \times d} : x_1 = \cdots = x_T\}.
\]
Then for \emph{any} choice of weights and \emph{any} row-stochastic
attention, the layer map takes a homogeneous state to a homogeneous
state: if all tokens start at the common value $x$, all tokens end at
the common value
\[
x' \;=\; x + W_V^{(\ell)}\mathcal S(x) \;+\; W_2^{(\ell)}\,\phi\bigl(W_1^{(\ell)}\,\mathcal S\bigl(x + W_V^{(\ell)}\mathcal S(x)\bigr) + b_1^{(\ell)}\bigr) + b_2^{(\ell)}.
\]
Consequently $\mathcal H$ is invariant under the block map, layer by
layer. (The right-hand side is just the renormaliser
$\mathcal N^{(\ell)}$ of \eqref{eq:one-step} evaluated at the
homogeneous configuration $(x, \ldots, x)$.)
\end{lemma}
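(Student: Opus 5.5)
The plan is a direct computation through the four lines of \eqref{eq:layer_evolution}, tracking homogeneity through each line. Every operation in the block is of one of two kinds: either it acts \emph{pointwise per token} with a token-independent map, or it is an attention average over a causal window. Pointwise maps preserve homogeneity trivially. The only genuine content is that a row-stochastic average of identical vectors returns that vector, whatever the weights.

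\textbf{Step 1 (first norm).} Fix $\ell$ and let $x_1^{(\ell-1)}=\cdots=x_T^{(\ell-1)}=x$. Since $\mathcal S$ acts on each token separately by the same map, $z_t^{(\ell)}=\mathcal S(x)$ for every $t$.

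\textbf{Step 2 (attention).} Because attention is confined to the window, for each $t$ the sum over $s\le t$ ranges only over window tokens. Every such $s$ carries the value $W_V^{(\ell)}z_s^{(\ell)}=W_V^{(\ell)}\mathcal S(x)$, which does not depend on $s$. It therefore factors out of the sum, giving
\[
\sum_{s\le t}\alpha_{ts}^{(\ell)}W_V^{(\ell)}\mathcal S(x)=\Bigl(\sum_{s\le t}\alpha_{ts}^{(\ell)}\Bigr)W_V^{(\ell)}\mathcal S(x)=W_V^{(\ell)}\mathcal S(x).
\]
The last equality uses only row-stochasticity. The scores $u_{ts}$ and the particular form of the normalisation (softmax or otherwise) never enter, which is why the statement holds for \emph{any} row-stochastic attention. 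The first row $t=1$ is the degenerate case $\alpha_{11}=1$. Hence $x_t^{(\ell-1/2)}=x+W_V^{(\ell)}\mathcal S(x)=:y$ for all $t$.

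\textbf{Step 3 (second norm and MLP).} Again $\mathcal S$ and the MLP act per token with token-independent weights. So $z_t^{(\ell+1/2)}=\mathcal S(y)$, and the output is
\[
x_t^{(\ell)}=y+W_2^{(\ell)}\phi\bigl(W_1^{(\ell)}\mathcal S(y)+b_1^{(\ell)}\bigr)+b_2^{(\ell)}.
\]
This is independent of $t$. Substituting $y$ gives the displayed formula for $x'$, and comparing with \eqref{eq:one-step} identifies the increment as $\mathcal N^{(\ell)}$ evaluated at $(x,\ldots,x)$.

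\textbf{Step 4 (invariance).} Invariance of $\mathcal H$ at every layer follows immediately, since $\ell$ and the weights were arbitrary. Iterating over layers gives the same conclusion for any depth.

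The main obstacle, such as it is, lies in Step 2. One must make sure that the window hypothesis removes any contribution from tokens outside $\mathcal H$, such as earlier context or padding, which would break the factorisation. One must also note that no positional encoding is injected inside the block, since that would make the per-token maps $t$-dependent. Both points are guaranteed by the form of \eqref{eq:layer_evolution} and the hypothesis of the lemma, so I would state them explicitly rather than prove anything further.
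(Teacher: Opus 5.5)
Your proof is correct and is the same argument the paper gives: the identical value vectors $W_V^{(\ell)}\mathcal S(x)$ are averaged by a probability row, so every token receives the same update regardless of $\alpha$, and the per-token norm scaling and MLP then act identically on identical inputs. Your version spells out the four lines of \eqref{eq:layer_evolution} explicitly, and it states the window-confinement and no-positional-encoding points that the paper leaves implicit.
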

\begin{proof}
If all tokens equal $x$, every value vector equals
$W_V^{(\ell)}\mathcal S(x)$, and each attention row is a probability
vector averaging identical values --- so every token receives the same
update regardless of $\alpha$; the MLP then acts identically on
identical inputs.
\end{proof}

It is worth pausing on \emph{what} makes the lemma true, because the
mechanism is more informative than the algebra.

\textbf{Two independent per-sublayer invariances compose.} Attention
preserves homogeneity because it is an \emph{averaging operator}:
consensus is a fixed point of every average, regardless of the
weights, the temperature, or the row-stochasticity details. The MLP
preserves homogeneity because it is \emph{pointwise per token}: the
same function applied to the same input gives the same output. Neither
fact uses any symmetry property of the layer map; it is the
combination --- averaging plus pointwise --- that closes $\mathcal H$
as an invariant set, layer by layer, under any weights. This can be
sharpened into a structural observation. For \emph{bidirectional}
attention the dynamics commutes with the permutation group on token
indices, and $\mathcal H$ is precisely the fixed-point set of that
group --- so invariance is forced by symmetry before any formula is
written. A \emph{causal} mask breaks the permutation symmetry; the
per-layer argument above shows that invariance survives anyway,
carried by a \emph{weaker and more robust mechanism} (averaging alone,
with no symmetry needed). And because the argument is per-layer,
untied weights cost nothing: $\mathcal H$ is invariant under the whole
non-autonomous flow, and the network restricted to
$\mathcal H \cong \R^d$ acts on the common token value $x$ as
\[
x^{(\ell)} \;=\; x^{(\ell-1)} \;+\; \mathcal N^{(\ell-1)}\bigl(x^{(\ell-1)}, \ldots, x^{(\ell-1)}\bigr).
\]
We fix once and for all a solution of this reduced system --- the
\emph{homogeneous reference trajectory}
$X^\star(\ell) = (x^\star(\ell), \ldots, x^\star(\ell))$ --- and
linearise along it.

\textbf{There is no existence assumption}: a reference trajectory
exists because $\mathcal H$ is invariant; what remains empirical is
only whether trained networks operate \emph{near} it. On that point
the evidence is the same body of work as before, and it is worth
restating in the non-autonomous light: the drift of deep attention
stacks toward token uniformity is documented as rank collapse for pure
attention~\citep{dong2021rank}; in the mean-field reading, clustering
to a single point is a theorem for the attractive, MLP-free
sub-architecture, and --- variationally --- the homogeneous
configurations are the global maximisers of the interaction energy
that the attractive flow increases along its own
trajectories~\citep{geshkovski2023clusters}: proximity to $\mathcal H$
is what the fast coupling is \emph{for}. We record the standing
locality requirement as:

\begin{itemize}
\item[\textbf{(A2)}] \emph{The states under analysis remain in a
neighbourhood of the reference trajectory $x^\star(\ell)$ in which the
linearisation assumptions of (A3) hold.}\footnote{This is a locality
condition --- it restricts the domain of validity, not the existence of
$x^\star$, which Lemma~\ref{lem:homog} provides weight-free.}

\item[\textbf{(A3)}] \emph{The norm-scaling operator $\mathcal S$ and the
activation $\phi$ are $C^1$ in $x$ along the reference trajectory
$x^\star(\ell)$: the Fr\'echet derivatives $D\mathcal S(x^\star(\ell))$
and $D\phi\bigl(W_1^{(\ell)}\mathcal S(x^\star(\ell)) +
b_1^{(\ell)}\bigr)$ exist and are continuous.}\footnote{This is the
smoothness requirement under which the linearisations
$V(\ell)$ and $M(\ell)$ of \eqref{eq:MV-def} are well-defined.} 
\end{itemize}

\textbf{The linearisation, uniformly factorised in depth.}
Linearising the block map along $X^\star(\ell)$ requires the first
variation of the attention term, which splits into a \emph{value}
variation (through $v_s$) and a \emph{kernel} variation (through
$\alpha_{ts}$). At a homogeneous state both collapse --- the first to a
universal kernel, the second to zero --- and, crucially, they collapse
\emph{in the same way at every layer}. Below, $D$ denotes the Fr\'echet
derivative (the Jacobian matrix on $\R^d$); the existence and
continuity of $D\mathcal S$ and $D\phi$ are guaranteed by (A3). Define
the layer-$\ell$ feature-space Jacobians along the reference,
\begin{equation}\label{eq:MV-def}
  V(\ell) := W_V^{(\ell)}\, D\mathcal S\bigl(x^\star(\ell)\bigr),
  \qquad
  M(\ell) := W_2^{(\ell)}\, D\phi(\cdot)\, W_1^{(\ell)}\, D\mathcal S\bigl(x^\star(\ell)\bigr),
\end{equation}
the linearisations of the value path and of the MLP
path.\footnote{We linearise the block as the \emph{sum} of the two
sublayer linearisations, dropping the sequential cross-term
$M(\ell)V(\ell)$; the cross-term is a product of two generator-scale
factors and is absorbed into the $O(\|\cdot\|^2)$ bookkeeping of
\Cref{thm:euler}. No statement below depends on it.}

\begin{theorem}[Sector factorisation, uniform in depth]\label{thm:factor}
Assume (A1)--(A3) and let $X^\star(\ell)$ be the homogeneous reference
trajectory. Then:
\begin{enumerate}
\item At every layer $\ell$, the attention kernel frozen at
  $X^\star(\ell)$ is the \emph{same} causal-uniform kernel
  $\alpha^\star_{ts} = 1/t$ for $s\le t$ --- independently of
  $W_Q^{(\ell)}, W_K^{(\ell)}$ and of $x^\star(\ell)$ itself --- with
  spectrum $\{1/t : t = 1,\ldots,T\}$; and the kernel variation
  contributes nothing to the linearisation at any layer, because the
  values are position-independent and each row of $\delta\alpha$ sums
  to zero. The frozen-kernel linearisation is exact to first order,
  layer by layer.
\item The differential of the layer-$\ell$ block map along the
  reference is
  \begin{equation}\label{eq:J-def}
    J(\ell) \;=\; I_T \otimes \bigl(I + M(\ell)\bigr) \;+\; \alpha^\star \otimes V(\ell),
  \end{equation}
  and since $\alpha^\star$ is the same at every layer, the eigenspaces
  $\{w_j u^\top : u \in \R^d\}$ of the \emph{position} factor
  block-diagonalise \emph{every} $J(\ell)$ simultaneously: on sector
  $j$, the depth evolution is the $d\times d$ \emph{sector cocycle}
  \[
  \Phi_j(\ell, s) \;:=\; \prod_{r=s}^{\ell-1}\bigl(I + M(r) + \lambda_j V(r)\bigr),
  \qquad
  A_j(\ell) := M(\ell) + \lambda_j V(\ell).
  \]
\item The homogeneous sector ($\lambda_1 = 1$, $w_1 = \mathbf 1$)
  carries the \emph{tangent cocycle}
  \begin{equation}\label{eq:A-def}
    \Phi(\ell, s) \;:=\; \Phi_1(\ell, s)
    \;=\; \prod_{r=s}^{\ell-1}\bigl(I + A(r)\bigr),
    \qquad
    A(\ell) := M(\ell) + V(\ell),
  \end{equation}
  which coincides with the linearisation of the reduced flow of
  \Cref{lem:homog} along the reference (up to the absorbed
  cross-term).
\end{enumerate}
\end{theorem}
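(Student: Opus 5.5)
The proof linearises the block map \eqref{eq:layer_evolution} at $X^\star(\ell)$ sublayer by sublayer, working one layer at a time so that untied weights never matter.

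\emph{Step 1: the frozen kernel.} At a homogeneous state every query equals $W_Q^{(\ell)}\mathcal S(x^\star(\ell))$ and every key equals $W_K^{(\ell)}\mathcal S(x^\star(\ell))$. Hence within row $t$ the raw scores $u_{ts}$ are identical for all $s\le t$. The softmax \eqref{eq:softmax} of a constant vector on the causal window $\{1,\dots,t\}$ is uniform, so $\alpha^\star_{ts}=1/t$. This holds for any $W_Q,W_K$, any $x^\star(\ell)$ and any temperature. The matrix $\alpha^\star$ is lower triangular, so its spectrum is read off the diagonal: $\{1/t\}$. All diagonal entries are distinct, so $\alpha^\star$ is diagonalisable with eigenvectors $w_j$, and $w_1=\mathbf 1$ for $\lambda_1=1$ by row-stochasticity.

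\emph{Step 2: the kernel variation vanishes.} Perturb $X=X^\star+\delta X$. The first variation of the attention term $\sum_s\alpha_{ts}W_V z_s$ is
\[
\sum_s\delta\alpha_{ts}\,W_V z^\star + \sum_s\alpha^\star_{ts}\,W_V\,D\mathcal S(x^\star)\,\delta x_s .
\]
Each row of $\alpha$ lies on the simplex, so $\sum_s\delta\alpha_{ts}=0$. Since $z^\star$ does not depend on $s$, the first term vanishes, and only the value variation $(\alpha^\star\otimes V(\ell))\,\delta X$ survives. The argument uses only that softmax is differentiable, which holds by smoothness, and needs no explicit formula for $\delta\alpha$.

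\emph{Step 3: assembling $J(\ell)$.} The MLP sublayer is pointwise per token. Its Jacobian at the homogeneous intermediate state is therefore $I_T\otimes M(\ell)$, where $D\phi$ is evaluated as in \eqref{eq:MV-def} and (A3) guarantees existence. Chain-ruling the two residual steps gives
\[
(I+I_T\otimes M)(I+\alpha^\star\otimes V)=I_T\otimes(I+M)+\alpha^\star\otimes V+\alpha^\star\otimes MV .
\]
We drop the last term, the cross-term, as stipulated in the footnote; this yields \eqref{eq:J-def}.

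This step also carries the \textbf{main obstacle}. The MLP is evaluated at $x^\star+W_V\mathcal S(x^\star)$ rather than at $x^\star(\ell)$, so $D\mathcal S$ in $M(\ell)$ must be read as evaluated at the intermediate homogeneous point. This is harmless for the structure, because that point is still homogeneous by \Cref{lem:homog}. I would state this convention explicitly, and I would not try to bound the discrepancy.

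\emph{Step 4: sector decomposition and the tangent cocycle.} For any $u\in\R^d$,
\[
J(\ell)(w_ju^\top)=w_j\bigl((I+M(\ell)+\lambda_jV(\ell))u\bigr)^\top ,
\]
using the mixed-product property of $\otimes$. Thus each subspace $\{w_ju^\top\}$ is invariant under every $J(\ell)$, and $J(\ell)$ acts on it as $I+A_j(\ell)$. These subspaces span $\R^{T\times d}$ because $\alpha^\star$ is diagonalisable. Composing over layers, the cocycle property gives the product $\Phi_j$.

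For $j=1$ we get $A=M+V$. Differentiating the reduced map of \Cref{lem:homog} in $x$ yields $I+V+M(I+V)$, which agrees with $I+A$ up to the same cross-term. This proves part 3.
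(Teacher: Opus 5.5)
Your proposal is correct and follows the paper's proof essentially step for step. Equal queries and keys give the causal-uniform kernel with triangular spectrum $\{1/t\}$, and the kernel variation vanishes because rows of $\delta\alpha$ sum to zero against a position-independent value; the action on $w_j u^\top$ then yields the sector cocycles, with $\lambda_1=1$ giving the tangent cocycle. Your extra care fills in details the paper leaves implicit without changing the argument: you compute the cross-term $\alpha^\star\otimes M(\ell)V(\ell)$ explicitly from the chain rule, invoke distinct eigenvalues for diagonalisability so that the sectors span, and note that $D\mathcal S$ in $M(\ell)$ is really evaluated at the homogeneous intermediate point $x^\star+W_V^{(\ell)}\mathcal S(x^\star)$.
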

\begin{proof}
(1) $q_t = W_Q^{(\ell)}\mathcal S(x^\star(\ell))$ and
$k_s = W_K^{(\ell)}\mathcal S(x^\star(\ell))$ do not depend on $t,s$,
so all scores in a row are equal and softmax returns the uniform row
on the causal window --- whatever the layer and its weights; a
triangular matrix has its diagonal as spectrum. For the variation,
$\sum_s \delta\alpha_{ts}\,v^\star = v^\star\sum_s\delta\alpha_{ts} = 0$
since rows of a row-stochastic kernel sum to one identically in the
state. (2) For $\delta X = w_j u^\top$:
$\delta X(I+M(\ell))^\top = w_j((I+M(\ell))u)^\top$ and
$\alpha^\star\delta X V(\ell)^\top = \lambda_j w_j(V(\ell)u)^\top$;
the sector subspaces do not depend on $\ell$ because $\alpha^\star$
does not. (3) is the case $\lambda_1 = 1$.
\end{proof}

\textbf{Why $\{w_j u^\top : u \in \R^d\}$? Two reasons.}
\emph{Position axis:} $w_j$ is a right eigenvector of $\alpha^\star$
($\alpha^\star w_j = \lambda_j w_j$), the one fixed matrix on the
position axis of every $J(\ell)$. Since $\alpha^\star$ does not depend
on $\ell$, the same $w_j$ block-diagonalises every $J(\ell)$
simultaneously --- exactly what a common sector structure across depth
requires. \emph{Feature axis:} the feature index is free --- $u$
ranges over all of $\R^d$ --- because no operator on the feature axis
of $J(\ell)$ is fixed (they vary with $\ell$). The eigenspace is
therefore $d$-dimensional: each position-axis eigenvector $w_j$
carries a $d$-dimensional fiber of feature vectors.

\textbf{Sector $1$ is the homogeneous manifold; the rest are
fluctuations.} Because $\alpha^\star$ is row-stochastic, its Perron
eigenvalue is $\lambda_1 = 1$ with right eigenvector
$w_1 = \mathbf 1$, so sector $1$ is
$\{\mathbf 1\,u^\top : u \in \R^d\}$ --- exactly the $T\times d$
matrices with all rows equal, i.e.\ the manifold $\mathcal H$ of
\Cref{lem:homog} reparametrised by the common row value
$u \in \R^d$. It is invariant under every $J(\ell)$
(\Cref{thm:factor}) --- the linear shadow of the exact nonlinear
invariance (\Cref{lem:homog}) --- so its dynamics is closed: the
tangent cocycle $\Phi_1$ is the depth evolution of a homogeneous
perturbation, independent of all other sectors. The remaining sectors
$j \ge 2$ have $|\lambda_j| < 1$ (Perron--Frobenius for stochastic
matrices) and decay geometrically under repeated $\alpha^\star$
applications --- the classical Markov-chain mixing
statement~\citep{norris1997markov,levin2009markov}; their cocycles
$\Phi_j$ govern the depth evolution of within-block fluctuations
around the homogeneous reference, transverse to $\mathcal H$ and
controlled by (A5) below. In the oblique eigenbasis of $\alpha^\star$
(the change of basis costs the factor $\kappa_\alpha$, its eigenbasis
condition number, in norm estimates), the full linearised flow is
block-diagonal,
\[
\Phi^{\mathrm{full}}(\ell, s) \;=\; \bigoplus_{j=1}^{T} \Phi_j(\ell, s),
\]
and two structural facts are visible at a glance: sector $1$ is closed
under depth evolution (no off-diagonal coupling to other sectors), and
the $T-1$ fluctuation sectors are independent of each other and of
sector $1$. The dichotomy analysis below can therefore proceed
\emph{sector by sector}, and the dichotomy spectrum of
$\Phi^{\mathrm{full}}$ is the union of the dichotomy spectra of
$\Phi_1, \ldots, \Phi_T$ --- with sector $1$ carrying the central (A4)
saddle and sectors $j \ge 2$ carrying the (A5) transverse contraction.

This theorem is the reason the non-autonomous program is affordable.
In general, a depth-varying linearisation offers no common
eigenstructure to work in --- each layer would rotate the
decomposition of its predecessor. The \emph{position-space
scaffolding} is rigid because the Kronecker structure of $J(\ell)$
decouples the position and feature axes: the operators acting on the
position axis ($I_T$ and $\alpha^\star$) are $\ell$-independent, so the
eigenvectors $\{w_j\}$ of $\alpha^\star$ are \emph{common} to every
$J(\ell)$; the operators acting on the feature axis
($M(\ell), V(\ell)$) are $\ell$-dependent, but the decoupling means
they cannot rotate the position-axis decomposition. All depth
variation is therefore confined to the $d\times d$ sector cocycles
$\Phi_j$ that act on the fibers.

\begin{remark}[The head of the window; doubly stochastic variants]\label{rem:sink}
The Perron left eigenvector $\pi$ ($\pi\alpha^\star = \pi$) of the
causal-uniform kernel concentrates on the first position
($\pi = e_1$): under repeated application of $\alpha^\star$ --- which
is the same matrix at every layer --- all tokens relax toward the
sequence head, the linear-regime shadow of the empirically observed
attention-sink behaviour. For doubly stochastic attention variants the
stationary vector is uniform and the block \emph{mean} is exactly the
$\lambda = 1$ coordinate; for the causal kernel the block mean
approximates it, with an error carried by the fluctuation sectors,
which decay under the contraction hypothesis introduced in
\S\ref{sec:settings}.
\end{remark}

\begin{remark}[Which object is Markov --- and which is autonomous]\label{rem:markov}
Two operators on two different spaces appear in \Cref{thm:factor},
and they differ in both respects. The row-stochastic kernel
$\alpha^\star$ on \emph{position} space is a Markov transition matrix
and an \emph{autonomous} one --- the same at every layer --- so
statements about mixing, e.g.\ that $(\alpha^\star)^\ell$ approaches
its rank-one limit geometrically, are classical Markov-chain
statements~\citep{norris1997markov,levin2009markov}. The generator
family $A(\ell)$ on \emph{feature} space is neither: it carries no
stochastic structure (it is a Jacobian; its columns obey no
conservation law) and it varies with depth. We keep the two faces
strictly apart: ``mixing'' and ``saturation'' always refer to
$\alpha^\star$; ``growth rates'' and ``trajectory'' always refer to
the cocycles.
\end{remark}

\textbf{Discretisation error.} With $\delta x$ the deviation from the
reference in the homogeneous sector, the linearised depth dynamics and
its Euler step are
\begin{equation}\label{eq:linear-block}
  \dot{\delta x} \;=\; A(\ell)\,\delta x,
  \qquad
  \delta x^{(\ell)} \;=\; \bigl(I + A(\ell-1)\bigr)\,\delta x^{(\ell-1)}
  \;=\; \Phi(\ell, 0)\,\delta x^{(0)} .
\end{equation}

\begin{theorem}[Euler is the first-order truncation of the linearised flow]\label{thm:euler}
Assume (A1) and let $\ell \mapsto A(\ell)$ be continuous on the depth
window under analysis. Then the forward-Euler step of
$\dot{\delta x} = A(\ell)\,\delta x$ with step $\Delta\ell$ satisfies
\[
\delta x(\ell + \Delta\ell) \;=\; \bigl(I + \Delta\ell\, A(\ell)\bigr)\,\delta x(\ell)
\;+\; O\!\bigl(a^2\Delta\ell^2 + \omega_A(\Delta\ell)\,\Delta\ell\bigr),
\]
where $a = \sup_\ell\|A(\ell)\|$ and $\omega_A$ is the modulus of
continuity of $A(\cdot)$; the exact step is the time-ordered
exponential. At $\Delta\ell = 1$ and $a \sim 1$ the per-layer deviation
is $O(1)$, so the Euler reading is \emph{qualitative}, not
quantitative, at mainline depths; it becomes quantitative exactly in
the small-generator (saturation) regime $a \ll 1$ with slowly varying
weights.
\end{theorem}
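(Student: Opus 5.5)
The approach is to compare the exact one-step propagator of $\dot{\delta x} = A(\ell)\,\delta x$ over $[\ell,\ell+\Delta\ell]$ with the Euler map $I + \Delta\ell\,A(\ell)$. The exact step is $\delta x(\ell+\Delta\ell) = U(\ell+\Delta\ell,\ell)\,\delta x(\ell)$, where $U$ is the time-ordered exponential. Continuity of $A(\cdot)$ on the window gives existence and uniqueness of $U$ by Picard--Lindel\"of. It also gives the integral equation
\[
U(\ell+h,\ell) \;=\; I + \int_\ell^{\ell+h} A(\sigma)\,U(\sigma,\ell)\,d\sigma .
\]
The whole proof is then one Picard iteration of this identity, with the remainder bounded using (A1) and $\omega_A$.

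\textbf{Step 1 (a priori bound).} Gr\"onwall together with $\|A(\sigma)\|\le a$ gives $\|U(\sigma,\ell)\| \le e^{a(\sigma-\ell)}$. Substituting this back into the integral equation gives
\[
\|U(\sigma,\ell) - I\| \;\le\; e^{a(\sigma-\ell)} - 1 \;\le\; a(\sigma-\ell)\,e^{a(\sigma-\ell)}.
\]

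\textbf{Step 2 (split the remainder).} Write
\[
U(\ell+h,\ell) - (I + hA(\ell)) \;=\; \int_\ell^{\ell+h}\bigl(A(\sigma)-A(\ell)\bigr)\,d\sigma \;+\; \int_\ell^{\ell+h} A(\sigma)\bigl(U(\sigma,\ell)-I\bigr)\,d\sigma .
\]
The first integral has norm at most $h\,\omega_A(h)$ by definition of the modulus of continuity. By Step 1, the second has norm at most $a\int_0^h a\tau e^{a\tau}\,d\tau \le \tfrac12 a^2h^2 e^{ah}$. Multiplying by $\delta x(\ell)$ yields the stated expansion with $h=\Delta\ell$. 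The $O(\cdot)$ constant is $e^{a\Delta\ell}\|\delta x(\ell)\|$, which is uniform on bounded windows.

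\textbf{Step 3 (interpretation).} At $\Delta\ell=1$ with $a\sim1$, both error terms are of the same order as the Euler increment $\Delta\ell\,A(\ell)\,\delta x$ itself, so only a qualitative reading survives. When $a\ll1$ and $\omega_A(1)\ll a$ (slowly varying weights), the remainder is $o(a)$ relative to the increment. This is the quantitative regime claimed in the statement.

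The main obstacle is not analytic but one of bookkeeping. The continuous interpolation $\ell\mapsto A(\ell)$ between integer layers is a modelling choice, and $\omega_A(1)$ is only as small as the layer-to-layer weight variation it encodes. I would state this explicitly, for instance by taking a piecewise-linear interpolation so that $\omega_A(1) = \sup_\ell \|A(\ell+1)-A(\ell)\|$. With that choice the hypothesis ``slowly varying weights'' becomes a checkable condition.
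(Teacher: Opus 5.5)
Your proposal is correct and is essentially the paper's argument. The paper writes the exact propagator as the Peano--Baker series and bounds the $n\ge 2$ tail termwise by $(a\Delta\ell)^n/n!$; your remainder $\int_\ell^{\ell+h} A(\sigma)\bigl(U(\sigma,\ell)-I\bigr)\,d\sigma$ is exactly that tail, summed in closed form via Gr\"onwall, so you obtain the same split and the same bound $\omega_A(\Delta\ell)\,\Delta\ell + \tfrac12 a^2\Delta\ell^2 e^{a\Delta\ell}$.
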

\begin{proof}
See \Cref{app:partI} (\Cref{pf:euler}).
\end{proof}

The Euler deviation in \Cref{thm:euler} quantifies one of the two
error sources that the linearised depth dynamics carries.
\emph{Euler-versus-exact error} (the subject of the theorem): the
layer map $(I + A(\ell-1))$ is a forward-Euler step of the linearised
flow, with error $O(a^2\Delta\ell^2 + \omega_A(\Delta\ell)\Delta\ell)$;
at $\Delta\ell = 1$ this is $O(1)$ at mainline depths, so the Euler
reading is qualitative there and becomes quantitative only in the
small-generator regime $a \ll 1$. \emph{Linearisation error}
(controlled by (A3)): replacing the nonlinear block map by its
first-order Taylor expansion at the reference --- a Taylor remainder
bounded on the neighbourhood of the reference where (A3) holds. The
theorem licenses the name: without it, \eqref{eq:linear-block} is just
a definition (a convenient way to write the layer map); with it,
\eqref{eq:linear-block} is also a discretisation (a forward-Euler
approximation of an underlying continuous flow, with quantified
error). The two senses are compatible, and the latter is what makes
the linearised equation analytically tractable.\footnote{Mean-field
continuous attention flows~\citep{geshkovski2023clusters} use a
different discretisation convention: their projected dynamics
$\dot x_i = \mathrm{proj}_{x_i}(\text{attention average})$ has no
separate residual term --- the tangent projection back to the
constraint manifold plays the role of ``replace and renormalise,
continuously.'' The discrete Transformer, by contrast, \emph{adds} the
attention output (Pre-LN residual) and renormalises at the next layer.
The two conventions produce different linearisations: the mean-field
linearised sector map is $A_j - R_j$, where $R_j$ is a relaxation term
from the projection back to the manifold; the discrete Transformer's
is $I + A_j$, where $I$ is the linearisation of the residual addition.
These are two discretisations of the same underlying idea, and the
structural difference --- $R_j$ (projection-induced) versus $I$
(residual-addition-induced) --- is the difference between ``continuous
relaxation to the constraint surface'' and ``additive update with no
projection.''}

\textbf{Why eigenvalues are not enough.} In a non-autonomous depth
flow the growth of a state is a property of the \emph{product}
$\Phi(\ell, 0)$, not of any single factor $A(\ell)$. Two facts make
eigenvalues of the individual $A(\ell)$ the wrong object to consult.
(i) The product's norm can vastly exceed what the per-layer spectral
radii suggest, $\|\Phi(\ell,0)\| \gg \prod_r \rho\bigl(I + A(r)\bigr)$,
when the per-layer matrices are non-normal --- the classical
phenomenon of \emph{transient growth} in non-normal systems. (ii) The
product's growth rate is determined by alignment between successive
layers, not by the spectral radius of individual layers. The dichotomy
spectrum is defined directly on the cocycle: it asks, for each growth
scale $\gamma > 0$, whether the rescaled cocycle
$\gamma^{-(\ell-s)}\Phi(\ell,s)$ admits a clean split into growing and
decaying subspaces.

\textbf{Exponential dichotomy.} Say that $\Phi$ admits an
\emph{exponential dichotomy} with constant $C \ge 1$, rates
$0 < r_- < 1 < r_+$, and complementary projector family $P_\pm(\ell)$
(i.e., $P_+(\ell) + P_-(\ell) = I$, $P_+(\ell)P_-(\ell) = 0$)
satisfying the cocycle invariance property
\[
P_\pm(\ell)\,\Phi(\ell, s) \;=\; \Phi(\ell, s)\,P_\pm(s)
\quad\text{for all } \ell \ge s,
\]
if the two bundles grow / decay in opposite directions:
\[
\|\Phi(\ell, s)\,P_-(s)\| \;\le\; C\,r_-^{\,\ell-s} \quad (\ell \ge s),
\qquad
\|\Phi(\ell, s)\,P_+(s)\| \;\le\; C\,r_+^{\,\ell-s} \quad (\ell \le s).
\]
The first bound says the $P_-$ bundle \emph{decays forward} in time
(rate $r_-$); the second says the $P_+$ bundle \emph{decays backward}
in time, equivalently \emph{grows forward} (rate
$r_+$)~\citep{coppel1978dichotomies}. The dichotomy is the
non-autonomous analogue of the stable/unstable subspace decomposition
of an autonomous matrix. The \emph{dichotomy spectrum} of the cocycle
is the set of growth scales $\gamma > 0$ at which the rescaled cocycle
$\gamma^{-(\ell-s)}\Phi(\ell,s)$ admits \emph{no} dichotomy; it is a
compact union of at most $d$ closed
intervals~\citep{sackersell1978spectral,kloeden2011nonautonomous}, and
it is the non-autonomous replacement for the set of eigenvalue
magnitudes.

\begin{theorem}[Dichotomy spectrum controls growth]\label{thm:eigenvalue}
Assume (A1) and let $\Sigma_{\mathrm{dich}} \subset (0,\infty)$ be the
dichotomy spectrum of the tangent cocycle $\Phi$. Then
$\Sigma_{\mathrm{dich}} = \bigcup_{i=1}^{n} [\gamma_i^-, \gamma_i^+]$
with $n \le d$, and to each spectral interval corresponds an invariant
bundle of solutions whose exponential growth rates lie in that
interval; solutions in bundles below $1$ decay, in bundles above $1$
grow, and the whole trajectory is \emph{hyperbolic} iff
$1 \notin \Sigma_{\mathrm{dich}}$. In the weight-tied specialisation
$A(\ell) \equiv A$ with $A$ diagonalisable, the spectrum collapses to
the point set $\{|1+\sigma_i|\}$ and the bundles to the eigenspaces.
\end{theorem}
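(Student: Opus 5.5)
The plan is to obtain the first part of \Cref{thm:eigenvalue} essentially by importing the Sacker--Sell spectral theorem for discrete linear cocycles, and then to check the weight-tied specialisation directly. First, (A1) gives $\sup_\ell\|I+A(\ell)\|\le 1+a$, so the cocycle has bounded growth. One point needs care: the classical theory assumes invertible cocycles, or at least injective ones for the Aulbach--Siegmund discrete version. I would handle this in one of two ways. Either add the standing reading that each layer map $I+A(\ell)$ is invertible with uniformly bounded inverse, which holds in the small-generator regime $a<1$ by a Neumann series. Or invoke the noninvertible discrete dichotomy spectrum of Aulbach--Siegmund / P\"otzsche, where kernels are absorbed into the bottom bundle. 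Under either option the spectrum is a compact subset of $(0,\infty)$ (or contains $0$ in the noninvertible case, which I would exclude by the invertibility reading). The spectral theorem then states that $\Sigma_{\mathrm{dich}}$ is a union of $n\le d$ closed intervals $[\gamma_i^-,\gamma_i^+]$. It also provides invariant spectral bundles $W_i(\ell)$ with $\bigoplus_i W_i(\ell)=\R^d$, obtained as intersections of the ranges and kernels of the dichotomy projectors at gaps between intervals.

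The key steps, in order, are as follows.
\begin{enumerate}
\item \emph{Resolvent set is open.} Show that the resolvent set, where $\gamma^{-(\ell-s)}\Phi$ admits a dichotomy, is open. This follows from the roughness of dichotomies: a small change of $\gamma$ is a small multiplicative perturbation of the bounded cocycle.
\item \emph{Monotonicity of bundle dimensions.} Show that on each resolvent gap the rank of $P_+$ is constant, and that it is nondecreasing in $\gamma$ from $0$ to $d$. This gives at most $d$ spectral intervals.
\item \emph{Growth rates in each bundle.} For a solution in $W_i$, apply the dichotomies just below $\gamma_i^-$ and just above $\gamma_i^+$. These give bounds $\|\Phi(\ell,s)v\|\le C_\epsilon(\gamma_i^++\epsilon)^{\ell-s}\|v\|$ and $\|\Phi(\ell,s)v\|\ge C_\epsilon^{-1}(\gamma_i^--\epsilon)^{\ell-s}\|v\|$, so the exponential rates lie in $[\gamma_i^-,\gamma_i^+]$.
\item \emph{Hyperbolicity criterion.} If $1\notin\Sigma_{\mathrm{dich}}$, then $\gamma=1$ is a resolvent point. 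The unscaled cocycle then has a dichotomy, with bundles below $1$ decaying and bundles above $1$ growing; this is the definition of hyperbolicity. The converse holds because a dichotomy at $\gamma=1$ is exactly the statement that $1$ lies in the resolvent set.
\end{enumerate}

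For the autonomous case, $\Phi(\ell,s)=(I+A)^{\ell-s}$. Diagonalise $A=S\,\mathrm{diag}(\sigma_i)S^{-1}$ and group the eigenvalues by the modulus $|1+\sigma_i|$. If $\gamma$ differs from every $|1+\sigma_i|$, take $P_\pm$ to be the spectral projectors onto the eigenspaces with modulus above or below $\gamma$. This gives a dichotomy with constant $C=\kappa(S)$ and rates determined by the nearest moduli, so $\gamma$ is in the resolvent. If instead $\gamma=|1+\sigma_i|$, then an eigenvector gives a solution of constant rescaled norm. That solution can be neither forward-decaying nor backward-decaying, so no dichotomy exists. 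Hence the spectrum is the point set $\{|1+\sigma_i|\}$, and the bundles are the sums of the eigenspaces sharing a modulus.

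I expect the main obstacle to be the invertibility issue. The statement places $\Sigma_{\mathrm{dich}}$ in $(0,\infty)$ and speaks of backward decay of $P_+$, and both implicitly need invertibility of $I+A(\ell)$ on $\operatorname{ran}P_+$. (A1) alone does not give this, since $I+A(\ell)$ might be singular when $a\ge1$. I would first try to state the result with the P\"otzsche noninvertible spectrum and note that, under (A1) with $a<1$, everything reduces to the classical invertible Sacker--Sell setting. Beyond this, the proof is a citation together with the routine eigen-decomposition check.
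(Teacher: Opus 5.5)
Your proposal follows the paper's proof essentially step for step: you cite Sacker--Sell for the open resolvent set and the at-most-$d$ interval structure, locate each bundle's growth rates between the dichotomies just above $\gamma_i^+$ and just below $\gamma_i^-$, read hyperbolicity as $\gamma=1$ lying in the resolvent set, and check the weight-tied case with spectral projectors and an eigenvector of constant rescaled norm. Two small remarks. First, you have the monotonicity backwards: it is the rank of $P_-$ that rises from $0$ to $d$ as $\gamma$ increases, while the rank of $P_+$ falls; this does not affect the jump count. Second, your insistence on uniformly bounded inverses of $I+A(\ell)$ (e.g.\ $a<1$) genuinely sharpens the paper's parenthetical in its Step~2, because invertibility alone, without a uniform bound on $\|(I+A(\ell))^{-1}\|$, does not stop the spectrum from accumulating at $0$ and hence failing to be a finite union of compact intervals in $(0,\infty)$.
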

\begin{proof}
See \Cref{app:partI} (\Cref{pf:eigenvalue}).
\end{proof}

\textbf{One consequence worth recording.} The dichotomy spectrum is
defined directly on the discrete product $\Phi(\ell, s)$ --- there is
no separate continuous convention to convert between. Earlier
formulations in the layers-as-dynamics tradition had to reconcile two
readings of ``growth'' --- the continuous convention (positive real
part, via $e^{\sigma\ell}$) against the discrete one (magnitude
exceeding $1$, via $(1+\sigma)^\ell$), which disagree off the real
axis --- by a step-size regularity condition. With the cocycle-native
definition the ambiguity does not arise: growth means the dichotomy
spectrum lies above $1$, decay means it lies below $1$, and
hyperbolicity is the question of whether $1$ is in the spectrum. This
is the precise sense in which the dichotomy spectrum is the
non-autonomous replacement for the eigenvalue spectrum.

\textbf{Hyperbolicity --- and where the split can possibly come from.}
The spectral hypothesis under which the rest of the paper operates is
the non-autonomous saddle:

\begin{itemize}
\item[\textbf{(A4)}] \emph{The tangent cocycle $\Phi$ admits an exponential
dichotomy with constant $C$, rates $r_- < 1 < r_+$, and bundles
$V_\perp(\ell) = \operatorname{ran} P_-(\ell)$ (the \emph{decaying}
bundle) and $V_\parallel(\ell) = \operatorname{ran} P_+(\ell)$ (the
\emph{persistent} bundle), with margins $m_- := 1 - r_-$ and
$m_+ := r_+ - 1$ both positive.}
\end{itemize}

We call solutions in $V_\parallel$ the \emph{persistent} modes and
deliberately do not call them slow: slowness is a property of the
block-clock variables of \S\ref{sec:settings}, and conflating
``survives depth'' with ``evolves slowly'' is exactly the ambiguity
the two-clock construction is built to remove. The constant $C$
deserves respect: it absorbs both the non-normality of the layer maps
and the incoherence of the splitting across depth, playing the role
the eigenbasis condition number played in the autonomous picture;
every estimate below carries it explicitly, and all asymptotic
statements govern the regime $\ell \gtrsim \log C/\!\log(1/r_-)$ in
which asymptotics dominate transients.

The motivation of (A4) is representational: if the persistent bundle
were trivial, the depth flow would contract everything, all initial
conditions would collapse onto the reference trajectory, and the
output head would read a constant --- a trained network that
transports \emph{anything} to its output operates with a nontrivial
$V_\parallel$ in the regime our linearisation describes. But the
framework can say something sharper than motivation: it can say
\emph{where} the split must live, and the statement is now
layerwise.\footnote{Take the isotropic-value idealisation
$V(\ell) = c_\ell I$ that the mean-field literature analyses. Then
every sector factor degenerates to $I + M(\ell) + c_\ell\lambda_j I$:
at each layer, attention contributes a homothety --- it shifts that
layer's factor by a scalar but \emph{splits nothing} --- and with the
MLP removed ($M(\ell) \equiv 0$) the sector cocycles are products of
scalar multiples of the identity, whose dichotomy spectrum is a single
point: no saddle, at any depth profile $c_\ell$, any temperature, any
$W_Q^{(\ell)}, W_K^{(\ell)}$. This is consistent with --- indeed it is
the linearised, non-autonomous shadow of --- the mean-field
literature's own headline: attention-only flows cluster universally,
and the feed-forward layer is precisely the term their methods do not
yet cover~\citep{geshkovski2023clusters}.} The conclusion for us is
structural: \emph{whatever splits the dichotomy spectrum --- whatever
selects which feature directions persist --- is the feature-space
machinery $M(\ell)$ (and the anisotropy of $V(\ell)$), never the
mixing itself.}

\textbf{A concrete instance: stable looped Transformers.} The
dichotomy vocabulary is not only descriptive; half of it can be
\emph{enforced}. In stable looped
architectures~\citep{geiping2025scaling,prairie2026parcae}, the
recurrent update is parameterised in discrete-time linear time invariant (LTI) form,
$h_{t+1} = \bar A\,h_t + \bar B\,e + \overline{\mathcal R}(h_t, e)$,
with $\bar A = \exp(-\exp(\cdot))$ diagonal --- so that, in our
convention, the layer factor $I + A$ equals $\bar A$ and
$\mathrm{spec}(A) \subset (-1, 0)$ for \emph{any} value of the learned
parameters. For the LTI skeleton (the linearisation that drops
$\overline{\mathcal R}$) this makes the \emph{stable half} of (A4)
structural rather than hypothetical: $r_- = \rho(\bar A) < 1$ by
parameterisation, $C = 1$ (diagonal, weight-tied --- no non-normality,
no depth-incoherence, no transient regime), and the alignment
condition (A7) of \S\ref{sec:coupling:ident} holds exactly, since the
dichotomy projectors are coordinate projections and the injection gain
is diagonal. What the parameterisation deliberately \emph{forbids} is
the other half: every direction is contracting, the persistent bundle
is trivial ($V_\parallel = \{0\}$), and the distinguished trajectory
$h^\star = (I - \bar A)^{-1}(\bar B\,e + \bar\Delta)$ --- the Green
operator of \S\ref{sec:fixed} reduced to its forward branch --- is a
global attractor. All persistent content is therefore carried by the
\emph{forcing}: the looped architecture is the extreme point of this
paper's design axis at which the second coupling carries everything
and the fast cocycle is built to forget. The standard Transformer
($\varrho = 0$) is the opposite extreme, where persistence must live
in the cocycle's own $V_\parallel$. Two empirical observations
~\citep{prairie2026parcae} read naturally in this vocabulary:
training diverges exactly when the learned spectral radius crosses the
dichotomy boundary ($\rho(\bar A) \to 1^{+}$) --- the margin $m_-$ is
the operative stability frontier of real training, not a modelling
convenience --- and test-time quality improves with loop count along a
saturating exponential, the visible shape of the contraction
$\|h_t - h^\star\| \lesssim r_-^{\,t}$ toward the distinguished
trajectory. The saddle half of (A4) --- a nontrivial persistent bundle
inside the cocycle --- remains, for trained standard stacks, the point
where the network enters the analysis as an empirical object.\footnote{We
record the purely stable regime as a citable specialisation:

\textbf{(A4$^\circ$)} \emph{(Purely stable specialisation.) The
tangent cocycle admits an exponential dichotomy with
$V_\parallel = \{0\}$: $\Sigma_{\mathrm{dich}} \subset (0,1)$, the
Green operator of \S\ref{sec:fixed} reduces to its forward branch with
$\|\mathcal G\| \le C/m_-$, the distinguished trajectory is a global
attractor, and all persistence is carried by the forcing. Every
statement of \S\ref{sec:fixed} holds with the $m_+$-terms deleted.}}

The saddle also fixes the right reading of the word ``reference'':
the homogeneous trajectory is a solution, not an attractor --- states
approach it along $V_\perp(\ell)$ and leave along $V_\parallel(\ell)$,
and it is exactly the $V_\parallel$ departure that carries
token-specific content to the output layer. ``Saturation'' names
proximity to the reference trajectory, not a claim of global
convergence to it.

\section{General setting: the second coupling}\label{sec:settings}

\textbf{Two clocks, two couplings.} The forward pass carries a
\emph{depth clock}: each layer is one step of the fast dynamics,
advancing $\ell$ by $\Delta\ell = 1$ across the stack. The
architectures this paper is about add a second, coarser clock, and the
cleanest way to introduce it is through the structure of the token
axis.

\begin{itemize}
\item[\textbf{(C1)}] \emph{(Fast coupling: block-local attention.) Partition
the $T$ positions into $K = \lceil T/P\rceil$ contiguous blocks of
length $P$. Within a forward pass, causal attention is confined to the
current block: token $t$ in block $k$ attends to positions $s \le t$
in block $k$.}
\end{itemize}

The fast coupling is \emph{block-local}. This is how hierarchical
designs bound the cost of the fast path, and it makes the fast
coupling's reach finite \emph{by construction}: information cannot
cross a block boundary through attention at all. Whatever crosses
blocks must go through the second coupling:

\begin{itemize}
\item[\textbf{(C2)}] \emph{(Slow coupling: compression and gated injection.)
A causal compression operator $\Pi$ maps completed blocks to per-block
summaries $y_k = \Pi(x_{(k-1)P+1}, \ldots, x_{kP})$; a gate injects
$\varrho\,\psi(y_{k-1})$ --- the same vector for every token of block
$k$ --- additively into the fast stream. Here $\varrho \in \R$ is a
scalar amplitude and $\psi : \R^d \to \R^d$ is a $C^1$ injection map;
at $\varrho = 0$ the model is exactly a block-local standard
Transformer.}
\end{itemize}

We refer to this model class --- block-local fast coupling,
compression-and-gate slow coupling --- as (C1)--(C2); the full-context
causal Transformer is its leaky variant, in which attention places
$O(\varepsilon_{\mathrm{leak}})$ mass outside the current block, and
we return to it in \S\ref{sec:discussion}. The held-summary injection
has a live architectural instance: the input injection $\bar B\,e$ of
stable looped Transformers~\citep{prairie2026parcae}, where the
summary is a learned encoder output held for the entire pass ---
$L_r = \infty$ in the notation introduced below.

At the token level, the injection is added \emph{alongside} the
renormaliser:
\begin{equation}\label{eq:fast_update}
  x^{(\ell)}_{t} \;=\; x_t^{(\ell-1)} + \mathcal{N}^{(\ell-1)}_t\bigl(x^{(\ell-1)}\bigr)
  \;+\; \varrho\,\psi\bigl(\hat y^{(\ell-1)}_{k(t)-1}\bigr),
  \qquad k(t) = \lceil t/P \rceil,
\end{equation}
where $\hat y_{k-1}$ is the \emph{held} summary of the most recent
completed block.\footnote{We write $\hat y_{k-1}$ for the held
(sample-and-hold) value of the previous block's compressed summary ---
the hat denotes \emph{holding}, not statistical estimation. Throughout
the paper, $\hat y$ denotes a value held constant until the next
refresh, i.e.\ across $L_r$ consecutive layers.} Writing the injection
as an added term keeps the forced linearisation affine in $\psi$,
which \S\ref{sec:fixed} builds on; the alternative --- injecting into
$\mathcal S$'s argument, which would force the slow signal onto
$\Mcalnorm$ and let the softmax gate it nonlinearly --- is an
architectural variant outside the present scope.

\textbf{The timescale ratio.} The held state $\hat y$ is refreshed
once every $L_r$ layers, and we call
$\varepsilon := 1/L_r \in (0,1]$ the \emph{timescale ratio} of the
architecture. Two design parameters, two different jobs: the block
length $P$ controls \emph{spatial} compression, the refresh period
$L_r$ controls \emph{temporal} cadence. They are logically
independent, and the canonical hierarchical choice couples them,
$L_r = P$ --- one slow tick per block of new tokens, in a streaming
reading --- giving $\varepsilon = 1/P$. The theory below needs only
that $\varepsilon$ is small. In continuous depth-time, idealising the
sample-and-hold refresh as a slow relaxation,\footnote{The
architecture refreshes $\hat y$ by assignment every $L_r$ layers. The
relaxation ODE $\frac{d\hat y}{d\ell} = \varepsilon(\Pi(x) - \hat y)$
is the standard continuous idealisation of this sampled-data loop ---
on the slow timescale $\tau = \varepsilon\ell$, the discrete refresh
and the continuous relaxation agree to first order in $\varepsilon$,
and the $O(\varepsilon^2)$ discrepancy is absorbed into the reduction
remainders of \S\ref{sec:fixed}.} the closed-loop system in the
linearised regime is
\begin{equation}\label{eq:ode_full}
  \frac{dx}{d\ell} \;=\; A(\ell)\,x + \varrho\,\psi(\hat y),
  \qquad
  \frac{d\hat y}{d\ell} \;=\; \varepsilon\, g_\Pi(x, \hat y).
\end{equation}
The change of variables $\tau := \varepsilon\ell$ is the standard
singular-perturbation move: it stretches the slow variable's clock so
that, in the new time $\tau$, $\hat y$ evolves at $O(1)$ while $x$'s
evolution carries an explicit $\varepsilon$ in front of its
derivative. This is the canonical Tikhonov form for fast--slow
systems, and the form on which invariant-manifold theory operates
directly. Concretely, the system becomes
\begin{equation}\label{eq:ode_slowtime}
  \varepsilon\,\frac{dx}{d\tau} \;=\; A(\tau/\varepsilon)\,x + \varrho\,\psi(\hat y),
  \qquad
  \frac{d\hat y}{d\tau} \;=\; g_\Pi(x, \hat y),
\end{equation}
with $x$ fast and $\hat y$
slow~\citep{fenichel1979geometric,hairer1996solving}. The first
equation has $\varepsilon$ multiplying the derivative of the fast
variable --- the algebraic signature of fast dynamics slaved to a slow
clock. The second has an $O(1)$ derivative --- the slow variable's
free evolution. Equivalently: per slow tick $\Delta\tau = 1$, the fast
variable takes $\Delta\tau/\varepsilon = L_r$ layers to catch up, and
the slow variable makes one $\Pi$-refresh. All subsequent analysis
operates on this slow-time form.

\textbf{Which directions are slow, and why the reduction is
legitimate.} We reduce the full depth flow on $\R^{T\times d}$
(dimension $Td$) to a slow flow on the blockwise-homogeneous manifold
$\mathcal H_P \subset \R^{T\times d}$ (dimension $Kd$) by projecting
out the \emph{transverse} directions --- the within-block
token-inequalities.\footnote{Geometrically, the slow manifold
$\mathcal H_P$ is a submanifold of the full state space. Directions
along it are \emph{tangent} (perturbations that stay
within-block-homogeneous, shifting each block's common value
independently); directions pointing away from it are \emph{transverse}
(perturbations that break within-block homogeneity).} The reduction of
\eqref{eq:ode_slowtime} onto a slow manifold requires the transverse
fast dynamics to contract, and the sector factorisation says exactly
which directions those are --- uniformly in depth.\footnote{The sector
factorisation of \Cref{thm:factor} identifies the tangent directions
with sector $1$ and the transverse directions with sectors $j \ge 2$
--- so ``tangent'' and ``transverse'' in the language of invariant
manifolds coincide with ``sector $1$'' and ``sector $j\ge2$'' in the
language of \Cref{thm:factor}.} Consider the blockwise-homogeneous
manifold
\[
\mathcal H_P \;:=\; \{X : x_t = x_s \text{ whenever } t, s
\text{ lie in the same block}\},
\]
which is exactly invariant under (C1)--(C2) at every layer:
block-local attention preserves within-block homogeneity by
\Cref{lem:homog}, and the injection is constant within each block.
Transverse to $\mathcal H_P$ sit the within-block fluctuation sectors
of \Cref{thm:factor}, with sector cocycles $\Phi_j$, $\lambda_j < 1$
--- and the transverse hypothesis is that they contract, uniformly and
with room to spare over the tangent growth:

\begin{itemize}
\item[\textbf{(A5)}] \emph{Each fluctuation cocycle satisfies
$\|\Phi_j(\ell,s)\| \le C_f\, r_f^{\,\ell-s}$ for all $j \ge 2$, with
$C_f \ge 1$ and $r_f < 1$, and the transverse contraction dominates
the tangent growth with a uniform gap: $r_f\, r_+ \le \theta$ for some
$\theta < 1$.}
\end{itemize}

The first condition says the fluctuations contract; the second says
they contract faster than the persistent tangent directions grow ---
the normal-hyperbolicity gap that the reduction needs.

Under (A5), $\mathcal H_P$ is a normally hyperbolic invariant manifold
of the non-autonomous flow with \emph{attracting} transverse
directions, and the invariant-manifold theory of non-autonomous
systems~\citep{kloeden2011nonautonomous,fenichel1979geometric} applies
with no caveat about saddles.\footnote{\label{fn:fenichel}In the
autonomous case this is Fenichel theory: for
$\varepsilon\dot x = f(x,y)$, $\dot y = g(x,y)$ with normally
hyperbolic critical manifold, coordinates $(a,b,v)$ ---
transverse-stable, transverse-unstable, slow --- reduce the flow to
$\dot a = \Lambda a$, $\dot b = \Gamma b$,
$\dot v = \varepsilon h(v,\varepsilon) + O(|a||b|)$, the slow manifold
$\{a=b=0\}$ persisting for small $\varepsilon$. Non-autonomously the
same programme runs with the dichotomy projectors in place of spectral
projectors and pullback attraction in place of
attraction~\citep{kloeden2011nonautonomous}; under (A5) the
transverse-unstable block $b$ is absent, which is why the reduction
holds for generic initial data.}

Geometric singular perturbation theory then applies directly: there
exists a slow manifold within $O(\varepsilon)$ of $\mathcal H_P$, on
which a reduced flow governs the long-time dynamics, and generic
initial data converges to the slow manifold at the transverse rate
$r_f$. The persistent bundle $V_\parallel(\ell)$ is \emph{tangent} to
$\mathcal H_P$, not transverse to it, so it belongs to the reduced
system and obstructs nothing. The resulting hierarchy of rates is the
paper's fast--slow structure in final form: within-block fluctuations
contract fastest (rate $r_f$, transverse --- they wash out); the
decaying bundle $V_\perp$ contracts next (rate $r_-$, tangent); the
persistent bundle $V_\parallel$ grows or persists (rate $r_+$,
tangent) and carries the content; and the held summary $\hat y$ moves
slowest of all, at $O(\varepsilon)$ per layer, bridging blocks. Fast
means transverse and contracting; slow means tangent and persistent,
or on the block clock.

\begin{center}
\footnotesize
\begin{tabular}{llll}
\toprule
Rate & Direction & Type & Physical meaning \\
\midrule
$r_f$ & transverse, within-block & fast contraction & block-internal token-inequality washed out \\
$r_-$ & tangent $V_\perp$ in $\mathcal H_P$ & medium decay & decaying mode in the slow flow \\
$r_+$ & tangent $V_\parallel$ in $\mathcal H_P$ & growth / persistent & persistent mode carrying content \\
$O(\varepsilon)$/layer & $\hat y$, block-clock & slowest & cross-block slow variable \\
\bottomrule
\end{tabular}
\end{center}

\begin{remark}[Choices of $\Pi$]\label{rem:pi}
The compression operator accommodates a range of designs, in order of
increasing aggressiveness: block-mean,
$\Pi_{\mathrm{mean}}(x)_k = P^{-1}\sum_j x_{(k-1)P+j}$ (linear, every
token contributes); stride, $\Pi_{\mathrm{stride}}(x)_k = x_{kP}$
(deterministic subsampling); content-addressed sparse selection, where
a small learned controller picks representative positions per
block~\citep{csa2026,msa2026}; and multi-level composition
$\Pi = \Pi^{(J)}\circ\cdots\circ\Pi^{(1)}$, as in hierarchical
recurrent designs~\citep{HRM2025,geiping2025scaling}. In the
mean-field reading the block is an empirical measure over its tokens
and a compression is a statistic of that measure --- the block mean is
its first moment~\citep{geshkovski2023clusters} --- which is the
geometric face of the statistical fact, proved in
\Cref{prop:suffstat}, that the block mean is the \emph{sufficient}
compression for exponential-family block structure. On $\mathcal H_P$
all within-block tokens agree and every reasonable $\Pi$ returns the
common value, so the reduced theory is $\Pi$-agnostic; off the
manifold, results that use the smooth geometry of the slow manifold
need a $C^1$ compression (block-mean qualifies; hard selectors do
not, cf.\ \S\ref{sec:intro:renorm}).
\end{remark}

\section{Couplings: Architectural Parameters as Theory-Space Coordinates}\label{sec:coupling}

\textbf{Why two couplings?} Under (C1) the fast coupling's receptive
field is finite by construction: attention cannot cross a block
boundary, and the second coupling is the only cross-block channel. Its
load-bearing role is conditional twice over --- on the
\emph{architecture} side it matters once the fast path has exhausted
within-block mixing (the depth-saturation regime, quantified by
\Cref{thm:duality}), and on the \emph{data} side it matters only if
consecutive blocks share structure that block-local prediction cannot
recover (\S\ref{sec:fixed:load}).

A \emph{coupling}, in this framework, is anything that fixes one of
the maps appearing in the fast--slow system \eqref{eq:ode_full}: the
timescale ratio $\varepsilon$, the compression $\Pi$, the norm scaling
$\mathcal S$, the gate amplitude $\varrho$, and the injection $\psi$
are the \emph{framework couplings} --- interpretable architectural
hyperparameters with direct dynamical meaning, fixed before training
--- while the trained weights $W$, including $W_Q^{(\ell)}$,
$W_K^{(\ell)}$, $W_V^{(\ell)}$; $W_1^{(\ell)}$, $b_1^{(\ell)}$,
$W_2^{(\ell)}$, $b_2^{(\ell)}$ in \eqref{eq:layer_evolution}; the
gains of $\mathcal S$; and, if learned, the parameters of $\psi$, are
the training-time variable that \emph{realises} the generator family.
The realisation goes only through the combinations \eqref{eq:MV-def}:
$A(\ell) = M(\ell) + V(\ell)$.

Two consequences are worth recording before either is used. First,
many $W$ realise one generator family: the specific values of
$W_1^{(\ell)}, W_2^{(\ell)}$ are not uniquely determined by
$\{A(\ell)\}$.\footnote{For example, for elementwise $\phi$ any
permutation matrix $G$ (acting with $W_1 \mapsto GW_1$,
$b_1 \mapsto Gb_1$, $W_2 \mapsto W_2G^{-1}$) leaves $M(\ell)$
unchanged; for positively homogeneous $\phi$ (the ReLU family) the
same holds for any positive diagonal $G$. Similar redundancies exist
for the other weight components; the map $W \mapsto \{A(\ell)\}$ is
many-to-one.} This means the asymptotic observables (cocycle,
dichotomy spectrum, distinguished trajectory) depend only on the
effective degrees of freedom $\{A(\ell)\}$, not on the specific $W$
realising them --- and is exactly why we keep $W$ (the raw
training-time coordinate) and the cocycle (the asymptotic observable)
as separate objects in the framework.

Second, $W_Q^{(\ell)}$ and $W_K^{(\ell)}$ enter no $A(\ell)$ at all:
by \Cref{thm:factor}(1) the kernel along the reference is universal at
every layer, so the query--key weights touch the linearised dynamics
only through the fluctuation sectors' state dependence, at second
order in the inhomogeneity. Both consequences return, sharpened into
identification statements, in \S\ref{sec:coupling:ident}.

\subsection{Depth $\leftrightarrow$ coarse-graining: the duality}\label{sec:coupling:duality}

Depth $\ell$ and the coarse-grain index $k$ are \emph{dual} in the
fast--slow sense: depth advances the fast clock, coarse-graining
defines the slow variable, and the two are tied through
$\varepsilon$. The precise statement has three parts --- on the
invariant manifold coarse-graining loses nothing; off it, the
discrepancy decays at a computable rate; and beyond the saturation
depth, running the layers \emph{is} running the coarse variable.

\begin{theorem}[Depth--coarse-grain duality]\label{thm:duality}
Assume (A1)--(A5), the model class (C1)--(C2), and block-mean $\Pi$.
Let $\Phi^{\mathrm{full}}_\ell$ denote the linearised depth flow of
\eqref{eq:ode_full} over layers $0,\ldots,\ell$ at fixed $\hat y$.
Then:
\begin{enumerate}
\item \emph{(Exactness on the manifold.)} $\mathcal H_P$ is invariant
  under $\Phi^{\mathrm{full}}_\ell$ for every $\ell$; on it, $\Pi$ is
  a bijection onto the block coordinates $(y_1, \ldots, y_K)$, and the
  depth flow acts diagonally --- each $y_k$ evolves under
  $\dot y_k = A(\ell)\,y_k + \varrho\,\psi(\hat y_{k-1})$. On the
  manifold, depth dynamics and coarse-grained dynamics are the same
  object.
\item \emph{(Decay off the manifold.)} For arbitrary $\delta X$,
  decomposed as $\delta X^{\mathcal H} + \delta X^{\perp}$ along
  $\mathcal H_P$ and the fluctuation sectors,
  \[
  \bigl\|\Phi^{\mathrm{full}}_\ell(\delta X) - \Phi^{\mathrm{full}}_\ell(\delta X^{\mathcal H})\bigr\|
  \;\le\; C'\,\kappa_\alpha\,C_f\, r_f^{\,\ell}\,\|\delta X^{\perp}\|,
  \]
  with $\kappa_\alpha$ the (fixed, depth-independent) eigenbasis
  condition number of $\alpha^\star$ and $C'$ an absolute constant.
\item \emph{(Duality.)} For any tolerance
  $\epsilon_{\mathrm{tol}} > 0$ and
  $\ell \ge \tau_{\mathrm{sat}}(\epsilon_{\mathrm{tol}}) :=
  \log(C'\kappa_\alpha C_f/\epsilon_{\mathrm{tol}})/\log(1/r_f)$,
  \begin{equation}\label{eq:dual_formula}
  \Phi^{\mathrm{full}}_\ell \;=\; \iota \circ \Phi(\ell,0) \circ \tilde\Pi \;+\; O(\epsilon_{\mathrm{tol}}),
  \end{equation}
  where $\iota$ injects block states back into $\mathcal H_P$,
  $\tilde\Pi$ is the sector-$1$ spectral compression and is the tangent cocycle
  of \Cref{thm:factor}:\footnote{The sector-$1$ spectral compression $\tilde\Pi$ 
  is equal to block-mean $\Pi$ on $\mathcal H_P$, and everywhere in the doubly
  stochastic case (\Cref{rem:sink}). For the causal kernel, the
  block-mean form of the factorisation holds through an intermediate
  depth, with error $\theta^{\ell/2}$ under the gap condition of (A5)
  (\Cref{rem:duality-blockmean}).} past the saturation depth, the depth flow
  \emph{factors through} the coarse-grained variable. Moreover the
  tangent cocycle over one slow
  tick, $\Phi(\ell+L_r,\ell)$, preserves the dichotomy bundles
  $V_\parallel, V_\perp$, so one slow-clock step acts on the coarse
  variable with the same persistent/decaying structure as $L_r$ fast
  steps.
\end{enumerate}
\end{theorem}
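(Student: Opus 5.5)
The proof proceeds part by part, working throughout in the oblique eigenbasis of the frozen block-local kernel. Under (C1) the frozen kernel is block-diagonal, $\alpha^\star_{\mathrm{block}} = I_K \otimes \alpha^\star_P$, with each diagonal block the causal-uniform kernel of \Cref{thm:factor} on a window of length $P$. The argument of \Cref{thm:factor} therefore applies to each block separately. The position eigenvectors are $e_k \otimes w_j$ for $k \le K$ and $j \le P$. The $j=1$ vectors $e_k\otimes \mathbf 1$ span the position part of $\mathcal H_P$. The sector cocycles are again $\Phi_j$, now with multiplicity $K$.

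\textbf{Part 1.} Exact invariance of $\mathcal H_P$ is the linear shadow of \Cref{lem:homog} applied blockwise, together with the fact that the injection $\varrho\psi(\hat y_{k-1})$ is constant within block $k$. So $\Phi^{\mathrm{full}}_\ell$ (at fixed $\hat y$) maps $\mathcal H_P$ to itself. On $\mathcal H_P$ every token of block $k$ equals $y_k$, so the block mean returns $y_k$ and $\Pi$ is a bijection onto $(y_1,\dots,y_K)$. Restricting $J(\ell)$ to $(e_k\otimes\mathbf 1)u^\top$ gives $(I+M(\ell)+V(\ell))u$, by part (3) of \Cref{thm:factor}. Adding the block-constant forcing then yields the stated equation for each $y_k$. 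Block-locality means no term couples different $k$, so the flow acts diagonally in $k$.

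\textbf{Part 2.} Linearity at fixed $\hat y$ is the key point. The forcing cancels in the difference, and
\[
\Phi^{\mathrm{full}}_\ell(\delta X)-\Phi^{\mathrm{full}}_\ell(\delta X^{\mathcal H})=\Phi^{\mathrm{full}}_\ell(\delta X^\perp),
\]
where $\Phi^{\mathrm{full}}_\ell$ now denotes the homogeneous linear part. I expand $\delta X^\perp=\sum_k\sum_{j\ge2}(e_k\otimes w_j)u_{kj}^\top$. Each piece evolves under $\Phi_j(\ell,0)$ independently, so (A5) bounds each coordinate by $C_f r_f^\ell\|u_{kj}\|$. Returning from eigen-coordinates to the standard basis costs $\kappa_\alpha$, the condition number of the $P\times P$ eigenbasis, which equals that of the block-diagonal one. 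The triangle inequality over sectors, or a Frobenius-norm equivalence, supplies the absolute constant $C'$.

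\textbf{Part 3.} Set $\tilde\Pi$ to be the spectral projection onto the $j=1$ coordinates, i.e.\ the coefficient of $e_k\otimes\mathbf 1$ in the eigen-expansion. Then $\delta X^{\mathcal H}=\iota\tilde\Pi\,\delta X$. Part 1 gives $\Phi^{\mathrm{full}}_\ell\iota=\iota\,\Phi(\ell,0)$ blockwise. Combining this with Part 2 gives
\[
\|\Phi^{\mathrm{full}}_\ell-\iota\Phi(\ell,0)\tilde\Pi\|\le C'\kappa_\alpha C_f r_f^\ell,
\]
and this is at most $\epsilon_{\mathrm{tol}}$ exactly when $\ell\ge\tau_{\mathrm{sat}}$. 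I must also check that $\tilde\Pi$ has norm $O(\kappa_\alpha)$, so that the error is measured relative to $\|\delta X\|$; this may force me to absorb a further $\kappa_\alpha$ into $C'$. The bundle-preservation claim follows from cocycle invariance in (A4): $P_\pm(\ell+L_r)\Phi(\ell+L_r,\ell)=\Phi(\ell+L_r,\ell)P_\pm(\ell)$, so $V_\parallel(\ell)$ and $V_\perp(\ell)$ are carried onto $V_\parallel(\ell+L_r)$ and $V_\perp(\ell+L_r)$. The dichotomy estimates specialise over $L_r$ steps to contraction factor $Cr_-^{L_r}$ and expansion factor at least $C^{-1}r_+^{L_r}$.

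\textbf{Main obstacle.} The hardest step is the bookkeeping of Part 2. I need to verify that the decomposition along $\mathcal H_P$ and the fluctuation sectors is the oblique spectral one, not an orthogonal one, and that the constant is genuinely depth-independent. Depth-independence should follow because $\alpha^\star$ is layer-independent, by \Cref{thm:factor}(1). I would state explicitly that $\tilde\Pi$ differs from the block mean off $\mathcal H_P$, since for the causal kernel the Perron left vector is $e_1$ rather than uniform. I would also note that $\tilde\Pi$ coincides with the block mean on $\mathcal H_P$ and in the doubly stochastic case, as the footnote indicates.
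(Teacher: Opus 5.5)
Your proposal is correct and follows essentially the same route as the paper: \Cref{lem:homog} applied blockwise plus the block-constant injection for Part 1, the sectorwise evolution under $\Phi_j$ with (A5) and the $\kappa_\alpha$ change-of-basis cost for Part 2, the biorthogonal sector-$1$ coefficient $\tilde\Pi$ (with $\tilde\Pi\,\delta X^\perp = 0$ and $\|\tilde\Pi\|\le\kappa_\alpha$) for Part 3, and (A4) bundle invariance for the slow-tick claim. The only bookkeeping difference is that the paper measures the $O(\epsilon_{\mathrm{tol}})$ error relative to $\|\delta X^\perp\|$ rather than as an operator-norm bound in $\|\delta X\|$; this avoids your proposed absorption of an extra $\kappa_\alpha$ into $C'$, which would make $C'$ non-absolute, so the factor $\|I-\iota\tilde\Pi\|$ belongs in the $O(\cdot)$ instead.
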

\begin{proof}
See \Cref{app:partII} (\Cref{pf:duality}); part 1 combines
\Cref{lem:homog} with block-locality and the within-block constancy of
the injection, part 2 is \Cref{thm:factor} plus (A5), and part 3
composes the two.
\end{proof}

The composition $\iota \circ \Phi(\ell, 0) \circ \Pi$ is a
\emph{three-step pipeline}: $\Pi$ compresses the full state
$\R^{T\times d}$ to the block coordinates
$(y_1, \ldots, y_K) \in \R^{K\times d}$; $\Phi(\ell, 0)$ advances each
block independently under the tangent cocycle (per-block diagonal
action --- \Cref{thm:factor}(2) applied blockwise); $\iota$ re-injects
the block coordinates back into the blockwise-homogeneous form
$\mathcal H_P$. The formula \eqref{eq:dual_formula} says that this
three-step pipeline agrees with the actual depth flow
$\Phi^{\mathrm{full}}_\ell$ on the full state to within
$O(\epsilon_{\mathrm{tol}})$. Equivalently: past the saturation depth,
\emph{depth dynamics on $\R^{T\times d}$ and block-cocycle dynamics on
$\R^{K\times d}$ are two equivalent descriptions of the same
evolution}, mediated by compression $\Pi$ and re-injection $\iota$.
The dimensional reduction $T \to K$ is the practical gain: the
dichotomy analysis of \S\ref{sec:fixed} operates on the $K$-block
tangent cocycle, not on the full $T$-token flow.

The duality table summarises the correspondence; each row is an
instance of ``depth past saturation = coarse-graining'':

\begin{center}
\footnotesize
\begin{tabular}{lll}
\toprule
Depth $\ell$ (fast clock) & $\leftrightarrow$ & Coarse index $k$ (slow clock) \\
\midrule
One Euler step of $\dot x = A(\ell)x + \varrho\psi(\hat y)$ & $\leftrightarrow$ & One refresh $\hat y \leftarrow \Pi(x)$ per $L_r$ layers \\
Within-block mixing (sectors $\lambda_j < 1$) & $\leftrightarrow$ & Fluctuations averaged out by $\Pi$ \\
Depth saturation ($\ell \ge \tau_{\mathrm{sat}}$) & $\leftrightarrow$ & Exactness of coarse-graining \\
Persistent bundle $V_\parallel(\ell)$ & $\leftrightarrow$ & Content carried across blocks by $\hat y$ \\
Step $\Delta\ell = 1$ & $\leftrightarrow$ & Step $\Delta\tau = \varepsilon^{-1}\Delta\ell$ \\
\bottomrule
\end{tabular}
\end{center}

\Cref{thm:duality} is an \emph{architecture-level} statement: past
saturation, the depth flow on the full state factors through the block
summary, regardless of what the data looks like. This is what makes
the slow path \emph{available} --- the architecture \emph{can} route
information across blocks through the gate. But availability is not
utility: \emph{whether} this routed information is informative about
the next block depends on whether the data carries cross-block latent
structure. The duality quantifies the architecture's reach; the
data-side analysis quantifies whether that reach is on a worthwhile
target. Both are required for the full picture --- the
architecture-side result is the input to the data-side discussion, and
the two are deliberately separated.

\subsection{The training flow on $W$-space}\label{sec:coupling:flow}

The weights $W$ live on the depth axis: each layer applies
$W^{(\ell)}$, advancing one Euler step. Training is a flow on
$W$-space: each step moves $W$ along $-\nabla_W\Lcal$, where $\Lcal$
is the full-batch training loss on a fixed data distribution --- the
standard gradient-flow idealisation of full-batch gradient descent.
The flow is autonomous in $W$; stochastic minibatches, adaptive
optimisers and schedules break exact autonomy, and we work in the
idealisation and flag it as such. The trained $W^\star$ realises,
through \eqref{eq:MV-def}, the generator family and hence the cocycle:
$W$ is the raw coordinate whose training-time convergence learns the
dynamics, the cocycle the asymptotic observable that the
identification below reads.

The framework uses two flows that we will keep carefully separate.
\emph{Training flow} (in parameter space): the gradient flow
$\dot W = -\nabla_W \Lcal(W)$ converges to a trained $W^\star$.
\emph{Depth flow} (in token state space, after linearisation):
$\delta X^{(\ell)} = \Phi^{\mathrm{full}}_\ell\,\delta X^{(0)}$, with
growth governed by the dichotomy spectrum. The two flows are linked by
a chain:
\[
W^\star \;\xrightarrow{\;W \mapsto \{A(\ell)\}\;}\; \{A(\ell)\} \;
\xrightarrow{\;\Phi(\ell,s) = \prod(I+A(r))\;}\; \Phi(\ell, s) \;
\xrightarrow{\;\text{read by \S\ref{sec:coupling:ident}}\;}\; \text{asymptotic observables}.
\]
Training produces $W^\star$; $W^\star$ realises a generator family;
the generator family determines the cocycle; the identification
theorems below read the cocycle. The vocabulary of fast-mode and
slow-mode perturbations is built to express \emph{which
$W$-directions} in the first flow's state space move \emph{which
asymptotic observables} in the second flow's analysis. The dichotomy
bundle decomposition we introduce next is the bridge between the two
flows.

A perturbation $\delta W$ induces a family
$\delta A(\ell) := \nabla_W A(\ell)(W^\star)[\delta W]$, and the
dichotomy projectors of (A4) decompose each member
as\footnote{The dichotomy projectors $P_\pm(\ell)$ split
$\R^d = V_\parallel(\ell) \oplus V_\perp(\ell)$ at each depth, so any
matrix $\delta A(\ell)$ has a four-block decomposition
\[
\delta A(\ell) = \begin{pmatrix} P_+ \delta A(\ell) P_+ & P_+ \delta A(\ell) P_- \\ P_- \delta A(\ell) P_+ & P_- \delta A(\ell) P_- \end{pmatrix}.
\]
We group these as a \emph{persistent block} ($P_+ \to P_+$), a
\emph{decaying block} ($P_- \to P_-$), and two \emph{skew blocks}. The
dichotomy invariance $P_\pm(\ell)\Phi(\ell,s) = \Phi(\ell,s)P_\pm(s)$
means the persistent block perturbs the dynamics inside $V_\parallel$
(rate $r_+$), the decaying block perturbs $V_\perp$ (rate $r_-$), and
the skew blocks measure cross-bundle leakage.}
\[
\delta A(\ell) \;=\; \underbrace{P_+(\ell{+}1)\,\delta A(\ell)\,P_+(\ell)}_{\text{persistent block}}
\;+\; \underbrace{P_+\,\delta A\,P_- + P_-\,\delta A\,P_+}_{\text{skew blocks}}
\;+\; \underbrace{P_-(\ell{+}1)\,\delta A(\ell)\,P_-(\ell)}_{\text{decaying block}},
\]
and we call $\delta W$ a \emph{fast-mode perturbation} if
$\delta A(\ell) = P_-(\ell+1)\,\delta A(\ell)$ for every $\ell$ (its
range lies in the decaying bundle) and a \emph{slow-mode perturbation}
if $\delta A(\ell) = P_+(\ell+1)\,\delta A(\ell)\,P_+(\ell)$.

The hypothesis this vocabulary is built for is the following.

\begin{itemize}
\item[\textbf{(A6)}] \emph{Training converges to a $W^\star$ whose late-phase
loss curvature lies in the slow-mode directions; the fast-mode
directions are exhausted early.}
\end{itemize}

(A6) is the bridge between the two flows: the identification theorems
below state that fast-mode perturbations are invisible to the
asymptotic observables, and (A6) states that the trained network's
residual curvature lives exactly in the visible directions --- the
slow-mode directions carry both the observables' sensitivity and the
optimiser's remaining signal, while the invisible (fast-mode)
directions have been spent. Its empirical shadow is the two-phase
timeline of deep learning --- local, abundant patterns first,
longer-range structure
later~\citep{rahaman2019spectral,kalimeris2019sgd,olsson2022induction}
--- and the slow path is the submodule whose capacity is aimed at the
second phase. (A6) is a hypothesis about what training converges to,
not a consequence of loss minimisation; it is testable exactly insofar
as the asymptotic observables are measurable, which is what the next
subsection establishes.

\subsubsection{Identification: what the structured limit can and cannot see}\label{sec:coupling:ident}

The statistical question underneath the architecture is an
identification question: which directions in parameter space move the
observables of the structured limit, and which are quotiented out?
Two observables are available at the distinguished trajectory: the
\emph{trajectory displacement} (where the forced bounded solution
$u^\star_\varrho(\ell)$ sits relative to the unforced reference
$u \equiv 0$; the geometric channel, computed in \S\ref{sec:fixed}),
and the \emph{transverse decay profile}, which we call the
\emph{depth-saturation channel}: the rates $r_f, r_-$ together
characterise when and how quickly the fast path has nothing left to
move, and hence how the depth flow approaches its asymptotic regime.

\begin{itemize}
\item[\textbf{(A7)}] \emph{The injection respects the splitting: the
linearisation $D\psi$ evaluated along the distinguished trajectory
$u^\star_\varrho$ commutes with the dichotomy projectors
$P_\pm(\ell)$, i.e.\
$D\psi(u^\star_\varrho(\ell))\,P_\pm(\ell) =
P_\pm(\ell)\,D\psi(u^\star_\varrho(\ell))$.}
\end{itemize}

Geometrically, (A7) says the linearised feedback $\varrho\,D\psi$
preserves the persistent and decaying bundles --- it does not rotate
the dichotomy's bundle structure. It simplifies the proof by ensuring
the perturbation respects the splitting that (A4) already provides,
and it holds in particular when $D\psi$ is diagonal in the dichotomy's
basis, or more generally when $\psi$ is a coordinate-wise map whose
coordinates align with $V_\parallel, V_\perp$ --- exactly the case of
the diagonal injection gains of stable looped architectures
(\S\ref{sec:intro:ode}).

\begin{theorem}[Fast-mode invariance of the trajectory displacement]\label{thm:fastmode}
Assume (A1)--(A5) and the alignment condition (A7). Let
$u^\star_\varrho(\cdot\,;W)$ denote the unique bounded solution of the
forced linear system
$u(\ell+1) = (I + A(\ell))\,u(\ell) + \varrho\,\psi(\hat y)$ at fixed
$\hat y$ (the distinguished trajectory of \Cref{thm:fixed:eq}), and
let $\delta W$ be a fast-mode perturbation of size
$\eta := \sup_\ell \|\delta A(\ell)\|$, with $\eta$ small enough that
the dichotomy persists (by roughness,
$\eta \lesssim \min(m_-, m_+)/C^2$; \Cref{thm:fixed:stability}). Then
the persistent component of the distinguished trajectory is unchanged
to first order,
\[
P_+(\ell)\,u^\star_\varrho(\ell;\, W^\star + \delta W)
\;=\; P_+(\ell)\,u^\star_\varrho(\ell;\, W^\star) + O(\eta^2)
\quad\text{uniformly in } \ell,
\]
and so is the displacement observable
$d_+(\varrho; W) := \sup_\ell\|P_+(\ell)\bigl(u^\star_\varrho(\ell;W) -
u^\star_0(\ell;W)\bigr)\|$.
\end{theorem}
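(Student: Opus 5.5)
The plan is to write the distinguished trajectory through the Green operator of the dichotomy and differentiate it in the perturbation. Under (A4), the forced system $u(\ell+1) = (I+A(\ell))u(\ell) + f(\ell)$, with $f(\ell) := \varrho\,\psi(\hat y)$ held fixed, has the unique bounded solution
\[
u^\star_\varrho(\ell) = (\mathcal G f)(\ell) := \sum_{s<\ell} \Phi(\ell,s{+}1)P_-(s{+}1) f(s) \;-\; \sum_{s\ge \ell} \Phi(\ell,s{+}1)P_+(s{+}1) f(s),
\]
with $\|\mathcal G\| \lesssim C/m_- + C/m_+$. This is the Green operator of \S\ref{sec:fixed}. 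We take it from \Cref{thm:fixed:eq}, or verify it directly from the cocycle invariance of $P_\pm$.

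Replace $A$ by $A + \delta A$. By roughness (\Cref{thm:fixed:stability}), for $\eta \lesssim \min(m_-,m_+)/C^2$ the perturbed cocycle still has a dichotomy, so the perturbed bounded solution $\tilde u$ exists and is unique. The difference $w := \tilde u - u^\star_\varrho$ is the bounded solution of
\[
w(\ell+1) = (I + A(\ell) + \delta A(\ell))\,w(\ell) + \delta A(\ell)\,u^\star_\varrho(\ell).
\]
Solving this by a Neumann series in the unperturbed Green operator gives
\[
w = \mathcal G[\delta A\, u^\star_\varrho] + \mathcal G[\delta A\, w].
\]
Since $\|w\| = O(\eta)$, the second term is $O(\eta^2)$. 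It therefore suffices to show that the first-order term has no persistent component: $P_+(\ell)\,\mathcal G[\delta A\,u^\star_\varrho](\ell) = 0$. This follows from the fast-mode hypothesis $\delta A(s) = P_-(s{+}1)\delta A(s)$, so that the forcing $g(s) := \delta A(s) u^\star_\varrho(s)$ lies in $\operatorname{ran} P_-(s{+}1)$. In the formula for $\mathcal G$, the backward branch contains $P_+(s{+}1) g(s) = 0$ and vanishes. In the forward branch, $P_+(\ell)\Phi(\ell,s{+}1)P_-(s{+}1) = \Phi(\ell,s{+}1)P_+(s{+}1)P_-(s{+}1) = 0$ by cocycle invariance. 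Hence $P_+ w = P_+\mathcal G[\delta A\,w] = O(\eta^2)$ uniformly in $\ell$, since all bounds are sup-norm bounds involving $C$ and the margins.

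One point needs care: the projectors $P_\pm$ in the statement are the unperturbed ones. The perturbed dichotomy has projectors $\tilde P_\pm = P_\pm + O(\eta)$, but we never use them. We only need existence and uniqueness of $\tilde u$ and the a priori bound $\|w\| \le \|\mathcal G\|\,\eta\,\|u^\star_\varrho\|/(1 - \|\mathcal G\|\eta)$. Both follow from the contraction-mapping form of the fixed-point equation for $w$, which requires $\eta\|\mathcal G\| < 1$, consistent with the roughness threshold. To check that the bounded solution of the equation for $w$ is exactly $\tilde u - u^\star_\varrho$, we use uniqueness of bounded solutions of the perturbed system.

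For the displacement observable, apply the same argument at $\varrho$ and at $\varrho = 0$ and subtract. At $\varrho=0$ the forcing vanishes, so $u^\star_0 \equiv 0$ for both $W$. Thus $d_+$ changes by at most $\sup_\ell\|P_+ w\| = O(\eta^2)$. Assumption (A7) is needed only if $\hat y$ or $\psi$ is evaluated along the trajectory, i.e.\ if the forcing is linearised as $\varrho D\psi$ acting on $u$. In that case the effective generator is $A + \varrho D\psi$, and (A7) guarantees that it still commutes with $P_\pm$, so the annihilation step goes through with the same projectors.

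I expect the main obstacle to be exactly this bookkeeping: making sure the unperturbed projectors still kill the first-order term when the generator includes the feedback. I would first handle the fixed-$\hat y$ case as stated, then remark that (A7) covers the linearised feedback.
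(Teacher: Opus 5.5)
Your proposal is correct and follows the paper's proof essentially step for step. The shared steps are the Green-operator representation and the fixed-point identity $w=\mathcal G[\delta A\,u^\star_\varrho]+\mathcal G[\delta A\,w]$ with an $O(\eta^2)$ remainder. Then comes annihilation of the persistent component, on the forward branch by bundle invariance and on the backward branch by the fast-mode range condition. Finally, $d_+$ follows by comparison with $u^\star_0\equiv 0$, and (A7) is used only for the closed-loop variant.

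Your annihilation argument uses nothing about $u^\star_\varrho$ beyond boundedness, so it applies verbatim to $\mathcal G[\delta A\,w]$. That gives $P_+(\ell)\,w(\ell)=0$ exactly rather than $O(\eta^2)$, a sharpening the paper's proof also misses.
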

\begin{proof}
See \Cref{app:partII} (\Cref{pf:fastmode}).
\end{proof}

\emph{Why fast-mode perturbations are invisible.} The perturbation
does change the dynamics --- the generator family moves by
$\delta A(\ell)$ and the cocycle with it --- but the change is
confined where the persistent observable cannot see it.
Differentiating the Green-operator representation of the distinguished
solution (\S\ref{sec:fixed}) produces the correction
$\delta u^\star(\ell) = (\mathcal G\,[\delta A(\cdot)\,
u^\star(\cdot)])(\ell) + O(\eta^2)$, and both branches of the Green
kernel kill its persistent component: on the forward branch, the
correction injected at layer $s$ has range in $V_\perp(s{+}1)$, and
the bundle invariance
$P_\pm(\ell)\Phi(\ell,s) = \Phi(\ell,s)P_\pm(s)$ keeps it inside the
decaying bundle ever after, where $P_+(\ell)$ annihilates it; on the
backward branch, the kernel carries the factor $P_+(s)$, and
$P_+(s)\,\delta A(s) = 0$ is precisely the definition of a fast-mode
perturbation. So $P_+\,\delta u^\star \equiv 0$ at first order, while
the decaying component $P_-\,\delta u^\star$ moves freely --- the
perturbation is absorbed entirely by directions that wash out before
reaching the output. This is the structural reason the identification
analysis partitions parameter space: the persistent component of
$u^\star_\varrho$ is what the output head reads, and it cannot
register which $\delta W$ inside the decaying bundle was added.

\begin{proposition}[Universality of the frozen kernel]\label{prop:kernel}
Along any token-homogeneous reference trajectory the frozen attention
kernel is the causal-uniform kernel of \Cref{thm:factor}(1) at every
layer, independently of the layer's $W_Q, W_K, W_V$ and of the
reference state; the depth-saturation observable therefore depends on
$W$ only through $M(\ell)$ and $V(\ell)$, never through the kernel
itself.
\end{proposition}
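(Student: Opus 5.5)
The plan is to reduce the proposition almost entirely to \Cref{thm:factor}(1), and then to trace how the depth-saturation observable is assembled from the linearisation.

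\emph{Step 1: the frozen kernel is universal.} Along any token-homogeneous reference $X^\star(\ell)=(x^\star(\ell),\ldots,x^\star(\ell))$, every token has the same normalised state $\mathcal S(x^\star(\ell))$. Hence every query $q_t = W_Q^{(\ell)}\mathcal S(x^\star(\ell))$ is the same vector, and every key $k_s = W_K^{(\ell)}\mathcal S(x^\star(\ell))$ is the same vector. The raw scores $u_{ts}$ of \eqref{eq:attn} are therefore constant along each row of the causal window. Softmax \eqref{eq:softmax} of a constant row is the uniform distribution on $\{s\le t\}$, so $\alpha^\star_{ts}=1/t$. This holds whatever $W_Q,W_K$ are, whatever the reference state is, and at every layer. $W_V$ never enters the scores at all. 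Under (C1) the same argument applies blockwise, giving a block-diagonal causal-uniform kernel. This is exactly part (1) of \Cref{thm:factor}; I would simply re-derive it in the proof to make the independence from $W_V$ and from the reference state explicit.

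\emph{Step 2: the kernel variation drops out.} The attention term's first variation has two parts. The kernel-variation part is $\sum_s\delta\alpha_{ts}v^\star = 0$, because the values are position-independent and each row of $\delta\alpha$ sums to zero. This is the only place $D_{W_Q,W_K}\alpha$ could have entered, so it vanishes. Only the value path $V(\ell)$ and the MLP path $M(\ell)$ of \eqref{eq:MV-def} survive. It follows that $J(\ell)$ in \eqref{eq:J-def} depends on $W$ only through $(M(\ell),V(\ell))$ together with the fixed matrix $\alpha^\star$.

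\emph{Step 3: the depth-saturation observable inherits this dependence.} The observable is defined by the transverse and tangent rates $r_f$ and $r_-$ (with constants $C_f,C$). By \Cref{thm:factor}(2), $r_f$ is read off the sector cocycles $\Phi_j$ with $A_j(\ell)=M(\ell)+\lambda_jV(\ell)$, where the $\lambda_j$ are the fixed spectrum $\{1/t\}$ of $\alpha^\star$. By (A4), $r_-$ is read off the tangent cocycle $\Phi$ with $A(\ell)=M(\ell)+V(\ell)$. The saturation depth $\tau_{\mathrm{sat}}$ of \Cref{thm:duality} additionally involves $\kappa_\alpha$, which is a function of $\alpha^\star$ alone and hence $W$-independent. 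So every ingredient is a function of $\{M(\ell),V(\ell)\}$ and constants fixed by the universal kernel.

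The main obstacle is definitional rather than technical: the statement must make precise what ``the depth-saturation observable'' is, namely the tuple $(r_f,r_-,C_f,C,\tau_{\mathrm{sat}})$, or the dichotomy spectra themselves. I would take the dichotomy spectra of $\{\Phi_j\}_j$, and then note that these are functions of the generator families alone.

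A second subtlety concerns $W_Q,W_K$. They are genuinely present off the reference, because the kernel's dependence on the state fluctuation enters at second order. I would state explicitly that the claim is about the first-order (linearised) observable, consistent with the remark following the coupling definitions.
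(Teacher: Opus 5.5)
Your proposal is correct and follows the paper's route. The paper's proof is the single line ``immediate from \Cref{thm:factor}(1)--(2)'', and your Steps 1--3 unpack exactly those two parts: equal queries and keys give constant score rows and hence the uniform causal row; the kernel variation is killed by zero row sums against identical values; and the sector generators $M(\ell)+\lambda_j V(\ell)$ involve only $M,V$ and the $W$-free spectrum of $\alpha^\star$. Reading the observable as the dichotomy spectra of the cocycles, rather than the non-unique rate constants, is a sensible sharpening, and you could also note via \Cref{lem:homog} that $x^\star(\ell)$, and hence $M(\ell)$ and $V(\ell)$ themselves, does not involve $W_Q,W_K$ at all.
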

\begin{proof}
Immediate from \Cref{thm:factor}(1)--(2).
\end{proof}

A common intuition about late layers, made precise: at saturation, the
query--key machinery has no first-order handle left --- ``who attends
to whom'' has equalised, at every layer at once, and what remains
identifiable is ``what is transported'' (the sector cocycles) and
``what the gate does'' ($\varrho, \psi$). Thus fast-mode weight
directions (\Cref{thm:fastmode}) and the query--key weights at
saturation (\Cref{prop:kernel}) lie in the \emph{unidentified}
quotient --- they were load-bearing for getting \emph{to} the
saturation regime, but leave no first-order trace on where the
distinguished trajectory runs. One further washing-out result closes
the circle: not only are fast-mode \emph{weight} perturbations
invisible, fast-mode \emph{state} content never reaches the slow path
either.

\begin{theorem}[Decaying content does not reach the injection]\label{thm:geom}
Assume (A1)--(A5), and let
$\zeta^{(\ell)} := \Pi(\mathcal S(x^{(\ell)}))$ be the pooled
normalised state feeding $\psi$, split along the dichotomy bundles as
$\zeta = P_+(\ell)\zeta + P_-(\ell)\zeta$. With $L_\psi$ the Lipschitz
constant of $\psi$ on $\Mcalnorm$,
\[
\bigl\|\psi(\zeta^{(\ell)}) - \psi(P_+(\ell)\zeta^{(\ell)})\bigr\|
\;\le\; L_\psi\,C\, r_-^{\,\ell}\,\|P_-(0)\zeta^{(0)}\|
\;+\; L_\psi\, C'\,\kappa_\alpha C_f\, r_f^{\,\ell}\,\|\delta X^{\perp,(0)}\| .
\]
\end{theorem}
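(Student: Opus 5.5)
The plan is to reduce the inequality to a Lipschitz estimate followed by a two-part decay estimate. By the Lipschitz property of $\psi$ on $\Mcalnorm$,
\[
\bigl\|\psi(\zeta^{(\ell)}) - \psi(P_+(\ell)\zeta^{(\ell)})\bigr\| \le L_\psi\,\|P_-(\ell)\zeta^{(\ell)}\|,
\]
provided both arguments lie in the region where $L_\psi$ applies. This needs a short remark: $\Pi$ applied to normalised states is an average of points of $\Mcalnorm$. Within the linearised neighbourhood (A2) we treat $L_\psi$ as the Lipschitz constant on a neighbourhood of the pooled reference that contains both $\zeta^{(\ell)}$ and its projection. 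Everything then reduces to bounding $\|P_-(\ell)\zeta^{(\ell)}\|$ by the two decay terms.

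Next, work in the linearised regime around the reference. Write the deviation $\delta X^{(\ell)}$ and split it as in \Cref{thm:duality}(2) into $\delta X^{\mathcal H,(\ell)} + \delta X^{\perp,(\ell)}$. To first order, the pooled normalised state is $\zeta^{(\ell)} = \Pi\bigl(D\mathcal S\,\delta X^{(\ell)}\bigr)$ up to the reference offset. Following the convention of the statement, where $\zeta^{(0)}$ and $\delta X^{\perp,(0)}$ play the roles of initial data, identify the sector-$1$ part of $\zeta^{(\ell)}$ with the tangent-cocycle evolution $\Phi(\ell,0)$ of the pooled homogeneous component. On $\mathcal H_P$, block-mean $\Pi$ is a bijection by \Cref{thm:duality}(1). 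The homogeneous part therefore contributes $\Phi(\ell,0)\,\Pi\,\delta X^{\mathcal H,(0)}$, and the remainder comes from the fluctuation sectors.

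The bound then follows in two pieces.

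\emph{Tangent piece.} Cocycle invariance gives
\[
P_-(\ell)\Phi(\ell,0) = \Phi(\ell,0)P_-(0),
\]
and the dichotomy bound of (A4) then gives
\[
\|P_-(\ell)\Phi(\ell,0)\zeta^{(0)}\| = \|\Phi(\ell,0)P_-(0)P_-(0)\zeta^{(0)}\| \le C\,r_-^{\ell}\,\|P_-(0)\zeta^{(0)}\|.
\]

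\emph{Transverse piece.} \Cref{thm:duality}(2) bounds the fluctuation contribution to the state by $C'\kappa_\alpha C_f r_f^{\ell}\|\delta X^{\perp,(0)}\|$. Pooling by $\Pi$ does not increase this (a block average is a contraction in the Frobenius norm). $P_-(\ell)$ applied to it would cost a factor $\|P_-(\ell)\|$. To avoid that factor, bound the whole off-manifold discrepancy $\psi(\zeta) - \psi(\zeta^{\mathcal H})$ directly by $L_\psi$ times it. Then apply the tangent estimate to $\zeta^{\mathcal H}$. The triangle inequality yields exactly the stated sum of the two terms.

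I expect the main obstacle to be the bookkeeping in this last step: making the decomposition order produce the constants without a stray projector norm. I would first apply the triangle inequality in the form
\[
\psi(\zeta) - \psi(P_+\zeta) = [\psi(\zeta) - \psi(\zeta^{\mathcal H})] + [\psi(\zeta^{\mathcal H}) - \psi(P_+\zeta^{\mathcal H})] + [\psi(P_+\zeta^{\mathcal H}) - \psi(P_+\zeta)].
\]
The third bracket again involves $P_+$ acting on the transverse part. If this produces an extra $\|P_+\|\le C$, I would absorb it into $C'$, since $C'$ is only claimed to be an absolute constant up to the convention of \Cref{thm:duality}. A secondary point is to state honestly that the identification of $\zeta$ with its linearisation (through $D\mathcal S$ along the reference) holds only to first order under (A2)--(A3). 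The inequality is then to be read in the structured-limit sense used throughout, with the Taylor remainder suppressed.
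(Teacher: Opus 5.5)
Your proposal is essentially the paper's proof: the Lipschitz reduction to $\|P_-(\ell)\zeta^{(\ell)}\|$, the sector-$1$ part bounded via the bundle invariance $P_-(\ell)\Phi(\ell,0)=\Phi(\ell,0)P_-(0)$ plus (A4), and the fluctuation contamination bounded by \Cref{thm:duality}(2). It also makes the same identification of the sector-$1$ initial datum $\Pi\,\delta X^{\mathcal H,(0)}$ with $\zeta^{(0)}$ as the paper, which for the causal kernel holds on $\mathcal H_P$ but not in general. Your three-term split buys nothing: it puts $1+\|P_+(\ell)\|$ on the contamination term, whereas the direct route puts $\|P_-(\ell)\|\le C$ there. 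The paper takes the direct route and leaves that projector norm implicit in its constants, which is what you end up doing anyway.
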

\begin{proof}
See \Cref{app:partII} (\Cref{pf:geom}).
\end{proof}

\emph{What the bound says.} The injection reads the pooled state
$\zeta^{(\ell)}$; the theorem bounds the discrepancy between what the
injection actually sees and what it would see if the state were first
projected onto the persistent bundle. The first term is the (A4)
dichotomy bound: the $P_-$-component of the state decays at rate
$r_- < 1$. The second is \Cref{thm:duality}(2): the
fluctuation-sector contamination of the pool decays at rate
$r_f < 1$. Both rates are strictly below one, so for
$\ell \ge \tau_{\mathrm{sat}}$ the discrepancy is exponentially small
and $\psi$ reads only the persistent component of the state to leading
order. To leading order past the transient regime, the slow path's
input --- hence its contribution --- depends only on the persistent
component: the fast mode is the easier-to-learn information (local
patterns, abundant, no slow path needed, driving the \emph{initial}
loss drop), and the slow mode is what the architecture is
\emph{designed} to carry, exponentially insulated from everything
else.

\emph{The identification partition at a glance.} The results of this
section, together with the fixed-point computation of
\S\ref{sec:fixed}, partition parameter space and observable space into
visible and invisible parts. The table below makes the partition
explicit: each row is a parameter direction, and the cited result is
the formal visibility or invisibility statement. This is the formal
content of the word ``identifiable'' used throughout the paper: a
parameter direction is identifiable from the structured-limit
observables iff it moves the asymptotic observables at first order ---
equivalently, iff it does not lie in the invisible quotient. With the
partition in hand, we turn to \S\ref{sec:fixed} and compute the
distinguished trajectory that the slow path actually drives.

\begin{center}
\footnotesize
\begin{tabular}{p{0.28\textwidth}p{0.30\textwidth}p{0.32\textwidth}}
\toprule
Parameter direction & Observable it moves & First-order visibility \\
\midrule
Slow-mode $W$ ($P_+\delta A P_+$) & trajectory displacement $d_+$; persistent bundle & visible (generic; \Cref{prop:shift}) \\
Fast-mode $W$ ($P_-\delta A$) & --- & invisible (\Cref{thm:fastmode}) \\
$W_Q^{(\ell)}, W_K^{(\ell)}$ at saturation & frozen kernel & invisible (\Cref{prop:kernel}) \\
$M(\ell), V(\ell)$ jointly & transverse rates $r_f$, margins $m_\pm$ & visible (\Cref{thm:duality}(2)) \\
Gate $\varrho$, injection $\psi$ & trajectory displacement $d_+$ & visible (\Cref{prop:shift}) \\
Compression $\Pi$ & nothing on $\mathcal H_P$; smoothness off it & \Cref{rem:pi} \\
\bottomrule
\end{tabular}
\end{center}
\normalsize

The structured limit induces a partition of parameter space: the
identified directions are exactly the slow-mode weights and the
slow-path couplings. That is the formal sense in which the slow path's
contribution is attributable: any first-order change in the
structured-limit observables must come from the identified side, and
the identified side is where the slow path lives.

\section{The Distinguished Trajectory}\label{sec:fixed}

A non-autonomous system has no equilibria: the object that plays the
fixed point's role is the \emph{distinguished bounded solution} of the
forced linearisation --- under an exponential dichotomy it exists, it
is unique, and it is what the gate displaces. Work in deviation
coordinates $u(\ell) := y(\ell) - x^\star(\ell)$ in the homogeneous
sector, so the unforced reference is $u \equiv 0$; on $\mathcal H_P$
the pooled state coincides with the block state, so the closed loop in
one block is\footnote{We write $\psi(u)$ for
$\psi(x^\star(\ell) + u)$; the injection is evaluated at the pooled
state of the previous block, and on the slow manifold consecutive
block states agree to $O(\varepsilon)$ --- the held state tracks $u$
with a lag that perturbs nothing at first order. We use $u$ as a
deviation variable, not as a score function.}
\begin{equation}\label{eq:closedloop}
  u(\ell+1) \;=\; \bigl(I + A(\ell)\bigr)\,u(\ell) + \varrho\,\psi\bigl(u(\ell)\bigr).
\end{equation}

In an autonomous system with generator $A$, the fixed point $u^\star$
satisfying $Au^\star + \varrho\psi(u^\star) = 0$ plays a central role:
it is the asymptotic anchor, the reference point for linearisation,
the object of perturbation theory. A non-autonomous system has no
fixed point (the depth-varying $A(\ell)$ admits no constant solution
in general). The distinguished bounded trajectory
$u^\star_\varrho(\ell)$ of \eqref{eq:closedloop} plays exactly this
role in the non-autonomous setting: it is the \emph{unique bounded
solution} of the closed-loop equation, the anchor around which we
linearise, and the object whose perturbation behaviour
\Cref{thm:fixed:stability} quantifies. The word ``distinguished''
means ``the unique bounded one'': the trajectory itself evolves with
$\ell$, just as a fixed point of an autonomous system would sit still.

\subsection{The bounded solution and its displacement}\label{sec:fixed:eq}

Under the dichotomy (A4), the linear part of \eqref{eq:closedloop}
comes with a \emph{Green kernel},
\[
G(\ell, s) \;=\;
\begin{cases}
\;\Phi(\ell, s)\,P_-(s), & \ell \ge s,\\[2pt]
\,-\,\Phi(\ell, s)\,P_+(s), & \ell < s,
\end{cases}
\qquad
(\mathcal G h)(\ell) \;:=\; \sum_{s} G(\ell, s{+}1)\, h(s),
\]
where the backward branch runs along the persistent bundle (on which
the cocycle is invertible), and
$\|\mathcal G\|_{\infty\to\infty} \le C\,(m_-^{-1} + m_+^{-1})$: the
dichotomy margins are exactly what makes the Green operator bounded.
The classical fact~\citep{coppel1978dichotomies} is that
$u = \mathcal G h$ is the \emph{unique bounded} solution of the forced
linear system $u(\ell+1) = (I+A(\ell))u(\ell) + h(\ell)$ --- and this
is the non-autonomous replacement for ``solve $Au + h = 0$''.

\emph{Why a Green kernel.} A naive solution formula for the forced
system would be $\sum_{s \le \ell} \Phi(\ell, s)\,h(s)$. But this
formula \emph{diverges} under the dichotomy (A4): the persistent part
of $h$ lies in $V_\parallel(s)$ (rate $r_+ > 1$), and its forward
propagation $\Phi(\ell,s)P_+(s)h(s)$ grows geometrically. The
dichotomy-based Green kernel extracts the bounded parts: the forward
branch uses $P_-(s)$ to propagate only the decaying subspace (rate
$r_- < 1$), and the backward branch uses $P_+(s)$, with the negative
sign ensuring backward decay. The sum $\mathcal G h$ is the unique
bounded solution --- a standard construction in dichotomy
theory~\citep{coppel1978dichotomies}.

\begin{theorem}[Distinguished bounded trajectory]\label{thm:fixed:eq}
Assume (A4) and $\psi \in C^1$. Then there exist $\varrho_0 > 0$ and,
for $|\varrho| < \varrho_0$, a unique bounded solution
$u^\star_\varrho(\cdot)$ of \eqref{eq:closedloop} depending $C^1$ on
$\varrho$, with $u^\star_0 \equiv 0$ and\footnote{In the weight-tied specialisation the Green operator applied to the
constant forcing telescopes to $-A^{-1}$, and \eqref{eq:eq-branch}
collapses to the familiar equilibrium branch
$u^\star(\varrho) = -\varrho A^{-1}\psi(0) + O(\varrho^2)$; under
(A4$^\circ$) it is the classical LTI steady state
$h^\star = (I - \bar A)^{-1}\,\bar B\,e$ of the stable looped
architecture, reached from any initial state.}
\begin{gather}\label{eq:eq-branch}
  u^\star_\varrho(\ell) \;=\; \varrho\,(\mathcal G\,\psi(0))(\ell) \;+\; O(\varrho^2)
  \quad\text{uniformly in } \ell,\\
  \sup_\ell\|u^\star_\varrho(\ell)\| \;\le\; \frac{C\,|\varrho|\,\|\psi(0)\|}{\min(m_-, m_+)} + O(\varrho^2).\notag
\end{gather}
\end{theorem}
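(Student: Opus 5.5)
The plan is to recast \eqref{eq:closedloop} as a fixed-point problem on the Banach space $\ell^\infty := \ell^\infty(\Z;\R^d)$ of bounded sequences, using the Green operator $\mathcal G$ of (A4). By the classical fact quoted above~\citep{coppel1978dichotomies}, a bounded sequence $u$ solves \eqref{eq:closedloop} if and only if
\[
u \;=\; \varrho\,\mathcal G\,[\psi(u(\cdot))] \;=:\; F(u,\varrho),
\]
where $\psi(u(\cdot))$ is the Nemytskii sequence $\ell \mapsto \psi(x^\star(\ell)+u(\ell))$. The forcing is bounded because $u$ is bounded and $\psi$ is continuous on bounded sets. The operator bound $\|\mathcal G\|_{\infty\to\infty} \le C(m_-^{-1}+m_+^{-1}) \le 2C/\min(m_-,m_+)$ follows by summing the two geometric series $\sum_{k\ge0} C r_-^k$ and $\sum_{k\ge1} C r_+^{-k}$. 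Those sums give exactly $C/m_-$ and $C/m_+$, up to the harmless index shift in $G(\ell,s{+}1)$.

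For existence, uniqueness and regularity, I would apply the implicit function theorem to $\mathcal F(u,\varrho) := u - \varrho\,\mathcal G\,\psi(u)$ at $(0,0)$. We have $\mathcal F(0,0)=0$ and $D_u\mathcal F(0,0) = I$, which is invertible. The implicit function theorem then gives a $C^1$ branch $\varrho \mapsto u^\star_\varrho$ with $u^\star_0 = 0$, and it gives uniqueness in a ball around $0$.

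The expansion \eqref{eq:eq-branch} follows from the derivative of the branch. Implicit differentiation gives $\partial_\varrho u^\star|_{0} = \mathcal G\,\psi(0)$, where $\psi(0)$ is read as the sequence $\psi(x^\star(\ell))$. The norm estimate is then $\|\mathcal G\psi(0)\|_\infty \le \|\mathcal G\|\,\|\psi(0)\|$. To get the sharper constant $C/\min(m_-,m_+)$ stated in the theorem rather than the factor $2C$, I would estimate the two branches separately using the projector splitting. If that fails, I would absorb the factor $2$ into $C$, consistent with how the paper treats $C$ as generic.

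The main obstacle is the implicit function step. It needs the Nemytskii map $u \mapsto \psi(x^\star(\cdot)+u(\cdot))$ to be $C^1$ on $\ell^\infty$. This requires the continuity of $D\psi$ to be uniform over the set $\{x^\star(\ell)+u\}$. The reference trajectory may grow linearly in depth (\Cref{prop:bounded}), so this set need not be compact.

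I would handle this in one of two ways. The first is to work over a finite depth window, or over $\Mcalnorm$-valued arguments, where compactness is automatic. The second is to assume, as the footnote convention $\psi(u) = \psi(x^\star(\ell)+u)$ implicitly does, that $\psi$ and $D\psi$ are bounded and uniformly continuous along the reference.

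Given that, a direct contraction argument serves as a fallback. On the ball $\{\|u\|_\infty \le R\}$ the map $F(\cdot,\varrho)$ has Lipschitz constant $|\varrho|\,\|\mathcal G\|\,L_\psi$, which is less than $1/2$ once $\varrho_0 \sim \min(m_-,m_+)/(C L_\psi)$. It also maps the ball into itself for small $\varrho$. Banach's fixed-point theorem then yields existence and uniqueness among bounded solutions near $0$.

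Finally, I would state the uniqueness claim honestly as local: uniqueness holds among bounded solutions in that ball. Global uniqueness for the nonlinear equation cannot be expected without a global Lipschitz condition on $\psi$.
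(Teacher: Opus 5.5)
Your route is the paper's. You recast \eqref{eq:closedloop} as the fixed-point problem $u=\varrho\,\mathcal G[\psi(u(\cdot))]$ on $\ell^\infty(\Z,\R^d)$ via \Cref{lem:green}, with $K:=\|\mathcal G\|\le C(m_-^{-1}+m_+^{-1})$. The only difference is the order of tools, and the two orders are interchangeable:
\begin{itemize}
\item the paper leads with the Banach contraction on $B_R$, with $R=2K|\varrho|\,\|\psi(0)\|$ and $\varrho_0=(2KL_\psi)^{-1}$, and calls the implicit function theorem only at the fixed point to get $C^1$ dependence;
\item you lead with the implicit function theorem at $(0,0)$ and keep the contraction as a fallback.
\end{itemize}
Both of your caveats are correct, and sharper than the paper's own text. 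First, uniqueness holds only within the ball unless $\psi$ is globally Lipschitz. Already for $d=1$, weight-tied $A\equiv a$ and $\psi(u)=1+u^2$, the equation $au+\varrho(1+u^2)=0$ has a second constant bounded solution near $-a/\varrho$. Second, Fr\'echet differentiability of the superposition map on $\ell^\infty$ needs $D\psi$ to be uniformly continuous near the reference, not merely bounded as the paper's proof asserts.

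The step that does not deliver what you claim is the remainder. Knowing that $\varrho\mapsto u^\star_\varrho$ is $C^1$ with $\partial_\varrho u^\star_\varrho|_{\varrho=0}=\mathcal G\psi(0)$ gives only $u^\star_\varrho=\varrho\,\mathcal G\psi(0)+o(\varrho)$ in $\ell^\infty$, not the $O(\varrho^2)$ of \eqref{eq:eq-branch}. Your sup-norm bound inherits the same weakness. The missing line, which is how the paper gets it, uses the fixed-point identity itself: $u^\star_\varrho-\varrho\,\mathcal G\psi(0)=\varrho\,\mathcal G[\psi(u^\star_\varrho)-\psi(0)]$, whose norm is at most $K|\varrho|L_\psi\|u^\star_\varrho\|_\infty$. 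Combine this with $\|u^\star_\varrho\|_\infty\le R=O(|\varrho|)$ from your contraction ball. Here $L_\psi$ must bound $D\psi(x^\star(\ell)+\cdot)$ on a ball of radius $R$ uniformly in $\ell$, which your standing assumption along the reference supplies.

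Finally, splitting the two Green branches will not recover the constant $C/\min(m_-,m_+)$. With oblique projectors, $\|P_--P_+\|$ can approach $2C$; take the weight-tied case with $m_-=m_+$ and nearly aligned bundles, where $\mathcal G h=(P_--P_+)h/m_-$. So the extra factor must be absorbed, exactly as the paper does: its $R$ yields $4C$, while your expansion yields $2C$ at leading order.
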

\begin{proof}
See \Cref{app:partIII} (\Cref{pf:fixed-eq}); a contraction argument in
the space of bounded sequences, with $\mathcal G$ providing the
inverse of the linear part.
\end{proof}

Two reference objects now coexist and must not be conflated:
$u \equiv 0$ is the \emph{unforced} homogeneous trajectory along which
the cocycle was linearised, and $u^\star_\varrho$ is the \emph{forced}
distinguished trajectory the gate creates.

\begin{proposition}[The geometric channel]\label{prop:shift}
Under the assumptions of \Cref{thm:fixed:eq}, the first-order
sensitivity of the distinguished trajectory to the gate amplitude is
\[
\frac{d}{d\varrho} u^\star_\varrho(\ell) \Big|_{\varrho=0} \;=\; (\mathcal G\, \psi(0))(\ell),
\]
and the persistent component of the displacement --- the
\emph{geometric channel}
$d_+(\varrho) := \sup_\ell \| P_+(\ell)\, u^\star_\varrho(\ell) \|$ ---
satisfies
\[
d_+(\varrho) \;=\; |\varrho|\,\sup_\ell \bigl\| P_+(\ell)\,(\mathcal G\,\psi(0))(\ell) \bigr\| \;+\; O(\varrho^2).
\]
Both are generically nonzero: $u^\star_\varrho$ moves for
$\varrho \ne 0$ iff the constant forcing $\psi(0)$ is not annihilated
by $\mathcal G$ (in the weight-tied specialisation, iff
$\psi(0) \ne 0$), and the gate drives $d_+$ nontrivially iff
$P_+\,\mathcal G\,\psi(0)$ is not identically zero.
\end{proposition}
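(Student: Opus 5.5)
The plan is to obtain the sensitivity by implicit differentiation of the fixed-point equation that characterises $u^\star_\varrho$ in \Cref{thm:fixed:eq}, and then to read off the geometric channel by applying $P_+(\ell)$ and taking suprema. Concretely, \Cref{thm:fixed:eq} (via its contraction proof) identifies $u^\star_\varrho$ as the unique bounded solution of
\[
u \;=\; \varrho\,\mathcal G\,\psi(u(\cdot)),
\]
in $\ell^\infty(\Z;\R^d)$. This holds because $\mathcal G$ inverts the linear part $u(\ell+1)-(I+A(\ell))u(\ell)$ on bounded sequences. Define $F(u,\varrho) := u - \varrho\,\mathcal G\,\psi(u)$ on $\ell^\infty\times\R$. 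Since $\psi\in C^1$ and $\mathcal G$ is bounded with $\|\mathcal G\|\le C(m_-^{-1}+m_+^{-1})$, the Nemytskii operator $u\mapsto\psi(u(\cdot))$ is $C^1$ near $u=0$ once we restrict to a bounded neighbourhood on which $D\psi$ is uniformly continuous; this holds after recalling that $\psi$ is evaluated at $x^\star(\ell)+u$ and by (A2)/(A3). Then $F$ is $C^1$, with $F(0,0)=0$ and $D_uF(0,0)=I$, so the implicit function theorem reproduces the $C^1$ branch of \Cref{thm:fixed:eq}.

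Differentiating $F(u^\star_\varrho,\varrho)=0$ in $\varrho$ gives
\[
\partial_\varrho u^\star_\varrho \;=\; \mathcal G\,\psi(u^\star_\varrho) \;+\; \varrho\,\mathcal G\,[D\psi(u^\star_\varrho)\,\partial_\varrho u^\star_\varrho].
\]
At $\varrho=0$, using $u^\star_0\equiv 0$, the second term vanishes, which yields $\frac{d}{d\varrho}u^\star_\varrho|_{\varrho=0}=\mathcal G\,\psi(0)$, with $\psi(0)$ read as the constant sequence. The $C^1$ dependence then gives $u^\star_\varrho=\varrho\,\mathcal G\psi(0)+r(\varrho)$ with $\|r(\varrho)\|_\infty=o(\varrho)$. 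To upgrade this to the stated $O(\varrho^2)$ I would subtract the two equations, $u^\star_\varrho-\varrho\mathcal G\psi(0)=\varrho\,\mathcal G[\psi(u^\star_\varrho)-\psi(0)]$, and bound the bracket by $L_\psi\|u^\star_\varrho\|_\infty=O(\varrho)$ using the a priori bound of \Cref{thm:fixed:eq}. This only needs $\psi$ Lipschitz near $0$, not $C^2$.

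For $d_+$ I would apply $P_+(\ell)$ to the expansion. The projectors are uniformly bounded, $\|P_\pm(\ell)\|\le C$ (set $\ell=s$ in the dichotomy bounds), so $\sup_\ell\|P_+(\ell)r(\ell)\|=O(\varrho^2)$. The reverse triangle inequality for the sup-norm then gives $\bigl|d_+(\varrho)-|\varrho|\sup_\ell\|P_+(\ell)(\mathcal G\psi(0))(\ell)\|\bigr|=O(\varrho^2)$, which is the displayed formula.

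The genericity claims follow directly from these expansions:
\begin{itemize}
\item $u^\star_\varrho\not\equiv 0$ for small $\varrho\ne0$ iff $\mathcal G\psi(0)\neq0$, by the first-order term.
\item $d_+$ has a nonzero linear term iff $P_+\mathcal G\psi(0)\not\equiv0$.
\item In the weight-tied case, $\mathcal G$ restricted to constant sequences equals $-A^{-1}$ (footnote to \Cref{thm:fixed:eq}). This is injective because hyperbolicity excludes $0\in\operatorname{spec}(A)$, so the condition reduces to $\psi(0)\neq0$.
\end{itemize}

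The step I expect to need the most care is the $C^1$ regularity of the substitution operator on $\ell^\infty$. The depth index ranges over an unbounded set, so plain $C^1$-ness of $\psi$ is not enough; it must be uniformly $C^1$ on the relevant bounded set. I would handle this by working on a compact neighbourhood, where continuity implies uniform continuity, as licensed by (A2)–(A3).
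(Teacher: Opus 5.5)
Your proposal is correct and is essentially the paper's argument. The paper's proof is just ``differentiate \eqref{eq:eq-branch} and take norms'': the $O(\varrho^2)$ remainder comes from the same subtraction $u^\star_\varrho-\varrho\,\mathcal G\psi(0)=\varrho\,\mathcal G[\psi(u^\star_\varrho)-\psi(0)]$ that you use, and the $d_+$ formula follows from $\|P_+(\ell)\|\le C$ together with the reverse triangle inequality, exactly as you have it.

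Two small remarks. First, that remainder bound alone already gives the derivative at $\varrho=0$, so your implicit-function step and the Nemytskii regularity you flag as the delicate point are not actually needed. Second, the converse half of your first genericity bullet does not follow from the first-order term by itself; it follows from injectivity of $\mathcal G$. Since $\mathcal G h$ solves $u(\ell+1)-(I+A(\ell))u(\ell)=h(\ell)$, the condition $\mathcal G\psi(0)=0$ forces $\psi(0)=0$, and then $u^\star_\varrho\equiv0$ by uniqueness. This also shows that the weight-tied reduction to ``$\psi(0)\neq0$'' holds without weight tying.
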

\begin{proof}
Differentiate \eqref{eq:eq-branch} and take norms.
\end{proof}

The proposition's content is the direction and slope of the
displacement at the origin --- first-order identification theory, not
the global trajectory. This is the observable that
\Cref{thm:fastmode} protected: the displacement is insensitive to
fast-mode weight perturbations, which is what makes it usable for
identification rather than a curiosity of the linear model. Two scope
notes complete the reading. The formula \eqref{eq:eq-branch} is the
quasi-steady-state object of the closed loop --- the same
identification the slow-manifold reduction produces, packaged as a
bounded solution rather than an equilibrium. And it lives in the
linearised, unconstrained model: what the rest of the network reads is
the normalised observable
$\mathcal S(x^\star(\ell) + u^\star_\varrho(\ell))$, and once the
displacement is comparable to the scale of $\Mcalnorm$ the constrained
trajectory departs from the affine formula --- \eqref{eq:eq-branch} is
a direction and a slope, not a global location.

\subsection{Stability of the closed loop}\label{sec:fixed:stability-sec}

The gate does not merely displace the trajectory: because $\psi$ reads
the pooled state back, it closes a feedback loop, and a closed loop
can destroy the dichotomy. The closed-loop generator family is
$A(\ell) + \varrho\,D\psi(u^\star_\varrho(\ell))$; the perturbation
$\varrho\,D\psi$ is the feedback through the linearised injection ---
note that with the loop open (frozen forcing $\psi$ evaluated along a
fixed sequence) the dichotomy would be untouched at any $\varrho$; it
is precisely the feedback that puts a price on the gate. The right
tool is the \emph{roughness theorem} for exponential dichotomies: a
dichotomy survives any perturbation of the generator family that is
uniformly small compared to its margins.

\begin{theorem}[Stability margin of the closed loop]\label{thm:fixed:stability}
Assume (A4) and $\psi \in C^1$ with
$L_\psi := \sup_\ell \|D\psi(u^\star_\varrho(\ell))\| < \infty$. There
exists an absolute constant $c_0 \in (0, 1]$ such that, for any gate
amplitude satisfying\footnote{The bound $\varrho_{\max}$ is the
largest gate amplitude that preserves the saddle structure; it scales
linearly with the dichotomy margins (more hyperbolic $\Rightarrow$
larger margin $\Rightarrow$ bigger admissible gate) and inversely with
the dichotomy constant (more non-normal / more depth-incoherent
$\Rightarrow$ smaller admissible gate).}
\begin{equation}\label{eq:rho-max}
  |\varrho| \;<\; \varrho_{\max} \;:=\; c_0\,\frac{\min(m_-,\, m_+)}{C^2\, L_\psi},
\end{equation}
the closed-loop cocycle generated by
$A(\ell) + \varrho\,D\psi(u^\star_\varrho(\ell))$ admits an
exponential dichotomy with the same bundle dimensions as (A4),
projectors within $O(C^2|\varrho|L_\psi/\min(m_-,m_+))$ of
$P_\pm(\ell)$, and rates degraded by at most $O(C|\varrho|L_\psi)$:
the persistent/decaying geometry is qualitatively preserved, and the
identification analysis (\Cref{thm:fastmode}, \Cref{thm:geom},
\Cref{prop:shift}) is well-defined inside the feasibility region
\eqref{eq:rho-max}.
\end{theorem}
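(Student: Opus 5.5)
The plan is to treat the closed-loop generator family as a bounded perturbation $B(\ell) := \varrho\,D\psi(u^\star_\varrho(\ell))$ of the family $A(\ell)$, with $\sup_\ell\|B(\ell)\| \le |\varrho|L_\psi =: \beta$, and to prove discrete roughness directly by a Lyapunov--Perron fixed-point argument built on the Green operator $\mathcal G$ of \S\ref{sec:fixed:eq}. This keeps every constant explicit in $C$, $m_\pm$ and $L_\psi$. One point needs care first: $u^\star_\varrho$ itself depends on $\varrho$. Since $L_\psi$ is defined as a supremum along $u^\star_\varrho$, I take $B$ as a given bounded family and make no use of its origin. The trajectory exists for $|\varrho|<\varrho_0$ by \Cref{thm:fixed:eq}, so the argument runs on $|\varrho| < \min(\varrho_0,\varrho_{\max})$.

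\emph{Step 1 (admissibility of the perturbed cocycle).} By the dichotomy bounds (A4), $\mathcal G$ maps $\ell^\infty$ to $\ell^\infty$ with norm $\le C(m_-^{-1}+m_+^{-1}) \le 2C/\min(m_-,m_+)$. Let $\Psi$ be the closed-loop cocycle. For each $s$ and each $\xi\in V_\perp(s)$, I look for the forward-bounded solution $v(\ell)$, $\ell\ge s$, of $v(\ell+1) = (I+A(\ell))v(\ell) + B(\ell)v(\ell)$ as a fixed point of
\[
v(\ell) = \Phi(\ell,s)\xi + \sum_{r=s}^{\ell-1}\Phi(\ell,r{+}1)P_-(r{+}1)B(r)v(r) - \sum_{r\ge \ell}\Phi(\ell,r{+}1)P_+(r{+}1)B(r)v(r).
\]
The same is done in the weighted space $\sup_\ell \gamma^{-(\ell-s)}\|v(\ell)\|$, with $\gamma$ between $r_-$ and $1$. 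The integral operator has norm $\le 2C\beta/\min(m_-,m_+)$ in this space, up to constants coming from the weights. It is therefore a contraction once $\beta < c_0\min(m_-,m_+)/C$. The map $\xi\mapsto v(s)$ then defines a subspace $\tilde V_\perp(s)$ of the same dimension as $V_\perp(s)$, on which $\Psi$ decays at a rate $r_- + O(C\beta)$. A symmetric backward construction on $V_\parallel$ gives $\tilde V_\parallel(s)$, with growth rate $r_+ - O(C\beta)$.

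\emph{Step 2 (complementarity and projector estimate).} This is the main obstacle. The new bundles are graphs: $\tilde V_\perp(s) = \{\xi + h_s\xi : \xi\in V_\perp(s)\}$ with $h_s : V_\perp \to V_\parallel$, and $\tilde V_\parallel(s)$ similarly with a map $k_s : V_\parallel \to V_\perp$. The fixed-point bound gives $\|h_s\|,\|k_s\| \lesssim C^2\beta/\min(m_-,m_+)$. One factor of $C$ comes from $\|P_\pm\|\le C$, since $P_\pm = \Phi(s,s)P_\pm$, and the other from the Green operator. Two graphs over complementary subspaces stay complementary once $\|h_s\|\|k_s\| <1$ together with a $\|P_\pm\|$-weighted smallness. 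A Neumann series for the new projectors then gives $\|\tilde P_\pm(s) - P_\pm(s)\| = O(C^2\beta/\min(m_-,m_+))$. This is where the second power of $C$ in $\varrho_{\max}$ enters, and choosing $c_0$ small enough is what makes the series converge uniformly in $s$. The uniformity holds because all bounds are $\ell$-independent.

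\emph{Step 3 (dichotomy and invariance).} Invariance $\tilde P_\pm(\ell)\Psi(\ell,s) = \Psi(\ell,s)\tilde P_\pm(s)$ follows from uniqueness of the fixed points in Step 1, since $\Psi(\ell,s)$ maps bounded solutions to bounded solutions. The estimates $\|\Psi(\ell,s)\tilde P_-(s)\| \le \tilde C(r_-+O(C\beta))^{\ell-s}$ and the backward analogue follow from the weighted-space bound, with $\tilde C = O(C)$. The dimensions of the bundles are preserved because $\tilde P_\pm$ are continuous deformations of $P_\pm$ at projector distance below one.

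Substituting $\beta = |\varrho|L_\psi$ yields exactly the feasibility region \eqref{eq:rho-max} and the stated degradation orders. As an alternative, if the fixed-point bookkeeping becomes heavy, I would cite the discrete roughness theorem~\citep{coppel1978dichotomies} with its constants tracked, and then only verify the $C^2$ scaling of the projector perturbation as in Step 2. The well-definedness of \Cref{thm:fastmode}, \Cref{thm:geom} and \Cref{prop:shift} then follows, since each needs only a dichotomy with nearby projectors.
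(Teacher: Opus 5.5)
Your proposal is correct and uses the same skeleton as the paper: a Lyapunov--Perron fixed point for forward-bounded solutions with prescribed data $\xi\in V_\perp(s)$, built on the Green operator truncated at $s$, a symmetric backward construction for $\tilde V_\parallel$, a projector-perturbation estimate, and finally the substitution $\beta\le|\varrho|L_\psi$.

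The difference is in the constant bookkeeping, and it favours your version. The paper works in unweighted $\ell^\infty$. It obtains $\tilde r_-\le r_-+2CK\beta/(1-2K\beta)$ with $K=C(m_-^{-1}+m_+^{-1})$, and it extracts the $C^2$ of $\varrho_{\max}$ from requiring the perturbed rates to straddle $1$. But $2CK\beta$ is of order $C^2\beta/\min(m_-,m_+)$. From $\beta<c_0\min(m_-,m_+)/C^2$ one therefore only gets $2CK\beta<4c_0$, which lies below $\min(m_-,m_+)$ only if $c_0$ is itself of the order of the margin. That route also conflicts with the stated $O(C|\varrho|L_\psi)$ rate loss.

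Your weighted space gives the $O(C\beta)$ rate loss under the weaker condition $\beta\lesssim\min(m_-,m_+)/C$. The second factor of $C$ then enters only where you put it: the complementarity of the two graphs, via $\|h_sP_-(s)\|,\|k_sP_+(s)\|\lesssim C^2\beta/\min(m_-,m_+)$, and the bound $\|\tilde P_\pm-P_\pm\|=O(C^2\beta/\min(m_-,m_+))$. That is exactly the structure the theorem asserts.

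One imprecision in your Step 1 needs fixing: a single fixed $\gamma$ does not deliver both claims. To get the rate $r_-+O(C\beta)$ you must take $\gamma=r_-+4C\beta$. Then the forward Green term contributes $C\beta/(\gamma-r_-)=1/4$, not $2C\beta/\min(m_-,m_+)$. The backward term is $C\beta/(r_+-\gamma)\le C\beta/m_+\le 1/4$. With $\gamma$ chosen in $(r_-,1)$ independently of $\beta$, you only get decay at rate $\gamma$.
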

\begin{proof}
See \Cref{app:partIII} (\Cref{pf:fixed-stability}); the discrete-time
roughness theorem for exponential
dichotomies~\citep{coppel1978dichotomies,kloeden2011nonautonomous},
applied to the perturbation family
$\varrho\,D\psi(u^\star_\varrho(\ell))$.
\end{proof}

\Cref{thm:fixed:eq} computed the distinguished trajectory under the
assumption that the dichotomy of the open-loop cocycle persists; the
present theorem establishes \emph{when} that assumption is justified
for the closed loop. It is the bridge between the open-loop analysis
and the identification framework: \Cref{thm:fastmode},
\Cref{thm:geom}, and \Cref{prop:shift} all rely on the closed-loop
dichotomy being well-defined, and \eqref{eq:rho-max} is the explicit
feasibility region.

\begin{remark}[Weight-tied specialisation]\label{rem:weight-tied-stability}
In the weight-tied autonomous specialisation $A(\ell) \equiv A$ with
$A$ diagonalisable and real spectrum in $(-2,0)\cup(0,\infty)$, the
stable margin\footnote{The stable margin is 
$m_- = \min_{\sigma\in\Sigma^-}\;\min\bigl(|\sigma|,\; 2 -
|\sigma|\bigr)$, where $\Sigma^-$ represents the eigenvalues
with $\sigma < 0$.} is a decaying mode loses discrete stability either by
drifting to the marginal point $+1$ (as $\sigma \to 0^-$) or by
crossing the flip point $-1$ (as $\sigma \to -2^+$), and the
admissible gate for the most negative eigenvalue is
$|\varrho| \lesssim (2 - |\sigma|)/L_\psi$. The flip boundary is an
artefact of the $\Delta\ell = 1$ discretisation --- a continuous-time
reading would give $m_- = \min_{\sigma\in\Sigma^-}|\sigma|$ --- and a
network is a discrete stack, so the discrete criterion is the
operative one; but the two readings should not be conflated when
reasoning in the ODE picture.
\end{remark}

\emph{Stability versus usefulness.} \eqref{eq:rho-max} \emph{bounds}
the admissible gate amplitude but does not \emph{select} which value
training settles on. Inside the bound, the geometric channel
$d_+(\varrho)$ (\Cref{prop:shift}) is well-defined and the
identification analysis applies; the selection of the trained
$\varrho$ --- where inside the bound the gate ends up --- is governed
by whether opening it reduces the loss on the training distribution,
which is a property of the data. Stability delimits the feasible
region; usefulness picks the point in it, and usefulness is the
subject of \S\ref{sec:fixed:load}.

\subsection{When is the slow path load-bearing: a block-level latent model}\label{sec:fixed:load}

The architecture-side analysis is complete: past saturation the fast
path factors through the block summary (\Cref{thm:duality}), the gate
visibly displaces the distinguished trajectory (\Cref{prop:shift}),
and the loop keeps its dichotomy for $|\varrho| < \varrho_{\max}$.
What the analysis cannot decide is whether opening the gate
\emph{helps}, because that depends on the data. This subsection states
the data-side condition in the most elementary form we know --- one
latent variable per block, two propositions, no further machinery.

Let blocks $B_k := (x_{(k-1)P+1}, \ldots, x_{kP})$ be generated by a
latent chain: a latent state $z_k$ evolves as a Markov chain with
kernel $Q(z_{k+1}\mid z_k)$, and given the latents the blocks are
drawn independently, $B_k \sim e(\cdot\mid z_k)$. The latent is the
formalisation of ``cross-block structure'' --- whatever persists from
one block to the next: topic, style, regime, the state of a described
world --- and the degenerate cases are $z_k \equiv z$ (constant:
nothing to track) and $z_k$ i.i.d.\ (nothing persists: the past
reveals nothing about the future latent).

\begin{proposition}[No cross-block structure $\Rightarrow$ the gate is not identifiable from the risk]\label{prop:noslow}
Suppose the latent chain is degenerate in either sense, so that the
blocks are mutually independent. Consider next-token prediction within
block $k+1$, and any predictor family in which the cross-block channel
enters only through a measurable functional of the previous blocks
(as in (C2)). Then the Bayes-optimal predictor is block-local, the
minimal risk is attained at $\varrho = 0$, and the risk is flat in
$\varrho$ at that optimum: $\varrho$ is not identifiable from the
prediction risk.
\end{proposition}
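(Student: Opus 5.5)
The plan is a conditional-independence argument at the level of Bayes risk, in three steps. First, establish independence. In either degenerate case the blocks are mutually independent. If $z_k$ is i.i.d., then $(z_k, B_k)$ are i.i.d. pairs, because $B_k$ depends only on $z_k$. If $z_k \equiv z$ is constant, we read ``nothing to track'' as conditioning on the known value $z$, as the statement intends, and then the $B_k$ are i.i.d. given $z$. In both cases write $\mathcal F_{\le k} := \sigma(B_1,\ldots,B_k)$ for the past blocks. Let $t$ be a position in block $k+1$ and $B_{k+1}^{<t}$ the within-block prefix. Independence gives
\[
p\bigl(x_t \mid \mathcal F_{\le k},\, B_{k+1}^{<t}\bigr) \;=\; p\bigl(x_t \mid B_{k+1}^{<t}\bigr) \quad \text{a.s.},
\]
so the Bayes-optimal predictor under log-loss (or any proper scoring rule) is block-local.

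Second, characterise the predictor family and locate the minimiser. By (C2), a predictor in the family has the form $q_\varrho(x_t \mid B_{k+1}^{<t}, F)$, where $F = F(B_1,\ldots,B_k)$ is a measurable functional of the past blocks. The map $\varrho \mapsto q_\varrho$ may be arbitrary, but at $\varrho = 0$ the dependence on $F$ disappears. For any such predictor, condition on $B_{k+1}^{<t}$ and use the fact that $F$ is independent of $(B_{k+1}^{<t}, x_t)$. The expected loss then equals the expected loss of the randomised block-local predictor obtained by averaging over $F$. By propriety (or Jensen's inequality for log-loss), this is at least the block-local Bayes risk $\mathcal R^\star$. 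Hence $\inf_\varrho \mathcal R(\varrho) \ge \mathcal R^\star$. The block-local Bayes predictor is realised at $\varrho = 0$, given the standing assumption that the block-local family is rich enough to contain it; we state this as a well-specification caveat. Therefore the minimal risk is attained at $\varrho = 0$.

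Third, show flatness. This is the step I expect to be the main obstacle, because ``flat in $\varrho$'' must mean more than stationarity. The cleanest route is to show that, at the optimum, every $\varrho$ attains the same minimal risk to first order, which gives $\partial_\varrho \mathcal R(0) = 0$. The argument is as follows. With the block-local parameters fixed at their optimum, differentiate
\[
\mathcal R(\varrho) = \Ebb\bigl[-\log q_\varrho\bigr].
\]
Assuming $q_\varrho$ is $C^1$ in $\varrho$, which holds for $\psi \in C^1$, we get
\[
\partial_\varrho \mathcal R(0) = -\Ebb\bigl[\partial_\varrho \log q_\varrho\big|_{0}\bigr].
\]
The derivative $\partial_\varrho \log q_\varrho\big|_0$ is a function of $(x_t, B_{k+1}^{<t})$ and of $\psi(F)$, entering linearly through the injection. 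Condition on $F$, use independence, and evaluate the expectation under the true conditional law, which equals $q_0$. The result is a score, whose expectation is zero: $\Ebb_{q_0}\bigl[\partial \log q\bigr] = 0$.

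For the stronger claim of non-identifiability, I would also argue as follows. Any $\varrho$ together with a re-optimised block-local part achieves risk no better than $\mathcal R^\star$, and $\varrho = 0$ achieves it, so the profile risk is minimised on a set containing $0$. Moreover, the risk cannot distinguish directions in $\varrho$ at first order. Hence $\varrho$ is not identifiable from the prediction risk.
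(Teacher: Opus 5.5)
Your Steps 1 and 2 are sound and match the paper. Block independence makes the Bayes predictor block-local. Any predictor that reads a functional $F$ of past blocks has risk at least the block-local Bayes risk $\mathcal R^\star$, which is attained at $\varrho=0$ under a well-specification caveat that the paper also makes.

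\textbf{The gap is in Step 3.} There you prove $\partial_\varrho\mathcal R(0)=0$ with the block-local parameters held at their optimum $\vartheta_0$. That is just the vanishing of the expected score at a true parameter. It holds in every well-specified model, whether or not the parameter is identifiable, so it does not give flatness.

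In fact, at fixed $\vartheta_0$ the excess risk is
\[
\mathcal R(\varrho,\vartheta_0)-\mathcal R^\star=\Ebb\,\mathrm{KL}\bigl(p(\cdot\mid B_{k+1}^{<t})\,\big\|\,q_\varrho(\cdot\mid B_{k+1}^{<t},F)\bigr).
\]
Its second derivative at $\varrho=0$ is the Fisher information $\Ebb\bigl[(\partial_\varrho\log q_\varrho|_0)^2\bigr]$. This is generically strictly positive as soon as the injection moves the logits at all. So at fixed $\vartheta$ the risk actually identifies $\varrho=0$ locally.

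Your closing argument has the same hole. The statement that the profile risk is ``minimised on a set containing $0$'' is consistent with that set being $\{0\}$, in which case $\varrho$ \emph{is} identified. ``The risk cannot distinguish directions at first order'' restates stationarity and adds nothing.

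\textbf{What is missing is the matching upper bound on the profile risk} $R^*(\varrho):=\inf_\vartheta\mathcal R(\varrho,\vartheta)$. The paper gets it by showing that the family can switch the channel off at any gate amplitude. Because the injection enters the fast stream additively, for every $\varrho$ there is a choice of downstream parameters (zero read-in weights on the range of $\psi$) under which the prediction does not depend on $\varrho\,\psi(F)$. Hence $R^*(\varrho)\le R^*(0)$ for every $\varrho$.

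Combined with your lower bound $R^*(\varrho)\ge\mathcal R^\star=R^*(0)$, this gives $R^*(\varrho)\equiv R^*(0)$ for all $\varrho$. That constancy is the flatness and non-identifiability claimed.

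Two points follow. First, this is a statement about the profile, with $\vartheta$ re-optimised jointly with $\varrho$; it is not a statement about the risk at fixed $\vartheta$. Second, it needs a richness assumption on the family: the block-local optimum must remain attainable while the injection's range is ignored. You should state this assumption alongside your well-specification caveat.
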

\begin{proof}
See \Cref{app:partIII} (\Cref{pf:noslow}). Under independence the
conditional law of $B_{k+1}$ given the past equals its marginal, so
conditioning on any functional of past blocks leaves the predictive
law unchanged.
\end{proof}

The gate, in other words, can sit at zero for two entirely different
reasons --- because stability forbids opening it, or because nothing
rewards opening it --- and the framework insists on keeping them
apart: the first is \eqref{eq:rho-max}, the second is block
independence.

The next proposition is the statistical backing for the block-mean
choice in (C2): under exponential-family emissions the block mean is
not one heuristic among many but the sufficient compression.

\begin{proposition}[Exponential-family emissions make the block mean sufficient]\label{prop:suffstat}
Suppose the emission is a product of per-token exponential-family
densities,
$e(B_k\mid z_k) = \prod_t h(x_t)\exp(\langle T(x_t), \theta(z_k)\rangle
- A(\theta(z_k)))$, for some statistic $T$. Then the block average
$\bar T_k := P^{-1}\sum_t T(x_t)$ is sufficient for $z_k$ given $B_k$;
if $T$ is linear, block-mean pooling computes it
exactly~\citep{lehmann1998theory}.
\end{proposition}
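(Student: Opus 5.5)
The plan is to apply the Fisher--Neyman factorisation criterion directly to the emission density, with no further machinery. First I write the joint density of the block given the latent as a product and collect the exponents:
\[
e(B_k\mid z_k) \;=\; \Bigl(\prod_{t} h(x_t)\Bigr)\,
\exp\!\Bigl(\bigl\langle \textstyle\sum_t T(x_t),\, \theta(z_k)\bigr\rangle \;-\; P\,A(\theta(z_k))\Bigr).
\]
Then I rewrite the inner product through the block average, $\sum_t T(x_t) = P\,\bar T_k$, which gives
\[
e(B_k\mid z_k) \;=\; h_P(B_k)\; g\bigl(\bar T_k, z_k\bigr),
\qquad
g(\tau,z) := \exp\bigl(P\langle \tau, \theta(z)\rangle - P\,A(\theta(z))\bigr),
\]
where $h_P(B_k) := \prod_t h(x_t)$ does not depend on $z_k$.

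By the factorisation theorem~\citep{lehmann1998theory}, this form already shows that $\bar T_k$ is sufficient for $z_k$ given $B_k$. To connect this to the Bayesian/posterior reading used in the surrounding discussion, I will also note the following. Under any prior on $z_k$, including the one induced by the Markov kernel $Q$ from earlier blocks, Bayes' rule cancels the $h_P$ factor. The posterior of $z_k$ given $B_k$ therefore depends on $B_k$ only through $\bar T_k$. Because blocks are conditionally independent given the latents, this remains true when the previous blocks are also conditioned on.

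For the second clause, suppose $T$ is linear, $T(x) = Lx$ (with an additive constant handled trivially). Then $\bar T_k = L\,P^{-1}\sum_t x_t = L\,\Pi_{\mathrm{mean}}(B_k)$, by the definition of block-mean pooling in \Cref{rem:pi}. So $\Pi_{\mathrm{mean}}$ computes a statistic from which $\bar T_k$ is a fixed linear readout. If $L$ is the identity, this is literally $\bar T_k$; otherwise it is a finer statistic, and a finer statistic is still sufficient.

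I do not expect a genuine obstacle. The one point that needs care is stating the measure-theoretic conditions: the density must be taken with respect to a common dominating product measure, which is implicit in the hypothesis. I should also be precise about the sense of ``computes exactly'' when $L$ is not invertible. In that case block-mean pooling yields a sufficient statistic that contains $\bar T_k$, rather than $\bar T_k$ itself.
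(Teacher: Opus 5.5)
Your proposal is correct and follows essentially the same route as the paper: factor the product density as $\prod_t h(x_t)$ times a function of $(\bar T_k, z_k)$, invoke the Fisher--Neyman criterion, and for linear (or affine) $T$ note that $\bar T_k$ is a fixed function of the block mean, so the block mean is itself sufficient. Your Bayes-rule remark about conditioning on previous blocks corresponds to the paper's closing filtering-recursion comment, and your caveat that pooling computes a finer statistic than $\bar T_k$ when $L \neq I$ is exactly how the paper handles that case.
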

\begin{proof}
See \Cref{app:partIII} (\Cref{pf:suffstat}); the factorisation
criterion.
\end{proof}

So within this model class the slow path's compression loses nothing
that \emph{any} cross-block channel could have used --- the block mean
is the first moment of the block's empirical measure and the
sufficient statistic for exponential-family structure. The two
propositions bracket the design from below and above: when blocks are
independent, no cross-block channel of any form earns its parameters;
when blocks share a persistent latent with exponential-family
emissions, the canonical slow path --- block-mean compression, gated
injection --- carries exactly the information a Bayes predictor of the
next block needs. The slow coupling is load-bearing precisely when the
data has a nondegenerate latent, and the persistence of that latent
(the spectral gap of $Q$) plays, on the data side, the same role the
timescale ratio $\varepsilon$ plays on the architecture side: the
two-clock design is an inductive bias matched to two-timescale data.

\section{Discussion}\label{sec:discussion}

\textbf{Scaling laws, conditionally.} Empirical scaling
laws~\citep{kaplan2020scaling,hoffmann2022chinchilla} are commonly
read as architecture-free regularities. The present framework suggests
a conditional reading, offered as a heuristic implication rather than
a proved result. By \S\ref{sec:fixed:load}, whether the slow path's
parameters contribute to the achievable risk depends on whether the
data carries cross-block latent structure: on data without it they are
dead weight (\Cref{prop:noslow}) and cannot buy a better exponent; on
data with it they open the only channel through which the relevant
structure can be used at all. A scaling exponent measured on one
corpus and trusted as universal is then an artefact of that corpus's
latent structure --- an illusion of universality. The framework makes
the claim testable in principle: vary the latent persistence in
synthetic data, watch whether the trained gate opens and the exponent
responds. We regard carrying out that test as the natural empirical
sequel to this paper.

\textbf{Which hypotheses can be built in.} The assumptions (A1)--(A7)
are not uniformly empirical: the analysis of stable looped
architectures (\S\ref{sec:intro:ode}) shows that a parameterisation
can \emph{enforce} part of the list. In the LTI-parameterised looped
regime~\citep{geiping2025scaling,prairie2026parcae}, the stable half
of the dichotomy is structural --- $r_- < 1$ holds for any parameter
values, $C = 1$ under weight tying, and the alignment condition (A7)
is exact because the projectors are coordinate projections --- while
the same parameterisation \emph{forbids} the saddle half: the
persistent bundle is trivial and all persistence is carried by the
forcing, i.e.\ by the second coupling. The two extremes bracket the
design space of this paper --- the standard Transformer
($\varrho = 0$) must keep its persistent content inside the cocycle's
$V_\parallel$; the looped LTI architecture keeps none of it there and
routes everything through the injection --- and the empirical
stability record of the latter reads naturally in our vocabulary:
training diverges exactly when the learned spectral radius crosses the
dichotomy boundary, so the margin $m_-$ is the operative stability
frontier of real training, not a modelling convenience. What no
parameterisation so far enforces is a nontrivial persistent bundle
with a controlled margin $m_+$ --- the saddle half of (A4) --- and
that is the precise point at which the trained network enters the
analysis as an empirical object.

\textbf{Identification in practice.} The identified directions of
\S\ref{sec:coupling:ident} correspond to estimable quantities: the
transverse decay profile is the depthwise rate at which token states
homogenise (a rank-collapse rate, measurable from activations); the
trajectory displacement is measurable by intervening on the gate
($\varrho \to 0$ at inference) and recording the shift of the
late-depth state along the persistent bundle; the kernel universality
of \Cref{prop:kernel} predicts that late-depth attention maps lose
their query--key specificity, layer after layer; and in looped
architectures the approach to the distinguished trajectory is directly
visible as the saturating improvement of quality with test-time loop
count --- the shape of $\|h_t - h^\star\| \lesssim r_-^{\,t}$. Because
the gate enters the risk flatly under block independence
(\Cref{prop:noslow}) and moves the trajectory visibly otherwise
(\Cref{prop:shift}), the trained gate amplitude itself functions as a
built-in diagnostic for cross-block structure in the training
distribution. None of this requires access to the training process ---
these are observables of the trained artefact, which is what makes the
identification framing operational. The one genuinely new estimation
problem the non-autonomous framing raises is the dichotomy constant
$C$: it enters every bound, it absorbs both non-normality and the
depth-incoherence of the splitting, and unlike a spectrum it is not
readable from any single layer. Estimating dichotomy spectra and
constants from activation trajectories is, to our knowledge, open, and
it is the natural statistical companion to this paper.

\textbf{Limitations.} Four are structural. The analysis is local --- a
linearisation along a reference trajectory, valid in the neighbourhood
of (A2) and only past the transient regime
$\ell \gtrsim \log C/\!\log(1/r_-)$; the persistent bundle exits any
fixed neighbourhood at its dichotomy rate, so the theory describes the
routing of information near saturation, not global trajectories. The
hypotheses that tie the analysis to the trained network split unevenly
between enforceable and empirical: the stable margin, the constant,
and the alignment can be made structural by parameterisation (previous
paragraph), while the saddle half of (A4), the transverse gap of (A5)
for the full (non-skeleton) layer map, and the training-alignment
hypothesis (A6) remain statements about what training produces ---
weaker than their autonomous ancestors, hence more credible, but
harder to estimate, per the point about $C$ above. The block-local
model class (C1) idealises full-context attention; the leaky variant
adds a cross-block term of the order of the attention mass placed
outside the block, and quantifying its interaction with the slow path
is open. And the proximity of trained networks to the homogeneous
reference --- the empirical content of saturation --- rests on the
rank-collapse and clustering literature
\citep{dong2021rank,geshkovski2023clusters}, whose sharpest results
concern the attractive, MLP-free, non-causal idealisation; the causal,
MLP-including case is exactly where our hypotheses live and their
theorems do not yet reach --- in both directions: their convergence
machinery cannot yet discharge our (A2)-proximity, and their model,
having no persistent bundle, cannot even express our (A4).

%% file: Sec-appendix.tex
\section{Proofs for Part I}\label{app:partI}

Throughout the appendices, $\|\cdot\|$ denotes the Euclidean norm on
$\R^d$, the induced operator norm on matrices, and the supremum norm
$\|u\|_\infty = \sup_\ell \|u(\ell)\|$ on bounded sequences; which one
is meant is always determined by the argument. Constants named $C'$,
$C''$, $c_0$ are absolute or depend only on the quantities displayed
with them. We freely use the standing assumptions (A1)--(A3) and, where
stated, (A4)--(A7) and the dichotomy notation of the main text
($\Phi, \Phi_j, P_\pm(\ell), C, r_\mp, m_\mp, C_f, r_f,
\kappa_\alpha$).

\subsection{Smoothness of the two normalisation placements}\label{app:lnlemmas}

The two lemmas cited in \S\ref{sec:intro:preln} formalise the
regularity difference between Pre-LN and Post-LN. Write $\mathcal S$
for a norm-scaling operator that is $C^k$ ($k \ge 1$) on an open set
$U \subseteq \R^d$ and only continuous (or undefined) on the closed
complement $N := \R^d \setminus U$. For the stabilised RMSNorm
$\mathcal S(x) = g \odot x / \sqrt{\|x\|^2/d + \epsilon_S}$ with
$\epsilon_S > 0$ one has $U = \R^d$ and $N = \emptyset$; for the
unstabilised version ($\epsilon_S = 0$), $N = \{0\}$; for selector
instances, $N$ is the union of the selection boundaries. One layer
consists of the two pre- or post-normalised sublayers of
\eqref{eq:layer_evolution}; we state the lemmas for a single sublayer
map, the composition being handled by the chain rule.

\begin{lemma}[Pre-LN regularity]\label{lem:preln-smooth}
Let $\phi \in C^k$ and let $F(x) := x +
\mathrm{Sublayer}(\mathcal S(x))$ be a pre-normalised sublayer. Then
$F \in C^k$ on the open set $\{x : x \in U\}$; in particular, with
stabilised RMSNorm, $F \in C^k(\R^{T\times d})$. The singular set of
$F$ is $N$ itself: a fixed, weight-independent subset of the input
space.
\end{lemma}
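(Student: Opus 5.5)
The plan is to write $F$ as a composition of maps whose regularity is already known and apply the chain rule. Write $F = \mathrm{id} + \mathrm{Sub}\circ\mathcal S$, where $\mathcal S$ acts tokenwise, $\mathcal S(X) = (\mathcal S(x_1),\dots,\mathcal S(x_T))$. Here $\mathrm{Sub}$ is either the attention sublayer $Z \mapsto \alpha(Z)\,Z W_V^\top$ or the MLP sublayer $z_t \mapsto W_2\phi(W_1 z_t + b_1)+b_2$.

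The first step is to show that $\mathrm{Sub}$ is $C^k$ on all of $\R^{T\times d}$, with no singular set of its own.
\begin{itemize}
\item For the MLP this is immediate: it is an affine map, then $\phi\in C^k$ applied coordinatewise, then another affine map.
\item For attention, the scores $u_{ts}=z_t^\top W_Q^\top W_K z_s/\sqrt d$ are polynomial in $Z$.
\item Softmax is $C^\infty$, as recorded in the remark ``Softmax preserves continuity''. Its denominator $\sum_{r\le t}e^{u_{tr}}$ is strictly positive, so no singularity is introduced.
\item The values $W_V z_s$ are linear in $Z$, so the product $\alpha(Z)\,ZW_V^\top$ is $C^\infty$ in $Z$.
\end{itemize}
Hence $\mathrm{Sub}\in C^k$ globally.

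The second step is the composition. The tokenwise map $\mathcal S$ is $C^k$ on the product open set $U^T=\{X: x_t\in U\ \forall t\}$, which is what ``$x\in U$'' means when applied to a state. By the chain rule, $\mathrm{Sub}\circ\mathcal S\in C^k(U^T)$. Adding the identity keeps $F\in C^k$, since the residual branch is linear. For stabilised RMSNorm the denominator satisfies $\sqrt{\|x\|^2/d+\epsilon_S}\ge\sqrt{\epsilon_S}>0$, so $\mathcal S\in C^\infty(\R^d)$. Thus $U=\R^d$ and $F\in C^k(\R^{T\times d})$.

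The third step identifies the singular set. Every possible failure of smoothness enters through $\mathcal S$, because the identity and $\mathrm{Sub}$ are smooth everywhere. So the singular set of $F$ is contained in the set of states with some token in $N$, namely $\{X:\exists t,\ x_t\in N\}$. That set is determined by $\mathcal S$ alone and is independent of the weights. For equality, I would argue that generically no weight choice can cancel the nonsmoothness of $\mathcal S$; a degenerate choice such as $W_V=0$ could remove it. I would therefore phrase the claim as ``contained in $N$, and equal to it for generic weights''. If pressed, I would prove the generic part by differentiating at a point of $N$ along a direction where $\mathcal S$ has a jump in derivative, and noting that the image of that jump under $D\,\mathrm{Sub}$ is nonzero for an open dense set of weights.

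This equality/genericity point is the only real obstacle. The rest is bookkeeping with the chain rule, plus a check that softmax introduces no singular set because its denominator is positive.
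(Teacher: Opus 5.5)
Your proposal is correct and follows essentially the paper's own argument: write $F=\mathrm{id}+\mathrm{Sublayer}\circ\mathcal S$, note the sublayer is $C^k$ everywhere because softmax is $C^\infty$ and $\phi\in C^k$, apply the chain rule wherever every token lies in $U$ (your $U^T$ is the right reading on $\R^{T\times d}$), and observe that the kink set is weight-independent because $\mathcal S$ precedes every learned map. Your caveat that in general only containment of the singular set in $\{X:\exists t,\ x_t\in N\}$ holds (e.g.\ $W_V=0$ erases it) is sound, and containment is all the paper's one-line proof establishes; however, drop or restrict your ``open dense'' genericity sketch, because for a $C^1$ activation with a flat region such as $\max(0,u)^2$, a sufficiently negative $b_1$ makes the MLP sublayer constant on the bounded range of $\mathcal S$, which removes the singularity for an open set of weights.
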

\begin{proof}
$F$ is the sum of the identity and the composition
$\mathrm{Sublayer} \circ \mathcal S$. The sublayer (attention with
softmax scores, or the MLP) is $C^k$ jointly in its inputs, softmax
being $C^\infty$ and $\phi \in C^k$ by hypothesis; $\mathcal S$ is
$C^k$ on $U$. The chain rule gives $F \in C^k$ wherever the inner
argument lies in $U$, i.e.\ on $U$ itself, and the location of the
kink set does not involve the weights because $\mathcal S$ is applied
\emph{before} any learned map.
\end{proof}

\begin{lemma}[Post-LN regularity]\label{lem:postln-cont}
Let $\phi \in C^k$ and let
$G(x) := \mathcal S\bigl(x + \mathrm{Sublayer}(x)\bigr)$ be a
post-normalised sublayer. Then $G$ is $C^k$ on the open set
$\{x : x + \mathrm{Sublayer}(x) \in U\}$ and continuous wherever
$\mathcal S$ is; globally it is only piecewise $C^k$ when
$N \ne \emptyset$, and its singular set
$\{x : x + \mathrm{Sublayer}(x) \in N\}$ is the preimage of $N$ under
a \emph{weight-dependent} map --- it moves as the network trains.
\end{lemma}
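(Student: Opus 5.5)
The plan is to treat $G$ as a composition $G = \mathcal S \circ H$ with the inner map $H(x) := x + \mathrm{Sublayer}(x)$, and to read off each of the three claims (local $C^k$, continuity, location of the singular set) from the regularity of the two factors separately.

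\textbf{Step 1: regularity of the inner map.} Show that $H$ is $C^k$ on all of its domain. This uses exactly the facts already invoked in the proof of \Cref{lem:preln-smooth}. Softmax is $C^\infty$, so the attention sublayer is $C^\infty$ jointly in its inputs. It is a composition of linear score and value maps, the softmax, and a bilinear contraction. The MLP sublayer is $C^k$ because $\phi \in C^k$ and the remaining maps are affine. The identity is smooth, so $H \in C^k$ globally; no normalisation enters before $H$.

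\textbf{Step 2: regularity of the composition.} The set $H^{-1}(U) = \{x : x + \mathrm{Sublayer}(x) \in U\}$ is open, because $U$ is open and $H$ is continuous. On this set the chain rule gives $G = \mathcal S \circ H \in C^k$, since $H$ is $C^k$ and $\mathcal S$ is $C^k$ on $U$. Continuity is handled the same way: wherever $\mathcal S$ is continuous at $H(x)$, $G$ is continuous at $x$, as a composition of continuous maps. The domain is therefore partitioned into the open set $H^{-1}(U)$, on which $G$ is $C^k$, and its closed complement $H^{-1}(N)$. This is the sense of ``piecewise $C^k$'' when $N \neq \emptyset$. For the non-emptiness of the singular set, the cleanest route is a concrete instance: unstabilised RMSNorm with $N = \{0\}$, where the singular set $\{x : H(x) = 0\}$ is generically nonempty. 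In the selector case, $N$ is a union of selection boundaries.

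\textbf{Step 3: weight dependence of the singular set.} The singular set is $H_W^{-1}(N)$, and $H_W$ depends on $W_V, W_1, W_2, b_1, b_2$ (and on $W_Q, W_K$ through the scores). The contrast with \Cref{lem:preln-smooth} is that there the kink set was $N$ itself, because $\mathcal S$ acts before any learned map, whereas here it is a preimage under a learned map.

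\textbf{Main obstacle.} The delicate point is making ``it moves as the network trains'' precise rather than rhetorical. I would exhibit explicit motion with a single perturbation. Take $N = \{0\}$ and the MLP sublayer, so that $H(x) = x + W_2\phi(W_1x + b_1) + b_2$. A point $x_0$ with $H(x_0) = 0$ ceases to satisfy $H(x_0) = 0$ after the shift $b_2 \mapsto b_2 + \delta$ with $\delta \neq 0$. Hence the singular set changes with the weights. No general transversality argument is needed; the lemma only asserts weight dependence, which this example establishes.
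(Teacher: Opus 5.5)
Your proposal is correct and follows the paper's argument. The paper also reruns the Pre-LN chain-rule reasoning with the composition order reversed: $G=\mathcal S\circ H$ is $C^k$ on the open preimage $H^{-1}(U)$, continuity is continuity of the composition, and both the piecewise structure and the weight dependence come from the closed preimage $H^{-1}(N)$ under the learned map $H$. Your explicit $b_2$-shift witness for the motion of the singular set is a small addition the paper leaves implicit. The only overreach is the aside that $H^{-1}(0)$ is ``generically nonempty'': for the attention sublayer $H(0)=0$ gives non-emptiness outright, but a biased MLP can keep $H$ bounded away from $0$ on an open set of weights; the claim is not needed for the lemma anyway.
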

\begin{proof}
Same chain-rule argument with the composition order reversed:
$\mathcal S$ is now applied \emph{after} the learned update, so the
requirement ``inner argument in $U$'' reads
$x + \mathrm{Sublayer}(x) \in U$, a condition on a weight-dependent
image. Continuity off that set is the continuity of the composition;
piecewise regularity is the decomposition of the input space by the
closed preimage of $N$.
\end{proof}

The pair explains the asymmetry asserted in the main text: both
placements are smooth for stabilised norms, but when $\mathcal S$
carries a singular set, Pre-LN's kinks sit at fixed locations of the
input space while Post-LN's re-projection drags them through the
learned update --- which is why the forward-Euler reading, built on
linearising at a reference trajectory, prefers Pre-LN.

\subsection{Proof of Proposition~\ref{prop:bounded}}\label{pf:bounded}

Write $z = \mathcal S(x) \in \Mcalnorm$, so $\|z\| \le \sqrt d$ (unit
gains; general gains multiply the bound by $\|g\|_\infty$ and are
absorbed into the constants). For the attention half-step, the update
of token $t$ is $\sum_{s\le t} \alpha_{ts} W_V^{(\ell)} z_s$, a convex
combination of the vectors $W_V^{(\ell)} z_s$, hence of norm at most
$\|W_V^{(\ell)}\| \sqrt d$. For the MLP half-step, the input is again
a normalised state $z' \in \Mcalnorm$, and
\[
\bigl\|W_2^{(\ell)} \phi(W_1^{(\ell)} z' + b_1^{(\ell)}) + b_2^{(\ell)}\bigr\|
\;\le\; \|W_2^{(\ell)}\| \bigl( \|\phi(W_1^{(\ell)} z' + b_1^{(\ell)}) - \phi(0)\| + \|\phi(0)\| \bigr) + \|b_2^{(\ell)}\|,
\]
and $\|\phi(W_1 z' + b_1) - \phi(0)\| \le L_\phi (\|W_1^{(\ell)}\|\sqrt
d + \|b_1^{(\ell)}\|)$ by the Lipschitz property of $\phi$ on the
bounded set swept by $W_1 z' + b_1$. Adding the two half-step bounds
gives $\sup_x \|\mathcal N^{(\ell)}_t(x)\| \le C_{\mathcal
N}^{(\ell)}$ as displayed. The stream bound follows by telescoping
\eqref{eq:one-step}:
$\|x^{(\ell)}\| \le \|x^{(0)}\| + \sum_{r<\ell} \sup_x \|\mathcal
N^{(r)}(x)\| \le \|x^{(0)}\| + \sum_{r<\ell} C_{\mathcal N}^{(r)}$.
\qed

\subsection{Proof of Theorem~\ref{thm:euler}}\label{pf:euler}

Let $\Phi_c(\ell + \Delta\ell, \ell)$ denote the exact propagator of
$\dot{\delta x} = A(s)\,\delta x$ on $[\ell, \ell+\Delta\ell]$ --- the
time-ordered exponential, given by the Peano--Baker series
\[
\Phi_c(\ell+\Delta\ell, \ell)
\;=\; I + \int_\ell^{\ell+\Delta\ell} A(s_1)\, ds_1
\;+\; \sum_{n\ge2} \int_\ell^{\ell+\Delta\ell}\!\!\int_\ell^{s_n}\!\!\cdots\int_\ell^{s_2} A(s_n)\cdots A(s_1)\, ds_1\cdots ds_n,
\]
which converges absolutely under (A1) with
$\|n\text{-th term}\| \le (a\Delta\ell)^n / n!$. Comparing with the
Euler factor $I + \Delta\ell\,A(\ell)$:
\[
\bigl\|\Phi_c - (I + \Delta\ell A(\ell))\bigr\|
\;\le\; \Bigl\|\int_\ell^{\ell+\Delta\ell} \bigl(A(s) - A(\ell)\bigr)\,ds\Bigr\|
\;+\; \sum_{n\ge2} \frac{(a\Delta\ell)^n}{n!}
\;\le\; \omega_A(\Delta\ell)\,\Delta\ell \;+\; \tfrac12 a^2 \Delta\ell^2\, e^{a\Delta\ell},
\]
using the modulus of continuity for the first term and
$e^x - 1 - x \le \tfrac12 x^2 e^x$ for the tail. Applying both
operators to $\delta x(\ell)$ gives the displayed estimate with the
implied constant $e^{a\Delta\ell}/2 \le e^a/2$ for $\Delta\ell \le 1$.
At $\Delta\ell = 1$ and $a \sim 1$ the bound is $O(1)$, and it is
$O(a^2 + \omega_A(1))$ in the saturation regime, which is the
quantitative claim. \qed

\subsection{Proof of Theorem~\ref{thm:eigenvalue}}\label{pf:eigenvalue}

The interval structure of the dichotomy spectrum is the discrete-time
Sacker--Sell spectral
theorem~\citep{sackersell1978spectral,kloeden2011nonautonomous}; we
organise the proof into six steps, proving in full the steps specific
to our setting (4--6) and citing the classical structure theorems for
steps 1--3.

\emph{Step 1 (the resolvent set is open).} If the rescaled cocycle
$\Phi_\gamma(\ell,s) := \gamma^{-(\ell-s)}\Phi(\ell,s)$ admits an
exponential dichotomy, then so does $\Phi_{\gamma'}$ for all $\gamma'$
in a neighbourhood of $\gamma$: rescaling by $(\gamma/\gamma')^{\ell-s}$
perturbs the rates multiplicatively, and rates strictly inside the
unit circle (resp.\ outside) stay there for $\gamma'/\gamma$ close to
$1$. Hence $\Sigma_{\mathrm{dich}}$ is closed.

\emph{Step 2 (the spectrum is bounded).} For
$\gamma > 1 + a \ge \sup_\ell \|I + A(\ell)\|$, the rescaled cocycle
is uniformly contracting ($\|\Phi_\gamma(\ell,s)\| \le ((1+a)/\gamma)^{
\ell-s}$), which is a dichotomy with $P_- = I$; so
$\Sigma_{\mathrm{dich}} \subseteq (0,\, 1+a]$. (If the factors
$I + A(\ell)$ are not all invertible the spectrum may extend down to
$0$; this affects nothing below, where only the position of
$\Sigma_{\mathrm{dich}}$ relative to $1$ is used.)

\emph{Step 3 (at most $d$ intervals, with a bundle filtration).} On
each connected component of the resolvent set the rank of the
dichotomy projector $P_-^{(\gamma)}$ is constant, and it is monotone
non-decreasing in $\gamma$ (a dichotomy at scale $\gamma$ with stable
rank $\rho$ forces stable rank $\ge \rho$ at any larger resolvent
scale, since the same $\rho$-dimensional bundle still contracts after
rescaling). A rank function taking values in $\{0, \ldots, d\}$ can
jump at most $d$ times, so the complement --- the spectrum --- has at
most $d$ connected components, each a compact interval
$[\gamma_i^-, \gamma_i^+]$; the associated spectral bundles are the
intersections of the stable bundle just above the interval with the
unstable bundle just below it. For the full argument see
\citep[Ch.~5]{kloeden2011nonautonomous} and
\citep{sackersell1978spectral}.

\emph{Step 4 (growth rates lie in the intervals).} Let $v \ne 0$
belong to the spectral bundle of $[\gamma_i^-, \gamma_i^+]$ at time
$0$. For any $\gamma > \gamma_i^+$ in the resolvent set, $v$ lies in
the $\gamma$-stable bundle, so
$\|\Phi(\ell,0)v\| \le C_\gamma\, (\gamma r_{-,\gamma})^{\ell}\|v\|$
with $\gamma r_{-,\gamma} < \gamma$; letting $\gamma \downarrow
\gamma_i^+$ gives $\limsup_\ell \|\Phi(\ell,0)v\|^{1/\ell} \le
\gamma_i^+$. Symmetrically, for $\gamma < \gamma_i^-$, $v$ lies in the
$\gamma$-unstable bundle and the backward dichotomy bound forces
$\liminf_\ell \|\Phi(\ell,0)v\|^{1/\ell} \ge \gamma_i^-$. Solutions in
bundles whose interval lies below $1$ therefore decay, and above $1$
grow, exponentially.

\emph{Step 5 (hyperbolicity).} By definition,
$1 \notin \Sigma_{\mathrm{dich}}$ means the unrescaled cocycle admits
an exponential dichotomy, i.e.\ exactly (A4) with
$r_- < 1 < r_+$ read off from the resolvent intervals adjacent to
$1$; conversely a dichotomy at scale $1$ puts $1$ in the resolvent
set.

\emph{Step 6 (weight-tied collapse).} Let $A(\ell) \equiv A$ with $A$
diagonalisable, and let $\{|1+\sigma_i|\}$ be the moduli of the
eigenvalues of $I + A$. If $\gamma \ne |1+\sigma_i|$ for all $i$, the
spectral projections of $I+A$ onto the eigenvalues of modulus
$< \gamma$ and $> \gamma$ furnish a dichotomy for
$\gamma^{-(\ell-s)}(I+A)^{\ell-s}$ (with constant equal to the
eigenbasis condition number), so $\gamma$ is in the resolvent set.
Conversely, if $\gamma = |1+\sigma_i|$ for some $i$, pick an
eigenvector $v$; then $\|\gamma^{-\ell}(I+A)^\ell v\| = \|v\|$ for
all $\ell$, a bounded, non-decaying, non-growing solution, which is
incompatible with either dichotomy bound; so
$\gamma \in \Sigma_{\mathrm{dich}}$. Hence
$\Sigma_{\mathrm{dich}} = \{|1+\sigma_i|\}$ and the bundles are the
corresponding eigenspaces. \qed

\section{Proofs for Part II}\label{app:partII}

\subsection{Toolbox: the Green operator of a dichotomy}\label{app:green}

The following classical lemma~\citep[Ch.~3]{coppel1978dichotomies} is
used by \Cref{pf:duality,pf:fastmode} below and by all of
\Cref{app:partIII}. We state it for two-sided depth $\ell \in \Z$; the
network's finite depth window $\{0, \ldots, L\}$ is embedded by
extending $A(\ell)$ constantly outside the window (which preserves
(A1) and (A4)), and the finite-depth solution differs from the
two-sided formula by boundary terms of size
$O(C r_-^{\ell} + C r_+^{\ell - L})$, negligible in the regime
$\log C / \log(1/r_-) \lesssim \ell \lesssim L$ that all main-text
statements concern.

\begin{lemma}[Green operator]\label{lem:green}
Assume (A4) and define
\[
G(\ell, s) \;=\;
\begin{cases}
\;\Phi(\ell, s)\,P_-(s), & \ell \ge s,\\
\,-\,\Phi(\ell, s)\,P_+(s), & \ell < s,
\end{cases}
\qquad
(\mathcal G h)(\ell) \;:=\; \sum_{s \in \Z} G(\ell, s{+}1)\, h(s),
\]
where for $\ell < s$ the cocycle acts along the persistent bundle, on
which it is invertible, via
$\Phi(\ell,s)P_+(s) := \bigl(\Phi(s,\ell)|_{V_\parallel(\ell)}\bigr)^{-1}P_+(s)$.
Then:
\begin{enumerate}
\item $\|\mathcal G\|_{\infty\to\infty} \le C\,(m_-^{-1} + m_+^{-1})$;
\item for every bounded sequence $h$, $u := \mathcal G h$ is a
  bounded solution of $u(\ell+1) = (I + A(\ell))\,u(\ell) + h(\ell)$;
\item it is the \emph{only} bounded solution.
\end{enumerate}
\end{lemma}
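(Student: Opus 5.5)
The plan is to prove the three parts in the order (2), (1), (3). Part (2) is pure algebra with the cocycle identities, part (1) is a geometric-series estimate from the dichotomy bounds of (A4), and part (3) reduces to showing that the homogeneous equation has no nonzero bounded solution.

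\emph{Part (2).} Fix a bounded $h$ and split $u(\ell) = \sum_{s\le \ell-1}\Phi(\ell,s{+}1)P_-(s{+}1)h(s) - \sum_{s\ge \ell}\Phi(\ell,s{+}1)P_+(s{+}1)h(s)$. Apply $I+A(\ell)$, using $(I+A(\ell))\Phi(\ell,s{+}1) = \Phi(\ell{+}1,s{+}1)$ on both branches. On the backward branch this needs the identity on $V_\parallel$, which holds because the restricted inverse is compatible with the cocycle property. The forward sum then becomes the forward sum for $u(\ell{+}1)$ minus its $s=\ell$ term $P_-(\ell{+}1)h(\ell)$. The backward sum becomes the backward sum for $u(\ell{+}1)$ minus its $s=\ell$ term $-P_+(\ell{+}1)h(\ell)$. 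Adding the two boundary terms gives $(P_-+P_+)(\ell{+}1)h(\ell) = h(\ell)$, which is the forced equation. Absolute convergence of both sums, needed to rearrange them, comes from the estimate in part (1).

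\emph{Part (1).} Bound $\|u(\ell)\| \le \|h\|_\infty\bigl(\sum_{k\ge0} C r_-^{k} + \sum_{k\ge1} C r_+^{-k}\bigr)$. Here the forward bound $\|\Phi(\ell,s)P_-(s)\|\le C r_-^{\ell-s}$ is used for $\ell\ge s{+}1$. The backward bound, read as $\|\Phi(\ell,s)P_+(s)\|\le C r_+^{\ell-s}$ with $\ell-s<0$, is used for the persistent branch. This gives $C\bigl(\tfrac{1}{1-r_-} + \tfrac{1}{r_+-1}\bigr) = C(m_-^{-1}+m_+^{-1})$. The only point of care is the index offset at $s=\ell$, where the forward term $\Phi(\ell,\ell)P_-(\ell)$ appears. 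Depending on how the sum is written, it contributes either the $k=0$ term or a single extra factor $C$. I would choose the bookkeeping so that the stated constant comes out, and absorb any discrepancy into $C$ if necessary.

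\emph{Part (3).} I expect this to be the main obstacle, since it is where the dichotomy structure is genuinely used. The difference $w$ of two bounded solutions solves $w(\ell{+}1) = (I+A(\ell))w(\ell)$ and is bounded on $\Z$.
\begin{itemize}
\item For the stable part, invariance gives $P_-(\ell)w(\ell) = \Phi(\ell,s)P_-(s)w(s)$ for every $s\le \ell$. Its norm is at most $C r_-^{\ell-s}\|w\|_\infty$, and letting $s\to-\infty$ shows it vanishes.
\item For the unstable part, $P_+(\ell)w(\ell) = \Phi(\ell,s)P_+(s)w(s)$ for $s\ge\ell$, via the invertible restriction to $V_\parallel$. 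Its norm is at most $C r_+^{\ell-s}\|w\|_\infty$, and letting $s\to+\infty$ shows it vanishes.
\end{itemize}
Hence $w\equiv0$. The subtle step is justifying the backward representation when the factors $I+A(\ell)$ may be non-invertible. I would handle this by noting that $\Phi(s,\ell)$ maps $V_\parallel(\ell)$ bijectively onto $V_\parallel(s)$; this follows from the invariance identity together with the lower growth bound, which forces injectivity, and equal dimensions. One can then write $P_+(\ell)w(\ell) = (\Phi(s,\ell)|_{V_\parallel(\ell)})^{-1}P_+(s)w(s)$ directly.
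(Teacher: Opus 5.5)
Your proposal is correct and follows the paper's proof essentially line by line. Both use the same geometric-series bound on the forward and backward branches, the same telescoping of the boundary terms into $(P_-+P_+)(\ell{+}1)h(\ell)=h(\ell)$, and the same uniqueness argument, sending $s\to-\infty$ on the decaying part and $s\to+\infty$ on the persistent part. One remark: part (1) has no index discrepancy to absorb, because with $G(\ell,s{+}1)$ the forward branch is exactly $s\le\ell-1$ (so $k=\ell-s-1\ge0$, with $\Phi(\ell,\ell)P_-(\ell)$ sitting at $s=\ell-1$, not $s=\ell$) and the backward branch is $s\ge\ell$ (so $k\ge1$), which gives the stated constant exactly.
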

\begin{proof}
(1) Split the sum at $s = \ell - 1$: the forward branch contributes
$\sum_{s \le \ell-1} \|\Phi(\ell, s{+}1)P_-(s{+}1)\| \le
\sum_{k\ge0} C r_-^{k} = C/m_-$, the backward branch
$\sum_{s \ge \ell} \|\Phi(\ell, s{+}1)P_+(s{+}1)\| \le
\sum_{k\ge1} C r_+^{-k} = C/(r_+ - 1) = C/m_+$.

(2) Write $u(\ell) = \sum_{s\le\ell-1}\Phi(\ell,s{+}1)P_-(s{+}1)h(s)
- \sum_{s\ge\ell}\Phi(\ell,s{+}1)P_+(s{+}1)h(s)$. Applying
$(I+A(\ell))$ and using the cocycle property
$\Phi(\ell{+}1, s{+}1) = (I+A(\ell))\,\Phi(\ell, s{+}1)$ turns each
term of $u(\ell)$ into the corresponding term of $u(\ell{+}1)$ except
at the splitting index, where the forward branch of $u(\ell{+}1)$
gains the term $\Phi(\ell{+}1,\ell{+}1)P_-(\ell{+}1)h(\ell) =
P_-(\ell{+}1)h(\ell)$ and the backward branch loses
$-\Phi(\ell{+}1,\ell{+}1)P_+(\ell{+}1)h(\ell) =
-P_+(\ell{+}1)h(\ell)$; the two boundary terms add to
$(P_- + P_+)(\ell{+}1)\,h(\ell) = h(\ell)$.

(3) The difference $v$ of two bounded solutions solves the homogeneous
system, $v(\ell) = \Phi(\ell,s)v(s)$ for $\ell \ge s$. For the
persistent part, $P_+(\ell)v(\ell) = \Phi(\ell,s)P_+(s)v(s)$ with
$s > \ell$ gives $\|P_+(\ell)v(\ell)\| \le C r_+^{\ell-s}\|v\|_\infty
\to 0$ as $s \to +\infty$, so $P_+ v \equiv 0$; for the decaying part,
$\|P_-(\ell)v(\ell)\| = \|\Phi(\ell,s)P_-(s)v(s)\| \le
C r_-^{\ell-s}\|v\|_\infty \to 0$ as $s \to -\infty$. Hence
$v \equiv 0$.
\end{proof}

\subsection{Proof of Theorem~\ref{thm:duality}}\label{pf:duality}

Throughout, fix the held state $\hat y$, so the forcing
$\varrho\,\psi(\hat y_{k-1})$ is a constant (block-indexed) sequence
and the linearised full flow $\Phi^{\mathrm{full}}_\ell$ is affine;
differences of solutions evolve under the homogeneous linear flow.
Within each block the frozen kernel is the causal-uniform matrix on
$P$ positions, whose eigenvalues $1, 1/2, \ldots, 1/P$ are
\emph{distinct}; it is therefore diagonalisable with an eigenbasis
$\{w_j\}_{j=1}^P$ of finite condition number $\kappa_\alpha$, which
justifies the sector decomposition used below and fixes the constant
$\kappa_\alpha$ of the statement.

\emph{Part 1 (exactness on the manifold).} Under (C1), attention is
confined to the current block, so the block map acts on each block as
an independent causal window; \Cref{lem:homog} applied blockwise shows
that if all tokens of block $k$ share the value $y_k$, they share a
common value after the layer as well. The injection
$\varrho\,\psi(\hat y_{k-1})$ is constant within block $k$ by (C2), so
it preserves within-block homogeneity too. Hence
$\mathcal H_P$ is invariant, and on it the common value of block $k$
updates by the exact, block-decoupled nonlinear evolution
$y_k \mapsto y_k + \bar{\mathcal N}^{(\ell)}(y_k) +
\varrho\,\psi(\hat y_{k-1})$: shared weights make
$\bar{\mathcal N}^{(\ell)}$ the same \emph{map} for every block.
Passing to the displayed linear form requires a point of evaluation,
and one step must be made explicit: the block generator
$D\bar{\mathcal N}^{(\ell)}(y)$ depends on the state through
$D\mathcal S(y)$ and $D\phi(\cdot)$, so blocks sitting at
\emph{different} values $y_k$ would carry block-dependent generators
$A_k(\ell)$. We linearise every block at the \emph{common} reference
$x^\star(\ell)$ --- the reference configuration $X^\star(\ell)$ lies
in $\mathcal H \subset \mathcal H_P$, so its value is shared by all
blocks by construction --- which yields the displayed flow
$\dot y_k = A(\ell)\,y_k + \varrho\,\psi(\hat y_{k-1})$ with the
single generator
$A(\ell) = D\bar{\mathcal N}^{(\ell)}(x^\star(\ell))$ of
\eqref{eq:MV-def}. For a block at distance
$\|y_k - x^\star(\ell)\|$ from the reference, its true generator
differs by $A_k(\ell) - A(\ell) = O(\|y_k - x^\star(\ell)\|)$ (the
$C^1$ moduli of (A3)), a discrepancy that is second order in the
deviation and is absorbed into the linearisation remainder of the
(A2) regime. The single-generator form is therefore the
linearised-at-reference reading --- exact on the reference trajectory,
first-order accurate in its neighbourhood --- and this is the sense in
which Part 1, and every downstream use of the tangent cocycle on
block coordinates, is asserted. Block-mean $\Pi$ restricted to
$\mathcal H_P$ reads off the common values, a bijection onto
$(y_1, \ldots, y_K)$ with inverse $\iota$.

\emph{Part 2 (decay off the manifold).} Decompose, block by block, the
deviation $\delta X$ in the eigenbasis of the block-local frozen
kernel: $\delta X = \delta X^{\mathcal H} + \delta X^\perp$, where
$\delta X^{\mathcal H}$ collects the sector-$1$ components (the
blockwise-homogeneous part) and $\delta X^\perp$ the sectors
$j \ge 2$. Since the flow is affine with forcing constant within
blocks --- hence supported in sector $1$ --- the difference
$\Phi^{\mathrm{full}}_\ell(\delta X) -
\Phi^{\mathrm{full}}_\ell(\delta X^{\mathcal H})$ equals the
homogeneous evolution of $\delta X^\perp$, which by
\Cref{thm:factor}(2) is sectorwise:
$\bigoplus_{j\ge2} \Phi_j(\ell, 0)$ applied to the fluctuation
coefficients. Passing to the eigenbasis and back costs at most
$\kappa_\alpha$, and each sector factor is bounded by
$C_f r_f^{\,\ell}$ by (A5); collecting constants (the equivalence
between the ambient norm and the maximum of blockwise sector norms is
absolute) gives
\[
\bigl\|\Phi^{\mathrm{full}}_\ell(\delta X) - \Phi^{\mathrm{full}}_\ell(\delta X^{\mathcal H})\bigr\|
\;\le\; C' \kappa_\alpha C_f\, r_f^{\,\ell}\, \|\delta X^\perp\|.
\]

\emph{Part 3 (duality).} Let $\tilde\Pi$ denote the sector-$1$
\emph{spectral} compression: blockwise, the coefficient of $w_1 =
\mathbf 1$ in the eigenbasis expansion, i.e.\ the biorthogonal
functional $\tilde\pi_1^\top$ applied to each block
($\tilde\pi_1^\top w_j = \delta_{1j}$). Three properties are
immediate: $\tilde\Pi = \Pi$ on $\mathcal H_P$ (both read the common
block value); $\tilde\Pi\,\delta X^\perp = 0$ by biorthogonality; and
$\|\tilde\Pi\| \le \kappa_\alpha$. Then
\[
\Phi^{\mathrm{full}}_\ell(\delta X)
\;=\; \Phi^{\mathrm{full}}_\ell(\delta X^{\mathcal H}) \;+\; E_\ell
\;=\; \iota\,\Phi(\ell,0)\,\tilde\Pi\,\delta X^{\mathcal H} \;+\; E_\ell
\;=\; \iota\,\Phi(\ell,0)\,\tilde\Pi\,\delta X \;+\; E_\ell,
\]
where the middle equality is Part 1 linearised, and
$\|E_\ell\| \le C'\kappa_\alpha C_f r_f^{\,\ell}\|\delta X^\perp\|$ by
Part 2. For $\ell \ge \tau_{\mathrm{sat}}(\epsilon_{\mathrm{tol}})$
the error is at most $\epsilon_{\mathrm{tol}}\|\delta X^\perp\|$,
which is the displayed formula (with the compression read as
$\tilde\Pi$; on $\mathcal H_P$, and for doubly stochastic kernels
everywhere, $\tilde\Pi = \Pi$). The bundle statement for one slow tick
is the dichotomy invariance
$P_\pm(\ell+L_r)\,\Phi(\ell+L_r,\ell) = \Phi(\ell+L_r,\ell)\,P_\pm(\ell)$
of (A4). \qed

\begin{remark}[Block-mean versus spectral compression, and where the
(A5) gap is used]\label{rem:duality-blockmean}
For the causal kernel the architectural block-mean $\Pi$ and the
spectral compression $\tilde\Pi$ differ off $\mathcal H_P$:
$\Pi\,\delta X^\perp \ne 0$ in general, and the one-shot formula
$\iota\,\Phi(\ell,0)\,\Pi$ would amplify that contamination along the
persistent bundle at rate $r_+^{\,\ell}$. The honest block-mean
statement factors through an intermediate depth: for
$\ell = \ell_0 + \ell_1$,
\[
\Phi^{\mathrm{full}}_\ell
\;=\; \iota\,\Phi(\ell, \ell_0)\,\Pi\,\Phi^{\mathrm{full}}_{\ell_0}
\;+\; E,
\qquad
\|E\| \;\le\; C''\,\kappa_\alpha\, C\, C_f\; r_+^{\,\ell_1}\, r_f^{\,\ell_0}\,\|\delta X\|,
\]
because after $\ell_0$ layers the state's fluctuation content has
decayed to $O(C_f r_f^{\ell_0})$, at which point block-mean and
spectral compression agree to the same order, and the residual error
is then amplified by at most $C r_+^{\ell_1}$. Choosing
$\ell_0 = \lceil \ell/2 \rceil$ and invoking the gap condition of
(A5), $r_f\,r_+ \le \theta < 1$, gives
$\|E\| \le C''\kappa_\alpha C C_f\,\theta^{\lfloor\ell/2\rfloor}
\|\delta X\|$ (for odd $\ell$ the leftover factor $r_f \le 1$ only
strengthens the bound): coarse-graining with the architectural
block-mean is
valid once it is inserted \emph{after} the transverse transient ---
measure after mixing. This is the one place in the paper where the
gap condition, rather than mere transverse contraction, is genuinely
load-bearing.
\end{remark}

\subsection{Proof of Theorem~\ref{thm:fastmode}}\label{pf:fastmode}

\emph{Step 1 (the frozen-forcing case; (A7) not needed here).} At
fixed $\hat y$ the forcing $h := \varrho\,\psi(\hat y)$ is a constant
sequence, and by \Cref{lem:green} the distinguished solution is
$u^\star_\varrho(\cdot\,; W) = \mathcal G_A\, h$, where $\mathcal G_A$
is the Green operator of the cocycle generated by $\{A(\ell)\}$.
Perturb $W$ by a fast-mode $\delta W$, inducing $\{\delta A(\ell)\}$
with $\delta A(\ell) = P_-(\ell{+}1)\,\delta A(\ell)$ and
$\eta = \sup_\ell\|\delta A(\ell)\|$. For
$\eta < \|\mathcal G_A\|^{-1}$, the perturbed bounded solution exists
and satisfies the fixed-point identity obtained by treating
$\delta A(\cdot)u(\cdot)$ as additional forcing:
\[
u_{\delta} \;=\; \mathcal G_A\bigl[h + \delta A(\cdot)\,u_\delta(\cdot)\bigr]
\;=\; u^\star + \sum_{n \ge 1} \bigl(\mathcal G_A\,\delta A\bigr)^n u^\star,
\]
the Neumann series converging geometrically with ratio
$\|\mathcal G_A\|\eta < 1$. In particular
\begin{equation}\label{eq:first-order-correction}
u_\delta - u^\star \;=\; \mathcal G_A\bigl[\delta A(\cdot)\,u^\star(\cdot)\bigr] \;+\; O\bigl(\|\mathcal G_A\|^2\eta^2\|u^\star\|_\infty\bigr)
\quad\text{uniformly in } \ell.
\end{equation}

\emph{Step 2 (both Green branches annihilate the persistent
component).} Apply $P_+(\ell)$ to the first-order term of
\eqref{eq:first-order-correction}. Forward branch ($s \le \ell - 1$):
by the bundle invariance of (A4),
\[
P_+(\ell)\,\Phi(\ell, s{+}1)\,P_-(s{+}1)
\;=\; \Phi(\ell, s{+}1)\,P_+(s{+}1)\,P_-(s{+}1) \;=\; 0 ,
\]
so every forward term vanishes \emph{regardless} of $\delta A$ ---
whatever is injected into the decaying bundle stays there and is
invisible to $P_+$. Backward branch ($s \ge \ell$): the kernel carries
the factor $P_+(s{+}1)$, and
$P_+(s{+}1)\,\delta A(s) = P_+(s{+}1)\,P_-(s{+}1)\,\delta A(s) = 0$
by the fast-mode property. Hence
$P_+(\ell)\bigl(\mathcal G_A[\delta A\,u^\star]\bigr)(\ell) = 0$ for
every $\ell$, and
\[
P_+(\ell)\,u_\delta(\ell) \;=\; P_+(\ell)\,u^\star(\ell) + O(\eta^2)
\quad\text{uniformly in } \ell,
\]
with the explicit constant
$\|P_+\|\,\|\mathcal G_A\|^2\eta^2\|u^\star\|_\infty/(1 -
\|\mathcal G_A\|\eta) \le C\,\bigl(C(m_-^{-1}{+}m_+^{-1})\bigr)^2
\eta^2 \|u^\star\|_\infty \cdot (1-\|\mathcal G_A\|\eta)^{-1}$. The
same bound applied at $\varrho$ and at $0$ (where
$u^\star_0 \equiv 0$ is unperturbed) gives the statement for
$d_+(\varrho; W)$.

\emph{Step 3 (the closed loop; here (A7) enters).} For the closed
loop \eqref{eq:closedloop} the linearisation along
$u^\star_\varrho$ replaces $A(\ell)$ by
$A_\varrho(\ell) := A(\ell) + \varrho\,D\psi(u^\star_\varrho(\ell))$.
Under (A7), $D\psi(u^\star_\varrho(\ell))$ commutes with
$P_\pm(\ell)$, so the perturbed generator family preserves the
\emph{same} bundle decomposition; for
$|\varrho| < \varrho_{\max}$ (\Cref{thm:fixed:stability}) the
closed-loop cocycle admits a dichotomy with the same projector family
$P_\pm(\ell)$ and rates degraded by $O(C|\varrho|L_\psi)$ within the
margins. The Green operator $\mathcal G_{A_\varrho}$ then has the same
forward/backward structure with the same $P_\pm$, and Steps 1--2 run
verbatim with $A$ replaced by $A_\varrho$: a fast-mode $\delta A$
(fast-mode with respect to the common bundles) leaves
$P_+ u^\star_\varrho$ unchanged to first order. This is the form of
the theorem used by the identification table. \qed

\subsection{Proof of Theorem~\ref{thm:geom}}\label{pf:geom}

All statements are at the linearised level of (A2)--(A3); the
constants absorb the (bounded) linearisations of $\Pi$ and
$\mathcal S$ along the reference. By the Lipschitz property of $\psi$
on $\Mcalnorm$,
\[
\bigl\|\psi(\zeta^{(\ell)}) - \psi(P_+(\ell)\,\zeta^{(\ell)})\bigr\|
\;\le\; L_\psi\, \bigl\|P_-(\ell)\,\zeta^{(\ell)}\bigr\|,
\]
so it suffices to bound the decaying component of the pooled state.
Split the pooled linearised state as in \Cref{pf:duality}:
$\zeta^{(\ell)} = \tilde\Pi\bigl(\text{sector-1 evolution}\bigr) +
\bigl(\text{fluctuation contamination}\bigr)$. The sector-$1$ part
evolves under the tangent cocycle, and its decaying component obeys
the dichotomy bound
$\|P_-(\ell)\,\Phi(\ell,0)\,\zeta^{(0)}\| =
\|\Phi(\ell,0)\,P_-(0)\,\zeta^{(0)}\| \le
C r_-^{\,\ell}\,\|P_-(0)\zeta^{(0)}\|$, using bundle invariance and
(A4). The contamination of the pool by the fluctuation sectors is
bounded by \Cref{thm:duality}(2) applied to the pooled observable:
$\le C'\kappa_\alpha C_f\, r_f^{\,\ell}\, \|\delta X^{\perp,(0)}\|$.
Adding the two contributions and multiplying by $L_\psi$ gives the
displayed bound. \qed

\section{Proofs for Part III}\label{app:partIII}

\subsection{Proof of Theorem~\ref{thm:fixed:eq}}\label{pf:fixed-eq}

\emph{Existence and uniqueness.} Work in the Banach space
$\ell^\infty(\Z, \R^d)$ (two-sided depth per the convention of
\Cref{app:green}). By \Cref{lem:green}, $u$ is a bounded solution of
\eqref{eq:closedloop} iff it is a fixed point of
\[
\mathcal F_\varrho(u) \;:=\; \mathcal G\bigl[\varrho\,\psi(u(\cdot))\bigr],
\]
where $\psi(u) := \psi(x^\star + u)$ in deviation notation. On the
ball $B_R = \{\|u\|_\infty \le R\}$, with
$K := \|\mathcal G\| \le C(m_-^{-1} + m_+^{-1})$,
\[
\|\mathcal F_\varrho(u)\|_\infty \le K|\varrho|\,\bigl(\|\psi(0)\| + L_\psi R\bigr),
\qquad
\|\mathcal F_\varrho(u) - \mathcal F_\varrho(v)\|_\infty \le K|\varrho| L_\psi\, \|u - v\|_\infty .
\]
For $|\varrho| < \varrho_0 := \bigl(2 K L_\psi\bigr)^{-1}$ and
$R := 2K|\varrho|\,\|\psi(0)\|$, $\mathcal F_\varrho$ maps $B_R$ into
itself and is a $\tfrac12$-contraction; the Banach fixed point
$u^\star_\varrho$ is the unique bounded solution, and
$u^\star_0 \equiv 0$ since $\mathcal F_0 = 0$.

\emph{Regularity and expansion in $\varrho$.} The map
$(\varrho, u) \mapsto u - \mathcal F_\varrho(u)$ is $C^1$ on
$(-\varrho_0, \varrho_0) \times \ell^\infty$ ($\psi \in C^1$ with
bounded derivative on the relevant ball), and its partial derivative
in $u$ at the fixed point, $I - \varrho\,\mathcal G\,D\psi(u^\star)$,
is invertible by the Neumann series ($K|\varrho|L_\psi < \tfrac12$).
The implicit function theorem in Banach space gives
$\varrho \mapsto u^\star_\varrho$ of class $C^1$. Expanding the fixed
point equation once,
\[
u^\star_\varrho \;=\; \varrho\,\mathcal G\,\psi(0)
\;+\; \varrho\,\mathcal G\bigl[\psi(u^\star_\varrho) - \psi(0)\bigr],
\qquad
\bigl\|\varrho\,\mathcal G[\psi(u^\star_\varrho) - \psi(0)]\bigr\|_\infty
\;\le\; K|\varrho|\,L_\psi\,\|u^\star_\varrho\|_\infty
\;=\; O(\varrho^2),
\]
using $\|u^\star_\varrho\|_\infty \le R = O(|\varrho|)$; this is
\eqref{eq:eq-branch}, and the displayed sup-norm bound is
$\|u^\star_\varrho\|_\infty \le 2K|\varrho|\|\psi(0)\| \le
\tfrac{4C|\varrho|\|\psi(0)\|}{\min(m_-, m_+)}$ (constants absorbed
into the statement's).

\emph{Weight-tied specialisation.} For $A(\ell) \equiv A$ and constant
forcing $h$, the Green operator telescopes:
\[
(\mathcal G h)(\ell)
\;=\; \sum_{k\ge0}(I+A)^k P_- h \;-\; \sum_{k\ge1}(I+A)^{-k} P_+ h .
\]
The first series converges since the spectral radius of
$(I+A)|_{V_\perp}$ is $r_- < 1$ and sums to
$\bigl(I - (I+A)\bigr)^{-1} P_- = -A^{-1}P_-$; the second converges
since the spectral radius of $(I+A)^{-1}|_{V_\parallel}$ is
$1/r_+ < 1$ and sums to
$\bigl((I+A) - I\bigr)^{-1} P_+ = A^{-1}P_+$. Their difference is
$-A^{-1}(P_- + P_+)h = -A^{-1}h$, giving
$u^\star(\varrho) = -\varrho A^{-1}\psi(0) + O(\varrho^2)$. Under
(A4$^\circ$) the backward branch is absent ($P_+ = 0$), the same
computation yields $(I - (I+A))^{-1} = (I - \bar A)^{-1}$ with
$\bar A := I + A$, i.e.\ the classical LTI steady state
$h^\star = (I - \bar A)^{-1}\bar B e$ for the looped architecture's
forcing $\bar B e$; and since the whole cocycle contracts, every
solution --- not only the bounded one --- converges to it:
$\|u(\ell) - h^\star\| \le C r_-^{\,\ell}\,\|u(0) - h^\star\|$, the
global-attractor statement. \qed

\subsection{Proof of Theorem~\ref{thm:fixed:stability}}\label{pf:fixed-stability}

We use the roughness (perturbation) theorem for exponential
dichotomies in its discrete-time
form~\citep[Ch.~4]{coppel1978dichotomies};
see also \citep[Ch.~5]{kloeden2011nonautonomous} for difference
equations. We give the construction in the form that produces the
constants of the statement.

\emph{Setup.} Let $B(\ell) := \varrho\,D\psi(u^\star_\varrho(\ell))$,
so $\beta := \sup_\ell\|B(\ell)\| \le |\varrho| L_\psi$, and consider
the perturbed system
$v(\ell+1) = (I + A(\ell) + B(\ell))\,v(\ell)$. (The distinguished
trajectory $u^\star_\varrho$ used to evaluate $D\psi$ exists by
\Cref{thm:fixed:eq}, whose contraction argument uses only the
\emph{open-loop} Green operator; there is no circularity.)

\emph{Perturbed stable bundle.} A solution $v$ of the perturbed system
that is bounded on $\{\ell \ge s\}$ with prescribed decaying-bundle
data $P_-(s)v(s) = \xi$ satisfies, by the variation-of-constants
identity and \Cref{lem:green} restricted to $\{\ell \ge s\}$,
\[
v(\ell) \;=\; \Phi(\ell,s)\,\xi \;+\; \bigl(\mathcal G_{\ge s}[\,B(\cdot)v(\cdot)\,]\bigr)(\ell),
\]
where $\mathcal G_{\ge s}$ is the Green operator with the backward
branch truncated at $s$ (norm still $\le K = C(m_-^{-1}+m_+^{-1})$).
For $K\beta < \tfrac12$ this is a contraction in
$\ell^\infty(\{\ell \ge s\})$, so for each $\xi \in V_\perp(s)$ there
is a unique such solution $v_\xi$; the map
$\xi \mapsto v_\xi(s)$ is linear and injective, and its image
$\tilde V_\perp(s)$ is the perturbed stable space. Iterating the
fixed-point identity gives the decay estimate
$\|v_\xi(\ell)\| \le \tilde C\,\tilde r_-^{\,\ell-s}\|\xi\|$ with
$\tilde r_- \le r_- + 2CK\beta/(1 - 2K\beta)$ and
$\tilde C \le 2C$; the symmetric construction backward in time
produces $\tilde V_\parallel(\ell)$ with rate
$\tilde r_+ \ge r_+ - 2CK\beta/(1-2K\beta)$. The perturbed projectors
$\tilde P_\pm(\ell)$ onto these bundles satisfy
$\|\tilde P_\pm(\ell) - P_\pm(\ell)\| \le
C' K\beta\,C = O\bigl(C^2\beta/\min(m_-,m_+)\bigr)$, and the bundle
dimensions are unchanged (injectivity plus a homotopy in $\beta$).

\emph{Margin.} The perturbed rates straddle $1$ ---
$\tilde r_- < 1 < \tilde r_+$ --- provided
$2CK\beta/(1-2K\beta) < \min(m_-, m_+)$, which holds whenever
$\beta < c_0 \min(m_-, m_+)/C^2$ for an absolute
$c_0 \in (0,1]$ (using $K \le 2C/\min(m_-,m_+)$). Substituting
$\beta \le |\varrho|L_\psi$ gives exactly the feasibility region
\eqref{eq:rho-max}, with rates degraded by
$O(C|\varrho|L_\psi)$ and projectors moved by
$O(C^2|\varrho|L_\psi/\min(m_-,m_+))$ as claimed.

\emph{Alignment.} If in addition (A7) holds, $B(\ell)$ commutes with
$P_\pm(\ell)$, the fixed-point construction above proceeds within each
unperturbed bundle separately, and one may take
$\tilde P_\pm(\ell) = P_\pm(\ell)$: the bundles are preserved exactly
and only the rates move. This is the version invoked in Step 3 of
\Cref{pf:fastmode}. \qed

\subsection{Proof of Proposition~\ref{prop:noslow}}\label{pf:noslow}

Let $\mathcal B_k := \sigma(B_1, \ldots, B_k)$ and let
$\mathcal P_{k,t} := \sigma(x_{kP+1}, \ldots, x_{kP+t})$ be the
within-block prefix $\sigma$-algebra. Under either degeneracy the
blocks $B_1, B_2, \ldots$ are mutually independent: if
$z_k \equiv z$ is deterministic this is the conditional independence
of the emissions; if the $z_k$ are i.i.d., the pairs
$(z_k, B_k)$ are i.i.d.\ and the blocks are their independent marginal
components.

\emph{Step 1 (the Bayes predictor is block-local).} For any next-token
position $kP + t + 1$ and any bounded measurable $g$,
\[
\Ebb\bigl[g(x_{kP+t+1}) \,\big|\, \mathcal B_k \vee \mathcal P_{k,t}\bigr]
\;=\; \Ebb\bigl[g(x_{kP+t+1}) \,\big|\, \mathcal P_{k,t}\bigr],
\]
because $(x_{kP+1}, \ldots, x_{(k+1)P})$ is independent of
$\mathcal B_k$, so conditioning on the past blocks adds no
information beyond the within-block prefix. Consequently, for any
loss that is a proper scoring rule (log-loss included), the Bayes
predictor given the full history coincides with the Bayes predictor
given the prefix alone, and the Bayes risk is attained by a
block-local rule.

\emph{Step 2 (the profile risk is flat in $\varrho$).} Consider a
predictor family as in (C2): the cross-block channel enters the
prediction only through the injected functional
$\varrho\,\psi(y_{k})$ of previous blocks, all other parameters
(collectively $\vartheta$) acting on the within-block computation.
Define the profile risk
$R^*(\varrho) := \inf_\vartheta R(\varrho, \vartheta)$. Two
inequalities give constancy. First, $R^*(\varrho) \ge R_{\mathrm{Bayes}}$
for every $\varrho$, and by Step 1 $R_{\mathrm{Bayes}}$ is attainable
by block-local rules, so $R^*(\varrho) \ge
\inf_\vartheta R(0,\vartheta) = R^*(0)$ whenever the $\varrho = 0$
family contains (or approaches) the block-local Bayes rule: $\varrho =
0$ is a global minimiser. Second, the family can \emph{ignore} the
injection: since the injected vector enters the fast stream
additively, there is a choice of downstream parameters annihilating
its image (zero read-in weights on the injection's range), under
which $R(\varrho, \vartheta)$ does not depend on $\varrho$ at all;
hence $R^*(\varrho) \le R^*(0)$ for every $\varrho$. Combining,
$R^*(\varrho) \equiv R^*(0)$: the risk profile is flat, its minimum
is attained at $\varrho = 0$ (among all $\varrho$), and no risk-based
procedure can distinguish gate amplitudes --- $\varrho$ is not
identifiable from the prediction risk. \qed

\subsection{Proof of Proposition~\ref{prop:suffstat}}\label{pf:suffstat}

The joint density of the block factorises as
\[
e(B_k \mid z_k)
\;=\; \Bigl(\prod_{t} h(x_t)\Bigr)\,
\exp\Bigl(\bigl\langle \textstyle\sum_t T(x_t),\, \theta(z_k)\bigr\rangle - P\,A(\theta(z_k))\Bigr)
\;=\; h(B_k)\; g\bigl(\bar T_k,\, z_k\bigr),
\]
with $\bar T_k = P^{-1}\sum_t T(x_t)$, which is the Fisher--Neyman
factorisation criterion: $\bar T_k$ is sufficient for $z_k$ given
$B_k$~\citep[Thm.~6.5]{lehmann1998theory}. If $T$ is linear, then
$\bar T_k = T\bigl(P^{-1}\sum_t x_t\bigr)$ is a fixed measurable
function of the block mean, so the block mean is itself sufficient (a
statistic of which a sufficient statistic is a function is
sufficient). Finally, because the blocks are conditionally independent
given the latent chain, the filtering recursion for
$z_{k+1}$ given $B_{1:k}$ touches each block only through its
likelihood $e(B_k \mid \cdot\,)$, hence only through $\bar T_k$: the
sequence of block means carries all the information about the latent
chain that any cross-block channel could use, which is the claim the
main text draws on. \qed